\renewcommand{\leq}{\leqslant}
\renewcommand{\le}{\leqslant}
\renewcommand{\geq}{\geqslant}
\renewcommand{\ge}{\geqslant}
\title{\huge Scalable Kernel Methods via Doubly Stochastic Gradients}
\author{
    Bo Dai$^1$, Bo Xie$^1$, Niao He$^2$, Yingyu Liang$^1$, Anant Raj$^1$,
    Maria-Florina Balcan$^3$, Le Song$^1$\\
    $^1$ College of Computing, Georgia Institute of Technology\\
    \{bodai, bxie33, yliang39, araj34\}@gatech.edu, lsong@cc.gatech.edu\\
    $^2$ School of Industrial \& Systems Engineering, Georgia Institute of Technology\\
    nhe6@gatech.edu\\
    $^3$ School of Computer Science, Carnegie Mellon University\\
    ninamf@cs.cmu.edu
}
\begin{document}
\maketitle

\begin{abstract}
  The general perception is that kernel methods are not scalable, and neural nets are the methods of choice for large-scale nonlinear learning problems. Or have we simply not tried hard enough for kernel methods? Here we propose an approach that scales up kernel methods using a novel concept called ``\emph{doubly stochastic functional gradients}''. Our approach relies on the fact that many kernel methods can be expressed as convex optimization problems, and we solve the problems by making \emph{two unbiased} stochastic approximations to the functional gradient, one using random training points and another using random features associated with the kernel, and then descending using this noisy functional gradient. Our algorithm is simple, does \emph{not} need to commit to a preset number of random features, and allows the flexibility of the function class to grow as we see more incoming data in the streaming setting. We show that a function learned by this procedure after $t$ iterations converges to the optimal function in the reproducing kernel Hilbert space in rate $O(1/t)$, and achieves a generalization performance of $O(1/\sqrt{t})$. Our approach can readily scale kernel methods up to the regimes which are dominated by neural nets. We show that our method can achieve competitive performance to neural nets in datasets such as 2.3 million energy materials from MolecularSpace, 8 million handwritten digits from MNIST, and 1 million photos from ImageNet using convolution features.
\end{abstract}

\section{Introduction}\label{sec:intro}

\setlength{\abovedisplayskip}{3pt}
\setlength{\abovedisplayshortskip}{1pt}
\setlength{\belowdisplayskip}{3pt}
\setlength{\belowdisplayshortskip}{1pt}
\setlength{\jot}{2pt}

\setlength{\floatsep}{2ex}
\setlength{\textfloatsep}{2ex}

The general perception is that kernel methods are not scalable. When it comes to large-scale nonlinear learning problems, the methods of choice so far are neural nets where theoretical understanding remains incomplete. Are kernel methods really not scalable? Or is it simply because we have not tried hard enough, while neural nets have exploited sophisticated design of feature architectures, virtual example generation for dealing with invariance, stochastic gradient descent for efficient training, and GPUs for further speedup?

A bottleneck in scaling up kernel methods is the storage and computation of the kernel matrix, $K$, which is usually dense. Storing the matrix  requires $O(n^2)$ space, and computing it takes $O(n^2 d)$ operations, where $n$ is the number of data points and $d$ is the dimension. There have been many great attempts to scale up kernel methods, including efforts from numerical linear algebra, functional analysis, and numerical optimization perspectives.

A common numerical linear algebra approach is to approximate the kernel matrix using low-rank factors, $K\approx A^\top A$, with $A \in \RR^{r \times n}$ and rank $r \leqslant n$. This low-rank approximation usually requires $O(nr^2+nrd)$ operations, and then subsequent kernel algorithms can directly operate on $A$. Many works, such as Greedy basis selection techniques~\cite{SmoSch00}, Nystr{\"o}m approximation~\cite{WilSee00b} and incomplete Cholesky decomposition~\cite{FinSch01}, all followed this strategy. In practice, one observes that kernel methods with approximated kernel matrices often result in a few percentage of losses in performance.
In fact, without further assumption on the regularity of the kernel matrix, the generalization ability after low-rank approximation is typically of the order $O(1/\sqrt{r} + 1/\sqrt{n})$~\cite{DriMah05,CorMohTal10}, which implies that the rank needs to be nearly linear in the number of data points! Thus, in order for kernel methods to achieve the best generalization ability, the low-rank approximation based approaches quickly become impractical for big datasets due to their $O(n^3+n^2d)$ preprocessing time and $O(n^2)$ memory requirement.

Random feature approximation is another popular approach for scaling up kernel methods~\cite{RahRec08,LeSarSmo13}. Instead of approximating the kernel matrix, the method directly approximates the kernel function using explicit feature maps. The advantage of this approach is that the random feature matrix for $n$ data points can be computed in time $O(nrd)$ using $O(nr)$ memory, where $r$ is the number of random features. Subsequent algorithms then only operate on an $O(nr)$ matrix. Similar to low-rank kernel matrix approximation approach, the generalization ability of random feature approach is of the order $O(1/\sqrt{r} + 1/\sqrt{n})$~\cite{RahRec09,LopSraSmoGhaetal14}, which implies that the number of random features also needs to be $O(n)$. Another common drawback of these two approaches is that it is not easy to adapt the solution from a small $r$ to a large $r'$. Often one is interested in increasing the kernel matrix approximation rank or the number of random features to obtain a better generalization 
ability. Then special procedures need to be designed to reuse the solution obtained from a small $r$, which is not straightforward.

Another approach that addresses the scalability issue rises from optimization perspective. One general strategy is to solve the dual forms of kernel methods using coordinate or block-coordinate descent~(\eg,~\cite{Platt98, Joachims99,ShaTon13b}). By doing so, each iteration of the algorithm only incurs $O(nrd)$ computation and $O(nr)$ memory, where $r$ is the size of the parameter block. A second strategy is to perform functional gradient descent by looking at a batch of data points at a time~(\eg,~\cite{KivSmoWil04b, RatBag07}). Thus, the computation and memory requirements are also $O(nrd)$ and $O(nr)$ respectively in each iteration, where $r$ is the batch size. These approaches can easily change to a different $r$ without restarting the optimization and has no loss  in generalization ability since they do not approximate the kernel matrix or function. However, a serious drawback of these approaches is that, without further approximation, all support vectors need to be kept for testing, which can be as
big as the entire training set! (\eg, kernel ridge regression and non-separable nonlinear classification problems.)

In summary, there exists a delicate trade-off between computation, memory and statistics if one wants to scale up kernel methods. Inspired by various previous efforts, we propose a simple yet general strategy to scale up many kernel methods using a novel concept called ``\emph{doubly stochastic functional gradients}''. Our method relies on the fact that most kernel methods can be expressed as convex optimization problems over functions in reproducing kernel Hilbert spaces (RKHS) and solved via functional gradient descent. Our algorithm proceeds by making \emph{two unbiased} stochastic approximations to the functional gradient, one using random training points and the other one using random features associated with the kernel, and then descending using this noisy functional gradient.
The key intuitions behind our algorithm originate from 
\begin{itemize}
  \item[(i)] the property of stochastic gradient descent algorithm that as long as the stochastic gradient is unbiased, the convergence of the algorithm is guaranteed~\cite{NemJudLanSha09}; and 
  \item[(ii)] the property of pseudo-random number generators that the random samples can in fact  be completely determined by an initial value (a seed). 
\end{itemize}
We exploit these properties and enable kernel methods to achieve  better balances between computation, memory and statistics. Our method interestingly combines  kernel methods, functional analysis, stochastic optimization and algorithmic trick, and it possesses a number of desiderata:\\[-3mm]

\noindent  {\bf Generality and simplicity.} Our approach applies to many kernel methods, such as kernel ridge regression, support vector machines, logistic regression, two-sample test, and many different types of kernels, such as shift-invariant kernels, polynomial kernels, general inner product kernels, and so on. The algorithm can be summarized in just a few lines of code (Algorithm 1 and 2). For a different problem and kernel, we just need to adapt the loss function and the random feature generator.\\[-3mm]

\noindent {\bf Flexibility.} Different from previous uses of random features which typically prefix the number of features and then optimize over the feature weightings, our approach allows the number of random features, and hence the flexibility of the function class, to grow with the number of data points. This allows our method to be applicable to data streaming setting, which is not possible for previous random feature approach, and achieve the full potential of nonparametric methods.\\[-3mm] 

\noindent {\bf Efficient computation.} The key computation of our method is evaluating the doubly stochastic functional gradient, which involves the generation of the random features with specific random seeds and the evaluation of these random features on the small batch of data points. For iteration $t$, the computational complexity is $O(td)$.\\[-3mm]

\noindent {\bf Small memory.} The doubly stochasticity also allows us to avoid keeping the support vectors which becomes prohibitive in large-scale streaming setting. Instead, we just need to keep a small program for regenerating the random features, and sample previously used random feature according to pre-specified random seeds. For iteration $t$, the memory needed is $O(t)$ independent of the dimension of the data.\\[-3mm] 

\noindent {\bf Theoretical guarantees.} We provide a novel and nontrivial analysis involving Hilbert space martingale and a newly proved recurrence relation, and show that the estimator produced by our algorithm, which might be outside of the RKHS, converges to the optimal RKHS function. More specifically, both in expectation and with high probability, our algorithm can estimate the optimal function in the RKHS in the rate of $O(1/t)$, which are indeed optimal~\cite{NemJudLanSha09}, and achieve a generalization bound of $O(1/\sqrt{t})$. The variance of the random features, introduced during our second approximation to the functional gradient, only contributes additively to the constant in the final convergence rate. These results are the first of the kind in kernel method literature, which can be of independent interest.\\[-3mm] 

\noindent {\bf Strong empirical performance.} Our algorithm can readily scale kernel methods up to the regimes which are previously dominated by neural nets. We show that our method compares favorably to other scalable kernel methods in medium scale datasets, and to neural nets in big datasets such as 8 million handwritten digits from MNIST, 2.3 million materials from MolecularSpace, and 1 million photos from ImageNet using convolution features. Our results suggest that kernel methods, theoretically well-grounded methods, can potentially replace neural nets in many large scale real-world problems where nonparametric estimation are needed.\\[-3mm]

In the remainder, we will first introduce preliminaries on kernel methods and functional gradients. We will then describe our algorithm and provide both theoretical and empirical supports.

\section{Duality between Kernels and Random Processes} \label{sec:kernel}

Kernel methods owe their name to the use of kernel functions, $k(x,x'):\Xcal \times \Xcal \mapsto \RR$, which are symmetric positive definite (PD), meaning that for all $n > 1$, and $x_1,\ldots,x_n \in \Xcal$, and $c_1,\ldots,c_n \in \RR$, we have $\sum_{i,j=1}^n c_i c_j k(x_i, x_j) \geqslant 0$. There is an intriguing duality between kernels and stochastic processes which  will play a crucial role in our later algorithm design. More specifically, 
\begin{theorem}[\eg,\cite{Devinatz53}; \cite{HeiBou04}]
  If $k(x,x')$ is a PD kernel, then there exists a set $\Omega$, a measure $\PP$ on $\Omega$, and random feature $\phi_{\omega}(x):\Xcal\mapsto\RR$ from $L_2(\Omega,\PP)$, such that
  $
    k(x, x') = \int_{\Omega}\, \phi_{\omega}(x)\, \phi_{\omega}(x')\, d \PP(\omega).
  $
\end{theorem}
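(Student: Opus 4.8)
The plan is to realize $k$ as the covariance function of a centered second-order random process on some probability space, and then read off the claimed identity by taking $\phi_\omega(\cdot)$ to be a sample path of that process. Concretely, I would build a probability space $(\Omega,\PP)$ together with a family of random variables $\{\phi_\cdot(x)\}_{x\in\Xcal}$ for which $\int_\Omega \phi_\omega(x)\,\phi_\omega(x')\,d\PP(\omega)=k(x,x')$, which is exactly the assertion; the positive-definiteness of $k$ is precisely what makes such a process exist.

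First I would produce a consistent family of finite-dimensional laws. For every finite subset $F=\{x_1,\dots,x_n\}\subset\Xcal$ the Gram matrix $K_F=[k(x_i,x_j)]_{i,j=1}^n$ is symmetric and, by hypothesis, positive semidefinite, hence a legitimate covariance; let $\mu_F$ be the centered Gaussian measure on $\RR^F$ with covariance $K_F$. Since any marginal of a multivariate Gaussian is again Gaussian with the corresponding principal submatrix as covariance, the family $\{\mu_F\}_F$ is projective, i.e. it satisfies Kolmogorov's consistency conditions under coordinate projections. I would then invoke Kolmogorov's extension theorem to obtain $(\Omega,\PP)$ carrying a real-valued process $\{\phi_\cdot(x)\}_{x\in\Xcal}$ whose finite-dimensional distributions are exactly the $\mu_F$. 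By construction each $\phi_\cdot(x)$ is centered with $\int_\Omega \phi_\omega(x)^2\,d\PP(\omega)=k(x,x)<\infty$, so $\phi_\cdot(x)\in L_2(\Omega,\PP)$, and the cross second moment of any pair is $\int_\Omega \phi_\omega(x)\,\phi_\omega(x')\,d\PP(\omega)=k(x,x')$, as desired.

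The step I expect to require the most care is the consistency check together with the measurability bookkeeping: one must verify that the $\mu_F$ really are mutually consistent (immediate for Gaussians once $K_F$ is positive semidefinite, which is where the PD hypothesis enters) and that the resulting map $\omega\mapsto\phi_\omega(x)\phi_\omega(x')$ is measurable with finite integral so that the right-hand side is well defined. Both are routine once the process is in hand, and the genuine analytic engine, Kolmogorov's extension theorem, is simply cited rather than reproved.

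As a cross-check, and to connect with concrete random features, I would also note the purely Hilbert-space derivation: by Moore--Aronszajn, $k$ is the reproducing kernel of a unique RKHS $\mathcal{H}_k$ with $k(x,x')=\langle k(\cdot,x),k(\cdot,x')\rangle_{\mathcal{H}_k}$; expanding in an orthonormal basis $\{e_i\}$ and using the reproducing property $\langle k(\cdot,x),e_i\rangle=e_i(x)$ gives $k(x,x')=\sum_i e_i(x)\,e_i(x')$, which is the stated formula with $\Omega$ the index set, $\PP$ the counting measure, and $\phi_i(x)=e_i(x)$, while $\sum_i e_i(x)^2=\|k(\cdot,x)\|^2_{\mathcal{H}_k}=k(x,x)<\infty$ supplies the $L_2$ membership. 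For the shift-invariant and dot-product kernels used later in the paper, this abstract $(\Omega,\PP)$ specializes, via Bochner's theorem and the corresponding spectral expansions, to the explicit sampling distributions that the algorithm actually draws from.
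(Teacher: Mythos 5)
Your proposal is correct and follows essentially the route the paper itself indicates: the theorem is cited there without proof, but the remark immediately after it (that the random process can be taken to be a Gaussian process on $\Xcal$ with sample functions $\phi_{\omega}(x)$ and covariance $k$) is precisely your Kolmogorov-extension construction, in which the positive-semidefinite Gram matrices furnish consistent centered Gaussian marginals, the second-moment identity gives $\int_{\Omega}\phi_{\omega}(x)\phi_{\omega}(x')\,d\PP(\omega)=k(x,x')$, and Cauchy--Schwarz with $\int_{\Omega}\phi_{\omega}(x)^2\,d\PP(\omega)=k(x,x)<\infty$ settles integrability and $L_2$ membership. Your Moore--Aronszajn orthonormal-basis cross-check is likewise sound (Parseval gives $k(x,x')=\sum_i e_i(x)e_i(x')$ with absolute convergence by Cauchy--Schwarz), exhibiting the counting-measure instance of the same representation.
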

Essentially, the above integral representation relates the kernel function to a random process $\omega$ with measure $\PP(\omega)$. Note that the integral representation may not be unique. For instance, the random process can be a Gaussian process on $\Xcal$ with the sample function $\phi_{\omega}(x)$, and $k(x,x')$ is simply the covariance function between two point $x$ and $x'$. If the kernel is also continuous and shift invariant,~\ie,~$k(x,x')= k(x-x')$ for $x \in \RR^d$, then the integral representation specializes into a form characterized by inverse Fourier transformation (\eg,~\cite[Theorem 6.6]{Wendland05}),\\[-5mm]
\begin{theorem}[Bochner]
  A continuous, real-valued, symmetric and shift-invariant function $k(x-x')$ on $\RR^d$ is a PD kernel if and only if there is a finite non-negative measure $\PP(\omega)$ on $\RR^d$, such that
  $
    k(x-x') = \int_{\RR^d} \, e^{i \omega^\top (x-x')}\, d\PP(\omega) = \int_{\RR^d \times [0,2\pi]} 2 \, \cos(\omega^\top x + b)\, \cos(\omega^\top x' + b)\, d \rbr{\PP(\omega) \times \PP(b)},
  $
  where $\PP(b)$ is a uniform distribution on $[0,2\pi]$, and $\phi_{\omega}(x) = \sqrt{2}\cos(\omega^\top x + b)$.\\[-5mm]
\end{theorem}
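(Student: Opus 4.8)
The plan is to prove the two directions of the equivalence separately, and along the way to check that the two displayed integral representations coincide. I will write $\psi$ for the one-variable function with $k(x-x')=\psi(x-x')$, so that the claim is the classical Bochner characterization of continuous positive-definite functions on $\RR^d$. First I would dispatch the easy direction: assuming a finite non-negative $\PP$ with $\psi(\delta)=\int_{\RR^d} e^{i\omega^\top\delta}\,d\PP(\omega)$, then for any $x_1,\dots,x_n$ and real $c_1,\dots,c_n$,
\[
  \sum_{i,j} c_i c_j\,\psi(x_i-x_j)=\int_{\RR^d}\Bigl|\sum_i c_i\,e^{i\omega^\top x_i}\Bigr|^2 d\PP(\omega)\geq 0,
\]
so $k$ is PD, and continuity is inherited by dominated convergence since $\PP$ is finite. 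To see the second representation agrees with the first, I would use the product-to-sum identity $2\cos(\omega^\top x+b)\cos(\omega^\top x'+b)=\cos(\omega^\top(x-x'))+\cos(\omega^\top(x+x')+2b)$ and average over $b\sim\mathrm{Unif}[0,2\pi]$: the second term integrates to zero, leaving $\cos(\omega^\top(x-x'))$, which equals $e^{i\omega^\top(x-x')}$ integrated against $\PP$ once one notes that $\PP$ may be taken symmetric because $\psi$ is real and even.

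The substance is the converse. Starting from a continuous PD $\psi$, I would first record the elementary consequences $\psi(0)\geq 0$, $\psi(-\delta)=\psi(\delta)$ and $|\psi(\delta)|\leq\psi(0)$ (from the $2\times 2$ PD minors), and then upgrade the discrete PD inequality to its integral form: for every Schwartz function $f$, $\iint \psi(x-y)f(x)\overline{f(y)}\,dx\,dy\geq 0$, obtained as a limit of Riemann sums, each a non-negative PD quadratic form. Next I would introduce the Gaussian-windowed candidate density
\[
  q_\lambda(\omega)=\frac{1}{(2\pi)^d}\int_{\RR^d} e^{-i\omega^\top\delta}\,\psi(\delta)\,e^{-\lambda\|\delta\|^2/2}\,d\delta,\qquad \lambda>0,
\]
and show $q_\lambda(\omega)\geq 0$ by recognizing it, after completing the square in $\delta$, as an instance of the integral-form inequality applied to a Gaussian modulated by $e^{i\omega^\top\cdot}$. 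A Parseval/inversion computation with the Gaussian window evaluated at $\delta=0$ gives $\int_{\RR^d} q_\lambda(\omega)\,d\omega=\psi(0)$, so $\{q_\lambda\,d\omega\}$ is a family of non-negative measures of constant total mass $\psi(0)$.

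Finally I would let $\lambda\to 0$. Tightness of the family follows from continuity of $\psi$ at the origin, which is exactly what prevents mass from escaping to infinity, so Helly's selection theorem (equivalently Prokhorov) yields a weakly convergent subsequence with a finite non-negative limit $\PP$ of total mass $\psi(0)$. Passing to the limit in $\int e^{i\omega^\top\delta}\,q_\lambda(\omega)\,d\omega\to\psi(\delta)$, which is valid because the Gaussian window tends to $1$ and $e^{i\omega^\top\delta}$ is bounded and continuous, produces the representation $\psi(\delta)=\int_{\RR^d}e^{i\omega^\top\delta}\,d\PP(\omega)$. The second displayed form, with $\phi_\omega(x)=\sqrt{2}\cos(\omega^\top x+b)$, then follows from the same trigonometric identity used in the easy direction.

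I expect the main obstacle to lie entirely in the converse, and within it at two points: (i) justifying the non-negativity of $q_\lambda$, where continuous positive-definiteness must be converted into the integral inequality and paired with the correct Gaussian test function; and (ii) the tightness argument controlling the mass of $q_\lambda\,d\omega$ uniformly as $\lambda\to 0$, which is where continuity of $\psi$ at $0$ is essential, since without it the limiting measure could fail to be finite or could lose mass in the weak limit. The routine Fourier and Parseval bookkeeping I would not belabor.
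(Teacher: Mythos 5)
Your proof is correct, but note that the paper does not actually prove this statement: it quotes Bochner's theorem as a classical result, citing \cite{Wendland05} (Theorem 6.6 there), so there is no in-paper argument to compare against. What you have written is the standard textbook proof --- the easy direction via $\sum_{i,j}c_ic_j\psi(x_i-x_j)=\int|\sum_i c_ie^{i\omega^\top x_i}|^2\,d\PP(\omega)\geq 0$, the product-to-sum identity with averaging over $b\sim\mathrm{Unif}[0,2\pi]$ to reconcile the two displayed forms, and the converse via the Gaussian-regularized densities $q_\lambda$, positivity through the integral form of positive definiteness, constant total mass $\psi(0)$, and tightness plus Helly/L\'evy continuity as $\lambda\to 0$ --- and all the key mechanisms are in the right place, including your correct identification that continuity of $\psi$ at the origin is precisely what drives tightness. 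Two small points deserve an explicit line rather than being implicit. First, the paper defines positive definiteness with \emph{real} coefficients $c_i$, whereas your integral inequality $\iint\psi(x-y)f(x)\overline{f(y)}\,dx\,dy\geq 0$ uses complex test functions (you apply it with $f(x)=e^{-i\omega^\top x}g(x)$); for a real, even $\psi$ the upgrade from real to complex coefficients is a one-line computation (the imaginary part of $\sum_{j,k}c_j\bar c_k\,\psi(x_j-x_k)$ cancels by evenness, and the real part splits into two real PD forms), but it should be stated since your non-negativity argument for $q_\lambda$ hinges on it. Second, in matching the exponential and cosine representations, you correctly note that $\PP$ may be taken symmetric because $\psi$ is real and even; equivalently, realness of $\psi$ alone forces $\int\sin(\omega^\top\delta)\,d\PP(\omega)=0$, which already reduces $\int e^{i\omega^\top\delta}\,d\PP$ to $\int\cos(\omega^\top\delta)\,d\PP$ without symmetrizing. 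With these two remarks made explicit, your sketch is a complete and self-contained proof of the quoted theorem, which is more than the paper itself provides.
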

For Gaussian RBF kernel, $k(x-x')=\exp(-\|x - x'\|^2/2\sigma^2)$, this yields a Gaussian distribution $\PP(\omega)$ with density proportional to $\exp(-\sigma^2\|\omega\|^2/2)$; for the Laplace kernel, this yields a Cauchy distribution; and for the Martern kernel, this yields the convolutions of the unit ball~\cite{SchSmo02}. 

Similar representation where the explicit form of $\phi_{\omega}(x)$ and $\PP(\omega)$ are known can also be derived for rotation invariant kernel, $k(x,x') = k(\inner{x}{x'})$, using Fourier transformation on sphere~\cite{SchSmo02}. For polynomial kernels, $k(x,x')=(\inner{x}{x'}+c)^p$, a random tensor sketching approach can also be used~\cite{PhaPag13}. Explicit random features have been designed for many other kernels, such as dot product kernel~\cite{KarKar12}, additive/multiplicative class of homogeneous kernels~\cite{VedZis12}, \eg, Hellinger's, $\chi^2$, Jensen-Shannon's and Intersection kernel, as well as kernels on Abelian semigroups~\cite{YanSinFanAvretal14}. We summarized these kernels with their explicit features and associated densities in Table~\ref{table:explicit_features}. 

{
\begin{sidewaystable}
\caption{Summary of kernels in~\cite{RahRec08, RasWil06, KarKar12, PhaPag13, VedZis12, YanSinFanAvretal14, ChoSaul09} and their explicit features}\label{table:explicit_features}
    \begin{tabular}{ll|c|c|c}
      \hline
      &Kernel &$k(x, x')$ &$\phi_{\omega}(x)$ &$p(\omega)$\\
      \hline
      &Gaussian &$\exp(-\frac{\|x - x'\|_2^2}{2})$ &$\exp(-i\omega^\top x)$ &${(2\pi)}^{-\frac{d}{2}}\exp(-\frac{\|\omega\|_2^2}{2})$  \\ 
      &Laplacian &$\exp(-{\|x - x'\|_1})$ &$\exp(-i\omega^\top x)$ &$\prod_{i=1}^d \frac{1}{\pi(1+\omega_i^2)}$ \\ 
      &Cauchy   &$\prod_{i=1}^d\frac{2}{1+(x_i -x'_i )^2}$ &$\exp(-i\omega^\top x)$ & $\exp(-{\|\omega\|_1})$ \\ 
      &Mat{\'e}rn &$\frac{2^{1-\nu}}{\Gamma(\nu)}\bigg(\frac{\sqrt{2\nu}\|x - x'\|_2}{\ell}\bigg)^\nu\hspace{-2mm} K_\nu\bigg(\frac{\sqrt{2\nu}\|x - x'\|_2}{\ell}\bigg)$ &$\exp(-i\omega^\top x)$ &$h(\nu, d,\ell)\bigg(\frac{2\nu}{\ell^2}\hspace{-1mm} + \hspace{-1mm}4\pi^2\|\omega\|_2^2\bigg)^{\nu + d/2}$ \\
      &Dot Product &$\sum_{n=0}^\infty a_n\langle x, x'\rangle^n\quad a_n\ge 0$ &$\sqrt{a_N p^{N+1}}\prod_{i=1}^N\omega_i^\top x$ &$\PP[N=n] = \frac{1}{p^{n+1}}$\\
      & & & &$p(\omega^j_i|N=n) = \frac{1}{2}^{\frac{\omega^j_i+1}{2}}\frac{1}{2}^{\frac{1-\omega^j_i}{2}}$\\
      &Polynomial &$(\langle x, x'\rangle + c)^p$ &$\mathtt{FFT}^{-1}(\mathtt{FFT}(C_1x)\odot\ldots\odot\mathtt{FFT}(C_px))$ &$C_j=S_jD_j$\\
      & & & & $D_j\in\RR^{d\times d}\, S_j\in\RR^{D\times d}$\\
      &Hellinger &$\sum_{i=1}^d\sqrt{x_i x'_i}$ &$2\omega^\top\sqrt{x}$ &$\frac{1}{2}^{\frac{\omega_i+1}{2}}\frac{1}{2}^{\frac{1-\omega_i}{2}},\,\, \omega_i\in \{-1, +1\}$\\
      &$\chi^2$ &$2\sum_{i=1}^d\frac{x_i x_i'}{x_i + x'_i}$ &$\big[\exp(-i\omega\log x_j)\sqrt{x_j}\big]_{j=1}^d$ &$ sech(\pi\omega)$ \\
      &Intersection &$\sum_{i=1}^d \min(x_i, x_i')$ &$\big[\exp(-i\omega\log x_j)\sqrt{2x_j}\big]_{j=1}^d$ &$\frac{1}{\pi(1 + 4\omega^2)}$ \\
      &Jensen-Shannon &$\sum_{i=1}^d K_{JS}(x_i, x'_i)$ &$\big[\exp(-i\omega\log x_j)\sqrt{2x_j}\big]_{j=1}^d$ &$\frac{sech(\pi \omega)}{\log 4(1 + 4\omega^2)}$ \\
      &\footnotesize{Skewed-$\chi^2$} &$2\prod_{i=1}^d\frac{\sqrt{x_i+c}\sqrt{x'_i+c}}{x_i + x'_i + 2c}$ &$\exp(-i\omega^\top \log(x+c))$ &$\prod_{i=1}^d sech(\pi \omega_i)$\\
      &\footnotesize{Skewed-Intersection} &$\prod_{i=1}^d\min\bigg(\sqrt{\frac{x_i+c}{x'_i+c}}, \sqrt{\frac{x'_i+c}{x_i+c}}\bigg)$ &$\exp(-i\omega^\top \log(x+c))$ &$\prod_{i=1}^d \frac{1}{\pi(1+4\omega_i^2)}$\\
      &\footnotesize{Exponential-Semigroup} &$\exp(-\beta \sum_{i=1}^d \sqrt{x_i + x'_j})$ &$\exp(-\omega^\top x)$ &$\prod_{i=1}^d\frac{\beta}{2\sqrt{\pi}}\omega_i^{-\frac{3}{2}}\exp(-\frac{\beta}{4\omega_i})$ \\
      &\footnotesize{Reciprocal-Semigroup} &$\prod_{i=1}^d\frac{\lambda}{x_i + x'_i + \lambda}$ &$\exp(-\omega^\top x)$ &$\prod_{i=1}^d\lambda\exp(-\lambda\omega_i)$ \\
      &Arc-Cosine &$\frac{1}{\pi}\|x\|^n\|x'\|^n J_n(\theta) $ &$(\omega^\top x)^n\max(0, \omega^\top x)$ &${2\pi}^{-\frac{d}{2}}\exp(-\frac{\|\omega\|_2^2}{2})$\\
      \hline
      \vspace{0.01in}
    \end{tabular}
    \centering
    \text{$D_j$ is random $\{\pm 1\}$ diagonal matrix and the columns of $S_j$ are uniformly selected from $\{e_1,\ldots, e_D\}$. $\nu$ and $\ell$ are positive parameters.}\\
    \text{$h(\nu, d, \ell) = \frac{2^d\pi^{d/2}\Gamma(\nu + d/2)(2\nu)^\nu}{\Gamma(\nu)\ell^{2\nu}}$. $K_\nu$ is a modified Bessel function. 
    $K_{JS}(x, x') = \frac{x}{2}\log_2\frac{x+x'}{x} + \frac{x'}{2}\log_2\frac{x+x'}{x'}$.}\\
    \text{$\theta = \cos^{-1}\frac{x^\top x'}{\|x\|\|x'\|}$, $J_n(\theta) = (-1)^n(\sin\theta)^{2n+1}\bigg(\frac{1}{\sin\theta}\frac{\partial}{\partial\theta}\bigg)^n\bigg(\frac{\pi - \theta}{\sin\theta}\bigg)$}
\end{sidewaystable}
}

Instead of finding the random process $\PP(\omega)$ and function $\phi_{\omega}(x)$ given a kernel, one can go the reverse direction, and construct kernels from random processes and functions (\eg, \cite{Wendland05}).
\begin{theorem}\label{thm:inverse_dual}
  If $k(x, x') = \int_{\Omega} \phi_{\omega}(x)^\top \phi_{\omega}(x')\, d \PP(\omega)$ for a nonnegative measure $\PP(\omega)$ on $\Omega$ and $\phi_{\omega}(x):\Xcal\mapsto \RR^r$, each component from $L_2(\Omega,\PP)$, then $k(x,x')$ is a PD kernel.
\end{theorem}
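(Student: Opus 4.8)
The plan is to verify directly from the integral representation the two defining properties of a PD kernel: symmetry and nonnegativity of every induced quadratic form. First I would settle well-definedness. Since each component of $\phi_{\omega}(x)$ lies in $L_2(\Omega,\PP)$, an application of the Cauchy--Schwarz inequality in $L_2(\Omega,\PP)$ shows that for any fixed $x,x'$ the scalar integrand $\phi_{\omega}(x)^\top \phi_{\omega}(x') = \sum_{\ell=1}^r \phi_{\omega}(x)_\ell\,\phi_{\omega}(x')_\ell$ is $\PP$-integrable, so $k(x,x')$ is finite and the representation makes sense. Symmetry is then immediate from the symmetry of the Euclidean inner product: $\phi_{\omega}(x)^\top\phi_{\omega}(x') = \phi_{\omega}(x')^\top\phi_{\omega}(x)$ pointwise in $\omega$, hence $k(x,x')=k(x',x)$.

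The core of the argument is the quadratic form. I would fix any $n\ge 1$, points $x_1,\dots,x_n\in\Xcal$, and scalars $c_1,\dots,c_n\in\RR$, substitute the integral representation, and — because the outer sum is finite and each product term is integrable — interchange summation and integration to obtain
\begin{equation*}
\sum_{i,j=1}^n c_i c_j\, k(x_i,x_j) = \int_{\Omega} \sum_{i,j=1}^n c_i c_j\, \phi_{\omega}(x_i)^\top \phi_{\omega}(x_j)\, d\PP(\omega) = \int_{\Omega} \left\| \sum_{i=1}^n c_i\, \phi_{\omega}(x_i) \right\|^2 d\PP(\omega).
\end{equation*}
The key algebraic observation is that the inner double sum collapses into the squared Euclidean norm of the single vector $\sum_{i=1}^n c_i \phi_{\omega}(x_i)\in\RR^r$. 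Since this integrand is pointwise nonnegative and $\PP$ is a nonnegative measure, the integral is nonnegative, which is precisely the defining inequality $\sum_{i,j=1}^n c_i c_j\, k(x_i,x_j)\ge 0$.

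I do not anticipate a genuinely hard step; the proof is in essence the ``sum of squares'' identity integrated against a nonnegative measure. The only place demanding care is the bookkeeping around well-definedness and the Fubini-type interchange of the finite sum with the integral. However, because the sum over $i,j$ is finite and the $L_2(\Omega,\PP)$ hypothesis guarantees integrability of each product term, this interchange is elementary and needs no additional regularity on $\PP$ beyond nonnegativity. Thus the main obstacle, such as it is, amounts to stating the integrability justification cleanly rather than overcoming any substantive analytic difficulty.
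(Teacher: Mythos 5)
Your proof is correct, and it is the standard argument that the paper itself relies on: the paper states Theorem~\ref{thm:inverse_dual} without proof, citing \cite{Wendland05}, and the canonical proof there is precisely your computation, in which the quadratic form collapses to $\int_{\Omega}\bigl\|\sum_{i=1}^n c_i\,\phi_{\omega}(x_i)\bigr\|^2\,d\PP(\omega)\geq 0$, with the $L_2(\Omega,\PP)$ hypothesis supplying integrability via Cauchy--Schwarz. No gap; your handling of well-definedness and the finite-sum/integral interchange is exactly the bookkeeping the cited proof requires.
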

For instance, $\phi_{\omega}(x):=\cos(\omega^\top \psi_\theta(x) + b)$, where $\psi_\theta(x)$ can be a random convolution of the input $x$ parametrized by $\theta$, or $\phi_{\omega}(x) = [\phi_{\omega_1}(x), \phi_{\omega_2}(x),\ldots, \phi_{\omega_r}(x)]$, where $\phi_{\omega_1}(x)$ denote the random feature for kernel $k_1(x, x')$. The former random features define a hierachical kernel~\cite{ChoSaul09}, and the latter random features induce a linear combination of multiple kernels. It is worth to note that the Hellinger's, $\chi^2$, Jensen-Shannon's and Intersection kernels in~\cite{VedZis12} are special cases of multiple kernels combination. For simplicity, we assume $\phi_w(x)\in \RR$ following, and our algorithm is still applicable to $\phi_w(x)\in \RR^r$.

Another important concept is the reproducing kernel Hilbert space (RKHS). An RKHS $\Hcal$ on $\Xcal$ is a Hilbert space of functions from $\Xcal$ to $\RR$. $\Hcal$ is an RKHS if and only if there exists a $k(x,x'):\Xcal\times \Xcal \mapsto \RR$ such that
  $
    \forall x \in \Xcal, k(x,\cdot) \in \Hcal,~\text{and}~
    \forall f \in \Hcal, \inner{f(\cdot)}{k(x,\cdot)}_{\Hcal} = f(x).
  $
If such a $k(x,x')$ exist, it is unique and it is a PD kernel.
A function $f \in \Hcal$ if and only if $\nbr{f}_{\Hcal}^2 := \inner{f}{f}_{\Hcal} < \infty$, and its $L_2$ norm is dominated by RKHS norm
$
  \nbr{f}_{L_2} \leqslant \nbr{f}_{\Hcal}.
$

\section{Doubly Stochastic Functional Gradients}\label{sec:doubly_functional_sgd}
Many kernel methods can be written as convex optimizations over functions in the RKHS and solved using the functional gradient methods~\cite{KivSmoWil04b, RatBag07}. Inspired by these previous works, we will introduce a novel concept called ``\emph{doubly stochastic functional gradients}'' to address the scalability issue. Let $l(u,y)$ be a scalar (potentially non-smooth) loss function convex of $u \in \RR$. Let the subgradient of $l(u, y)$ with respect to $u$ be $l'(u, y)$. Given a PD kernel $k(x,x')$ and the associated RKHS $\Hcal$, many kernel methods try to find a function $f_* \in \Hcal$ which solves the optimization problem
\begin{align}\label{eq:primal}
  \argmin_{f\in \Hcal}~~ R(f) :=  \EE_{(x, y)}[l(f(x), y)] + \frac{\nu}{2}\nbr{f}_{\Hcal}^2 \quad \Longleftrightarrow\quad \argmin_{\nbr{f}_{\Hcal}\leqslant B(\nu)}~~ \EE_{(x, y)}[l(f(x), y)]
\end{align}
where $\nu > 0$ is a regularization parameter, $B(\nu)$ is a non-increasing function of $\nu$, and the data $(x,y)$ follow a distribution $\PP(x,y)$. The functional gradient $\nabla R(f)$ is defined as the linear term in the change of the objective after we perturb $f$ by $\epsilon$ in the direction of $g$,~\ie,
\begin{align}
  R(f + \epsilon g) = R(f) + \epsilon \inner{\nabla R(f)}{g}_{\Hcal} + O(\epsilon^2).
\end{align}
For instance, applying the above definition, we have $\nabla f(x) = \nabla \inner{f}{k(x,\cdot)}_{\Hcal} = k(x,\cdot)$, and $\nabla \nbr{f}_{\Hcal}^2 = \nabla \inner{f}{f}_{\Hcal} = 2f$.

{\bf Stochastic functional gradient.} Given a data point $(x,y)\sim \PP(x,y)$ and $f \in \Hcal$, the stochastic functional gradient of $\EE_{(x,y)}[l(f(x),y)]$ with respect to $f \in \Hcal$ is
\begin{align}
  \xi(\cdot):= l'(f(x), y)k(x,\cdot),
\end{align}
which is essentially a single data point approximation to the true functional gradient. Furthermore, for any $g \in \Hcal$, we have $\inner{\xi(\cdot)}{g}_{\Hcal} = l'(f(x), y)g(x)$. Inspired by the duality between kernel functions and random processes, we can make an additional approximation to the stochastic functional gradient using a random feature $\phi_{\omega}(x)$ sampled according to $\PP(\omega)$. More specifically,

{\bf Doubly stochastic functional gradient.} 
Let $\omega \sim \PP(\omega)$, then the doubly stochastic gradient of $\EE_{(x,y)}[l(f(x),y)]$ with respect to $f \in \Hcal$ is
\begin{align}
  \zeta(\cdot):= l'(f(x), y) \phi_{\omega}(x)\phi_{\omega}(\cdot).
\end{align}

\begin{wrapfigure}{r}{0.5\textwidth}
\vspace{-10mm}
\begin{center}
  \includegraphics[width=0.3\textwidth]{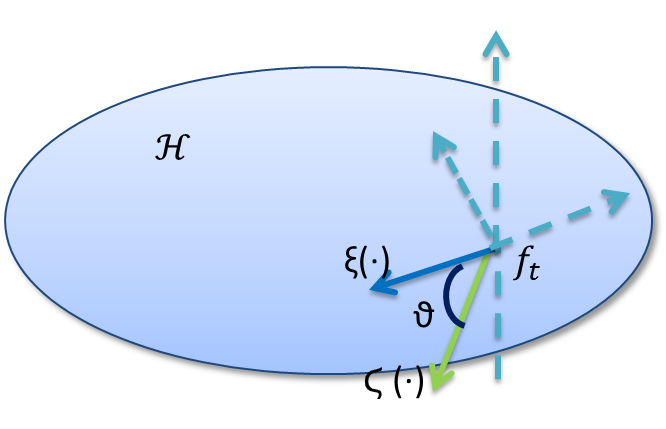}
\end{center}
\caption{$\vartheta$ is the angle betwen $\xi(\cdot)$ and $\zeta(\cdot)$, $\zeta(\cdot)$ may be outside of $\Hcal$.}
\end{wrapfigure}

Note that the stochastic functional gradient $\xi(\cdot)$ is in RKHS $\Hcal$ but $\zeta(\cdot)$ may be outside $\Hcal$, since $\phi_{\omega}(\cdot)$ may be outside the RKHS. For instance, for the Gaussian RBF kernel, the random feature $\phi_{\omega}(x) = \sqrt{2} \cos(\omega^\top x + b)$ is outside the RKHS associated with the kernel function.

However, these functional gradients are related by $\xi(\cdot) = \EE_{\omega} \sbr{\zeta(\cdot)}$, which lead to unbiased estimators of the original functional gradient,~\ie,
\begin{align}
  \nabla R(f) = \EE_{(x,y)}\sbr{\xi(\cdot)} + vf(\cdot),\\
  \text{and}\quad \nabla R(f) = \EE_{(x,y)} \EE_{\omega}\sbr{\zeta(\cdot)}  + vf(\cdot).
\end{align}
We emphasize that the source of randomness associated with the random feature is not present in the data, but artificially introduced by us. This is crucial for the development of our scalable algorithm in the next section. Meanwhile, it also creates additional challenges in the analysis of the algorithm which we will deal with carefully.

\section{Doubly Stochastic Kernel Machines}\label{sec:doubly_sgd}

\begin{figure}[t]
\begin{minipage}{0.50\textwidth}
  \hrule\vspace{1mm}
  \text{\bf Algorithm 1: $\cbr{\alpha_i}_{i=1}^t = \text{Train}(\PP(x,y))$}\vspace{1mm}
  \hrule\vspace{1mm}
  \text{\bf Require: $\PP(\omega),\, \phi_{\omega}(x),\,  l(f(x),y),\, \nu.$}\\[-4mm]
  \begin{algorithmic}[1]  \label{alg:ksup}
    \FOR{$i=1,\ldots, t$}
      \STATE Sample $(x_i, y_i) \sim \PP(x,y)$.
      \STATE Sample $\omega_i \sim \PP(\omega)$ with {\color{red}seed $i$}.
      \STATE $f(x_i) = \text{\bf Predict}(x_i,\cbr{\alpha_j}_{j=1}^{i-1})$.
      \STATE $\alpha_i = - \gamma_i l'(f(x_i), y_i) \phi_{\omega_i}(x_i)$.
      \STATE $\alpha_j = (1 - \gamma_i\nu) \alpha_j$ for $j =1,\ldots,i-1$.
    \ENDFOR
  \end{algorithmic}
  \hrule
\end{minipage}
~~~~~~~~
\begin{minipage}{0.45\textwidth}
  \hrule\vspace{1mm}
  \text{\bf Algorithm 2: $f(x)=\text{Predict}(x,\,\cbr{\alpha_i}_{i=1}^t)$}\vspace{1mm}
  \hrule\vspace{1mm}
  \text{\bf Require: $\PP(\omega),\, \phi_{\omega}(x).$}\\[-4mm]
  \begin{algorithmic}[1] \label{alg:testing}
    \STATE Set $f(x) = 0$.
    \FOR{$i=1,\ldots, t$}
      \STATE Sample $\omega_i \sim \PP(\omega)$ with {\color{red}seed $i$}.
      \STATE $f(x) = f(x) + \alpha_i \phi_{\omega_i}(x)$.
    \ENDFOR
  \end{algorithmic}
  \hrule
\end{minipage}
\end{figure}

The first key intuition behind our algorithm originates from the property of stochastic gradient descent algorithm that as long as the stochastic gradient is unbiased, the convergence of the algorithm is guaranteed~\cite{NemJudLanSha09}. In our algorithm, we will exploit this property and introduce \emph{two} sources of randomness, one from data and another artificial, to scale up kernel methods.

The second key intuition behind our algorithm is that the random features used in the doubly stochastic functional gradients will be sampled according to \emph{pseudo-random number generators}, where the sequences of apparently random samples can in fact  be completely determined by an initial value (a seed). Although these random samples are not the ``true'' random sample in the purest sense of the word, however they suffice for our task in practice.

More specifically, our algorithm proceeds by making two unbiased stochastic approximation to the functional gradient in each iteration, and then descending using this noisy functional gradient. The overall algorithms for training and prediction is summarized in Algorithm 1 and 2. The training algorithm essentially just performs random feature sampling and doubly stochastic gradient evaluation, and maintains a collection of real number $\cbr{\alpha_i}$, which is computationally efficient and memory friendly. A crucial step in the algorithm is to sample the random features with ``\emph{seed $i$}''. The seeds have to be aligned between training and prediction, and with the corresponding $\alpha_i$ obtained from each iteration. The learning rate $\gamma_t$ in the algorithm needs to be chosen as $O(1/t)$, as shown by our later analysis to achieve the best rate of convergence. For now, we assume that we have access to the data generating distribution $\PP(x,y)$. This can be modified to sample uniformly randomly
from a fixed dataset, without affecting the algorithm and the later convergence analysis. Let the sampled data and random feature parameters be $\Dcal^t:=\cbr{(x_i,y_i)}_{i=1}^t$ and $\omegab^t := \cbr{\omega_i}_{i=1}^t$ respectively after $t$ iteration, the function obtained by Algorithm 1 is a simple additive form of the doubly stochastic functional gradients
\begin{align}
  f_{t+1}(\cdot) = f_{t}(\cdot) - \gamma_t ( \zeta_t(\cdot) + \nu f_t(\cdot) ) = \sum\nolimits_{i=1}^t a_t^i \zeta_i(\cdot),\quad \forall t > 1, \quad\text{and}\quad  f_1(\cdot) = 0,
\end{align}
where $a_t^i = - \gamma_i \prod_{j=i+1}^t (1-\gamma_j \nu)$ are deterministic values depending on the step sizes $\gamma_j(i\leq j\leq t)$ and regularization parameter $\nu$. This simple form makes it easy for us to analyze its convergence.

We note that our algorithm can also take a mini-batch of points and random features at each step, and estimate an empirical covariance for preconditioning to achieve potentially better performance.

Our algorithm is general and can be applied to most of the kernel machines which are formulated in the convex optimization~(\ref{eq:primal}) in a RKHS $\Hcal$ associated with given kernel $k(x, x')$. We will instantiate the doubly stochastic gradients algorithms for a few commonly used kernel machines for different tasks and loss functions, \eg, regression, classification, quantile regression, novelty detection and estimating divergence functionals/likelihood ratio. Interestingly, the Gaussian process regression, which is a Bayesian model, can also be reformulated as the solution to particular convex optimizations in RKHS, and therefore, be approximated by the proposed algorithm. \\[-3mm]

\noindent{\bf Kernel Support Vector Machine~(SVM).} Hinge loss is used in kernel SVM where $l(u, y) = \max\{0, 1- uy\}$ with $y\in\{-1, 1\}$. We have $l'(u, y) =  \begin{cases} 0 & \,\text{if } yu \ge 1\\ -y & \, \text{if } yu <1 \\ \end{cases}$ and the step 5 in Algorithm.~1. becomes
$$
\alpha_i = \begin{cases} 0 & \,\text{if } y_if(x_i) \ge 1\\ \gamma_i y_i\phi_{\omega_i}(x_i) & \, \text{if } y_if(x_i)<1 \end{cases} .
$$

{\bf Remark:} \cite{KeeDeC05} used squared hinge loss, $l(u, y) = \frac{1}{2}\max\{0, 1-uy\}^2$, in $\ell_2$-SVM. With this loss function, we have $l'(u, y) = \begin{cases} 0 & \,\text{if } yu \ge 1\\ u-y & \, \text{if } yu <1 \\ \end{cases}$ and the step 5 in Algorithm.~1. becomes
$$
\alpha_i = \begin{cases} 0 & \,\text{if } y_if(x_i) \ge 1\\ \gamma_i (y_i - f(x_i))\phi_{\omega_i}(x_i) & \, \text{if } y_if(x_i)<1 \end{cases} .
$$

\noindent{\bf Kernel Logistic Regression. } Log loss is used in kernel logistic regression for binary classification where $l(u, y) = \log(1 + \exp(-yu))$ with $y\in \{-1,1\}$. We have $l'(u, y) = -\frac{y\exp(-yu)}{1 + \exp(-yu)}$ and the step 5 in Algorithm.~1. becomes
$$
\alpha_i = \gamma_iy_i\frac{\exp(-y_if(x_i))}{1 + \exp(-y_if(x_i))}\phi_{\omega_i}(x_i).
$$

For the multi-class kernel logistic regression, the $l(u, y) = -\sum_{{c}=1}^{C} \delta_c(y)u_c + \log\bigg(\sum_{{c}=1}^{C} \exp(u_c)\bigg)$ where $C$ is the number of categories, $u\in \RR^{C\times 1}$, $y\in \{1,\ldots,C\}$ and $\delta_c(y) = 1$ only if $y=c$, otherwise $\delta_c(y) = 0$. In such scenario, we denote $\fb(x_i) = [f^{1}(x_i),\ldots, f^C(x_i)]$, and therefore, the corresponding $\boldsymbol{\alpha} = [\alpha^{1},\ldots, \alpha^{C}]$. The update rule for $\boldsymbol{\alpha}$ in Algorithm.~1. is 

\begin{eqnarray*}
\alpha_i^{c} &=& \gamma_i\bigg(\delta_c(y_i) - \frac{\exp(f^{c}(x_i))}{\sum_{c=1}^{C}\exp(f^{c}(x_i))}\bigg)\phi_{\omega_i}(x_i) \quad  \forall c = 1,\ldots, C,\\
\alpha_j^{c} &=& (1 - \gamma_i\nu)\alpha_j^{c}\quad, \forall j<i, \forall c = 1,\ldots, C.
\end{eqnarray*}

\noindent{\bf Kernel Ridge Regression.} Square loss is used in kernel ridge regression where $l(u, y) = \frac{1}{2}(u -y)^2$. We have $l'(u, y) = (u - y)$ and the step 5 in Algorithm.~1. becomes
$$
\alpha_i = -\gamma_i(f(x_i) - y_i)\phi_{\omega_i}(x_i).
$$

\noindent{\bf Kernel Robust Regression.} Huber's loss is used for robust regression~\cite{MulSmoRatSchetal97} where 
$$
l(u, y) = \begin{cases} \frac{1}{2}(u - y)^2 &\text{if } |u-y|\le 1\\ |u - y| - \frac{1}{2} &\text{if } |u-y|>1\end{cases}.
$$
We have $l'(u, y) = \begin{cases} (u - y) &\text{if } |u-y|\le 1\\ \sgn(u - y) &\text{if } |u-y|>1\end{cases}$ and the step 5 in Algorithm.~1. becomes
$$
\alpha_i = \begin{cases} -\gamma_i(f(x_i) - y_i)\phi_{\omega_i}(x_i) & \text{if } |f(x_i) - y_i|\le 1\\ -\gamma_i\sgn(f(x_i) - y_i)\phi_{\omega_i}(x_i) &\text{if } |f(x_i) - y_i|>1 \end{cases}
$$

\noindent{\bf Kernel Support Vector Regression~(SVR).} $\epsilon$-insensitive loss function is used in kernel SVR where $l(u, y) = \max\{0, |u - y|-\epsilon\}$. We have $l'(u, y) = \begin{cases} 0 & \text{if } |u - y|\le \epsilon\\ \sgn(u - y)&\text{if } |u - y|>\epsilon \end{cases}$ and the step 5 in Algorithm.~1. becomes
$$
\alpha_i = \begin{cases} 0 & \text{if } |f(x_i) - y_i|\le \epsilon\\ -\gamma_i\sgn(f(x_i) - y_i)\phi_{\omega_i}(x_i) &\text{if } |f(x_i) - y_i|>\epsilon \end{cases}
$$

{\bf Remark:} Note that if we set $\epsilon = 0$, the $\epsilon$-intensitive loss function will become absolute deviatin, \ie, $l(u, y) = |u - y|$. Therefore, we have the updates for {\bf kernel least absolute deviatin regression}.

\noindent{\bf Kernel Quantile Regression.} The loss function for quantile regression is $l(u, y) = \max\{\tau(y - u), (1 - \tau)(u - y)\}$. We have $l'(u, y) = \begin{cases} 1-\tau & \,\text{if } u\ge y \\ -\tau & \, \text{if } u< y \\ \end{cases}$ and the step 5 in Algorithm.~1. becomes
$$
\alpha_i = \begin{cases}  \gamma_i(\tau - 1)\phi_{\omega_i}(x_i)& \,\text{if } f(x_i)\ge y_i \\ \gamma_i \tau\phi_{\omega_i}(x_i) & \, \text{if } f(x_i)<y_i \end{cases} .
$$

\noindent{\bf Kernel Novelty Detection.} The loss function $l(u, \tau) = \max\{0, \tau-u\}$~\cite{SchPlaShaSmoetal01} is proposed for novelty detection. Since $\tau$ is also a variable which needs to be optimized, the optimization problem is formulated as
\begin{eqnarray*}
\min_{\tau\in \RR, f\in \Hcal} \EE_{x}[l(f(x), \tau)] + \frac{\nu}{2}\|f\|_{\Hcal}^2 - \nu \tau,
\end{eqnarray*}
and the gradient of $l(u, \tau)$ is 
\begin{eqnarray*}
\frac{\partial l(u, \tau)}{\partial u} = \begin{cases}0 &\, \text{if } u\ge \tau \\ -1 &\, \text{if } u< \tau\end{cases},\quad  \frac{\partial l(u, \tau)}{\partial \tau} = \begin{cases}0 &\, \text{if } u\ge \tau \\ 1 &\, \text{if } u< \tau\end{cases}.
\end{eqnarray*}

The step 5 in Algorithm.~1. becomes
\begin{eqnarray*}
\alpha_i = \begin{cases}  0& \,\text{if } f(x_i)\ge \tau_{i-1} \\ \gamma_i \phi_{\omega_i}(x_i) & \, \text{if } f(x_i)<\tau_{i-1} \end{cases}, \quad
\tau_i = \begin{cases}  \tau_{i-1}+ \gamma_i\nu& \,\text{if } f(x_i)\ge \tau_{i-1} \\ \tau_{i-1} - \gamma_i(1 - \nu)& \, \text{if } f(x_i)<\tau_{i-1} \end{cases}.
\end{eqnarray*}

\noindent{\bf Kernel Density Ratio Estimation.} Based on the variational form of Ali-Silvey divergence, \ie, $\EE_{p}\big[r(\frac{q}{p})\big]$, where $r: \RR^+ \rightarrow \RR$ is a convex function with $r(1) = 0$, \cite{NguWaiJor08} proposed a nonparametric estimator for the logarithm of the density ratio, $\log\frac{q}{p}$, which is the solution of following convex optimization,
\begin{eqnarray}\label{eq:density_ratio}
\argmin_{f\in \Hcal}~~ \EE_{q}[\exp(f)] + \EE_{p}[r^*(-\exp(f))] + \frac{\nu}{2}\|f\|_\Hcal^2
\end{eqnarray}
where $r^*$ denotes the Fenchel-Legendre dual of $r$, $r(\tau) :=\sup_\chi \chi\tau - r^*(\chi)$. In Kullback-Leibler~(KL) divergence, the $r_{KL}(\tau) = -\log(\tau)$. Its Fenchel-Legendre dual is
\begin{align*}
r^*_{KL}(\tau)=\begin{cases}\infty &\text{if } \tau\ge 0\\ -1-\log(-\tau) &\text{if }\tau<0\end{cases}
\end{align*}
Specifically, the optimization becomes 
\begin{eqnarray*}
\min_{f\in \Hcal}~~ R(f) &=& \EE_{y\sim q}[\exp(f(y))] - \EE_{x\sim p}[f(x)] + \frac{\nu}{2}\|f\|_\Hcal^2 \\
&=&  2\EE_{z, x, y}\bigg[\delta_1(z)\exp(f(y)) - \delta_{0}(z)f(x)\bigg] + \frac{\nu}{2}\|f\|_\Hcal^2 .
\end{eqnarray*}
where $z\sim \text{Bernoulli}(0.5)$. Denote $l(u_x, u_y, z) = \delta_1(z)\exp(u_y) - \delta_{0}(z)u_x$, we have
\begin{eqnarray*}
l'(u_x, u_y, z) = \delta_1(z)\exp(u_y) - \delta_{0}(z)
\end{eqnarray*}
and the the step 5 in Algorithm.~1. becomes
$$
\alpha_i = -2\gamma_i(\delta_1(z_i)\exp(f(y_i))\phi_{\omega_i}(y_i) - \delta_{0}(z_i)\phi_{\omega_i}(x_i)),\quad z_i\sim \text{Bernoulli}(0.5).
$$
In particular, the $x_i$ and $y_i$ are not sampled in pair, they are sampled independently from $\PP(x)$ and $\QQ(x)$ respectively.

\cite{NguWaiJor08} proposed another convex optimization based on $r_{KL}(\tau)$ whose solution is a nonparametric estimator for the density ratio. \cite{SmoSonTeo09} designed $r_{nv}(\tau) = \max(0, \rho - \log\tau)$ for novelty detection. Similarly, the doubly stochastic gradients algorithm is also applicable to these loss functions.

\noindent{\bf Gaussian Process Regression.} The doubly stochastic gradients can be used for approximating the posterior of Gaussian process regression by reformulating the mean and variance of the predictive distribution as the solutions to the convex optimizations with particular loss functions. Let $y = f(x) + \epsilon$ where $\epsilon\sim \Ncal(0, \sigma^2)$ and $f(x)\sim \Gcal \Pcal(0, k(x, x'))$, given the dataset $\{x_i, y_i\}_{i=1}^n$, the posterior distribution of the function at the test point $x_*$ can be derived as
\begin{eqnarray}\label{eq:gpr_posterior}
f^*| X, \yb, x^* \sim \Ncal \rbr{ {k^*}^\top \rbr{K + \sigma^2 I}^{-1}\yb, \; k(x^*, x^*) - {k^*}^\top \rbr{K + \sigma^2 I}^{-1}k^* }
\end{eqnarray}
where $K\in \RR^{n\times n}$, $K_{ij} = K(x_i, x_j)$, $k^* = [k(x^*, x_1),\ldots, k(x^*, x_n)]^\top$ and $I\in \RR^{n\times n}$ is the identity matrix.

Obviously, the posterior mean of the Gaussian process for regression can be thought as the solution to optimization problem~(\ref{eq:primal}) with square loss and setting $\nu = 2\sigma^2$. Therefore, the update rule for approximating the posterior mean will be the same as kernel ridge regression.

To compute the predictive variance, we need to evaluate the ${k^*}^\top \rbr{K + \sigma^2 I}^{-1}k^*$. Following, we will introduce two different optimizations whose solutions can be used for evaluating the quantity.
\begin{enumerate}
\item Denote $\phi = [k(x_1, \cdot), \ldots, k(x_n, \cdot)]$, then 
\begin{eqnarray*}
{k^*}^\top \rbr{K + \sigma^2 I}^{-1}k^* &=& k(x^*,\cdot)^\top \phi \rbr{\phi^\top\phi + \sigma^2 I}^{-1}\phi^\top k(x^*, \cdot) \\
&=& k(x^*,\cdot)^\top \phi\phi^\top \rbr{\phi\phi^\top + \sigma^2 I}^{-1} k(x^*, \cdot)
\end{eqnarray*}
where the second equation based on identity $\rbr{\phi\phi^\top + \sigma^2I}\phi = \phi\rbr{\phi^\top\phi + \sigma^2 I}$. Therefore, we just need to estimate the operator:
\begin{eqnarray}\label{eq:variance_operator}
\Acal = \Ccal \rbr{\Ccal + \frac{\sigma^2}{n} I }^{-1}\quad \text{where}\quad \Ccal = \frac{1}{n}\phi\phi^\top = \frac{1}{n}\sum_{i=1}^n k(x_i, \cdot)\otimes k(x_i, \cdot).
\end{eqnarray}

We can express $\Acal$ as the solution to the following convex optimization problem
\begin{eqnarray*}
\min_{\Acal} R(\Acal) = \frac{1}{2n}\sum_{i=1}^n \nbr{k(x_i, \cdot) - \Acal k(x_i, \cdot)}_{\Hcal}^2 + \frac{\sigma^2}{2n}\nbr{\Acal}_{HS}^2
\end{eqnarray*}
where $\|\cdot\|_{HS}$ is the Hilbert-Schmidt norm of the operator. We can achieve the optimum by $\nabla R = 0$, which is equivalent to Eq.~\ref{eq:variance_operator}.

Based on this optimization, we approximate the $\Acal_t$ using $\sum_{i\le j, i=1}^t\theta_{ij}\phi_{\omega_i}(\cdot)\otimes\phi_{\omega_j}(\cdot)$ by doubly stochastic functional gradients. The update rule for $\theta$ is
\begin{eqnarray*}
\theta_{ij} &=& \bigg(1 - \frac{\sigma^2}{n}\gamma_t\bigg)\theta_{ij},\, \forall i\le j<t\\
\theta_{it} &=& -\gamma_t\sum_{j\ge i}^{t-1}\theta_{ij}\phi_{\omega_j'}(x_t)\phi_{\omega_t'}(x_t),\, \forall i<t\\
\theta_{tt} &=& \gamma_t\phi_{\omega_t}(x_t)\phi_{\omega'_t}(x_t).
\end{eqnarray*}
Please refer to Appendix~\ref{appendix:gp_update_rule} for the details of the derivation. 

\item Assume that the testing points, $\{x_i^{*}\}_{i=1}^m$, are given beforehand, instead of approximating the operator $\Acal$, we target on functions $F^* = [f^*_1, \ldots, f^*_m]^\top$ where $f^*_i(\cdot) = k(\cdot)^\top  \rbr{K + \sigma^2I}^{-1}k_i^*$, $k(\cdot) = [k(x_1, \cdot), \ldots, k(x_2, \cdot)]$ and $k_i^* = [k(x_i^*, x_1),\ldots, k(x_i^*, x_n)]^\top$. Estimating $f^*_i(\cdot)$ can be accomplished by solving the optimization problem~(\ref{eq:primal}) with square loss and setting $y_j = k(x_i^*, x_j),\forall j= 1,\ldots,n$, $\nu = 2\sigma^2$, leading to the same update rule as kernel ridge regression. 
\end{enumerate}

After we obtain these estimators, we can calculate the predictive variance on $x_i^*$ by either $k(x_i^*, x_i^*) - \Acal(x_i^*, x_i^*)$ or $k(x_i^*, x_i^*) - f_i^*(x_i^*)$. We conduct experiments to justify the novel formulations for approximating both the mean and variance of posterior of Gaussian processes for regression, and the doubly stochastic update rule in Section.(\ref{sec:experiments}).

Note that, to approximate the operator $\Acal$, doubly stochastic gradient requires $O(t^2)$ memory. Although we do not need to save the whole training dataset, which saves $O(dt)$ memory cost, this is still computationally expensive. When the $m$ testing data are given, we estimate $m$ functions and each of them requires $O(t)$ memory cost, the total cost will be $O(tm)$ by the second algorithm.

\section{Theoretical Guarantees}\label{sec:analysis}

In this section, we will show that, both in expectation and with high probability, our algorithm can estimate the optimal function in the RKHS with rate $O(1/t)$, and achieve a generalization bound of $O(1/\sqrt{t})$. The analysis for our algorithm has a new twist compared to previous analysis of stochastic gradient descent algorithms, since the random feature approximation results in an estimator which is outside the RKHS. Besides the analysis for stochastic functional gradient descent, we need to use martingales and the corresponding concentration inequalities to prove that the sequence of estimators, $f_{t+1}$, outside the RKHS converge to the optimal function, $f_\ast$, in the RKHS. We make the following standard assumptions ahead for later references:
\begin{enumerate}[noitemsep, nolistsep, label=\Alph*.]
\item There exists an optimal solution, denoted as $f_*$, to the problem of our interest (\ref{eq:primal}).
\item Loss function $\ell(u,y):\RR\times \RR\to\RR$ and its first-order derivative is $L$-Lipschitz continous in terms of the first argument.
\item  For any data $\{(x_i,y_i)\}_{i=1}^t$ and any trajectory $\{f_i(\cdot)\}_{i=1}^{t}$, there exists $M>0$, such that $|\ell'(f_i(x_i),y_i)|\leqslant M$.
Note  in our situation $M$ exists and $M<\infty$ since we assume bounded domain and the functions $f_t$ we generate are always bounded as well.
\item There exists $\kappa>0$ and $\phi>0$, such that
$k(x,x')\leqslant \kappa,\, |\phi_\omega(x) \phi_\omega(x')|\leqslant \phi,\forall x,x'\in\Xcal,\omega\in\Omega.$
For example, when $k(\cdot,\cdot)$ is the Gaussian RBF kernel, we have $\kappa=1$, $\phi=2$.\\[-5mm]
\end{enumerate}
We now present our main theorems as below. Due to the space restrictions, we will only provide a short sketch of proofs here. The full proofs for the these theorems are given in the Appendix~\ref{appendix:proof_details}-\ref{appendix:suboptimality}.
\begin{theorem}[Convergence in expectation]\label{thm:expectation}
When $\gamma_t=\frac{\theta}{t}$ with $\theta>0$ such that $\theta\nu\in(1,2)\cup\ZZ_+$,
$$\EE_{\Dcal^t,\omegab^t}\sbr{  |f_{t+1}(x) - f_\ast(x)|^2 }\leqslant \frac{2C^2+2\kappa Q_1^2}{t},\quad \text{for any $x\in \Xcal$}$$
where $Q_1= \max\cbr{\nbr{f_\ast}_\Hcal, (Q_0 + \sqrt{ Q_0^2 + (2\theta\nu -1) (1+ \theta\nu)^2\theta^2  \kappa M^2 } )/(2\nu \theta - 1)}$, with $Q_0 =2\sqrt{2} \kappa^{1/2} (\kappa + \phi)LM \theta^2$, and $C^2=4(\kappa+\phi)^2M^2\theta^2$.
\end{theorem}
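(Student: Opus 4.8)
The plan is to compare the doubly stochastic iterate $f_{t+1}$, which lives outside $\Hcal$, against an auxiliary ``shadow'' sequence that stays inside $\Hcal$ and to which the standard strongly-convex stochastic-approximation argument applies. Concretely, I would introduce $h_{t+1}(\cdot) = h_t(\cdot) - \gamma_t\rbr{\xi_t(\cdot) + \nu h_t(\cdot)}$ with $h_1 = 0$ and $\xi_t(\cdot) = l'(f_t(x_t),y_t)\,k(x_t,\cdot)$, i.e. the single-stochastic gradient obtained by replacing the random-feature product $\phi_{\omega_t}(x_t)\phi_{\omega_t}(\cdot)$ by its $\omega$-expectation $k(x_t,\cdot)$, while still evaluating the loss derivative at the \emph{true} iterate $f_t$. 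Since $k(x,\cdot)\in\Hcal$ we have $h_{t+1}\in\Hcal$, and unrolling as for $f_{t+1}$ gives $h_{t+1}=\sum_{i=1}^t a_t^i\,\xi_i$ with the \emph{same} deterministic coefficients $a_t^i$. The point error then splits as
\[
|f_{t+1}(x) - f_\ast(x)|^2 \le 2\,|f_{t+1}(x) - h_{t+1}(x)|^2 + 2\,|h_{t+1}(x) - f_\ast(x)|^2 ,
\]
and I would bound the two terms separately: the first is the genuinely new random-feature error, the second the classical stochastic-gradient error.

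For the first term, $f_{t+1}-h_{t+1}=\sum_{i=1}^t a_t^i(\zeta_i-\xi_i)$. Conditioning on $\Dcal^t$ and on $\omega_1,\dots,\omega_{i-1}$, the triple $(x_i,y_i,f_i)$ is determined before $\omega_i$ is drawn, and the kernel's integral representation gives $\EE_{\omega_i}\sbr{\phi_{\omega_i}(x_i)\phi_{\omega_i}(x)}=k(x_i,x)$; hence, evaluated pointwise at $x$, the terms $\zeta_i(x)-\xi_i(x)$ form a martingale difference sequence in $i$. Orthogonality kills the cross terms, so $\EE\sbr{|f_{t+1}(x)-h_{t+1}(x)|^2}=\sum_{i=1}^t (a_t^i)^2\,\EE\sbr{|\zeta_i(x)-\xi_i(x)|^2}$. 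Each summand is bounded by $(\kappa+\phi)^2M^2$ via Assumptions B--D, and with $\gamma_t=\theta/t$ the weights satisfy $a_t^i\approx -\tfrac{\theta}{i}(i/t)^{\theta\nu}$, giving $\sum_{i=1}^t (a_t^i)^2 = O(\theta^2/t)$ precisely when $2\theta\nu>1$ --- this is where the hypothesis $\theta\nu\in(1,2)\cup\ZZ_+$ is used. Collecting constants yields $\EE\sbr{|f_{t+1}(x)-h_{t+1}(x)|^2}\le C^2/t$ with $C^2=4(\kappa+\phi)^2M^2\theta^2$.

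For the second term, the reproducing property gives $|h_{t+1}(x)-f_\ast(x)|^2\le \kappa\,\nbr{h_{t+1}-f_\ast}_\Hcal^2$, so it suffices to control $e_t:=\EE\sbr{\nbr{h_t-f_\ast}_\Hcal^2}$. Expanding $\nbr{h_{t+1}-f_\ast}_\Hcal^2$ and taking conditional expectations, the descent term is $-2\gamma_t\inner{\EE_t[\xi_t+\nu h_t]}{h_t-f_\ast}_\Hcal$. The crucial subtlety is that $\EE_t[\xi_t+\nu h_t]$ is \emph{not} $\nabla R(h_t)$, but $\nabla R(h_t)$ plus a bias $\EE_{(x,y)}\sbr{(l'(f_t(x),y)-l'(h_t(x),y))k(x,\cdot)}$ arising from evaluating $l'$ at $f_t$ rather than $h_t$. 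The $\nu$-strong convexity of $R$ handles the true gradient through $\inner{\nabla R(h_t)}{h_t-f_\ast}_\Hcal\ge \nu\,\nbr{h_t-f_\ast}_\Hcal^2$, while the $L$-Lipschitz Assumption B bounds the bias in $\Hcal$-norm by $L\sqrt{\kappa}\,\sqrt{B_t}$, where $B_t:=\sup_x\EE\sbr{|f_t(x)-h_t(x)|^2}\le C^2/t$ is exactly the quantity controlled above. Cauchy--Schwarz turns the bias cross term into a contribution of order $\gamma_t L\sqrt{\kappa}\,\sqrt{B_t}\,\sqrt{e_t}$, and together with the $O(\gamma_t^2)$ second-order term this produces the recurrence $e_{t+1}\le(1-2\nu\gamma_t)e_t+\gamma_t\,\beta\,\sqrt{e_t}/\sqrt{t}+\gamma_t^2\sigma^2$.

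Finally I would solve this recurrence by induction to get $e_t\le Q_1^2/t$. The base case uses $h_1=0$, so $e_1=\nbr{f_\ast}_\Hcal^2\le Q_1^2$ since $Q_1\ge\nbr{f_\ast}_\Hcal$. Substituting $e_t\le Q_1^2/t$, demanding $e_{t+1}\le Q_1^2/(t+1)$, multiplying by $t^2$ and using $\tfrac{Q_1^2}{t}-\tfrac{Q_1^2}{t+1}\ge\tfrac{Q_1^2}{t(t+1)}$ reduces the inductive step to the quadratic inequality $(2\nu\theta-1)Q_1^2-2Q_0Q_1-(1+\theta\nu)^2\theta^2\kappa M^2\ge0$, whose larger root is exactly the stated $Q_1$, with $Q_0=2\sqrt2\,\kappa^{1/2}(\kappa+\phi)LM\theta^2$ collecting the bias coefficient. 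Plugging both bounds into the split gives $\EE\sbr{|f_{t+1}(x)-f_\ast(x)|^2}\le(2C^2+2\kappa Q_1^2)/t$. I expect the main obstacle to be the coupling between the two error terms: the recurrence for $e_t$ cannot be closed in isolation because its bias is driven by the random-feature error $B_t$, so the martingale bound must be established first and fed in with the correct $1/t$ decay, and the condition $\theta\nu>1$ must simultaneously force both the martingale variance sum and the recurrence contraction to decay at the matching $1/t$ rate.
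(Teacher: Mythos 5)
Your proposal is correct and follows essentially the same route as the paper's proof: the same intermediate sequence $h_{t+1}$ with $l'$ evaluated at $f_t$ (so that $f_{t+1}-h_{t+1}$ is a martingale with coefficients $|a_t^i|\leqslant \theta/t$), the same error decomposition, the same identification of the bias term handled via Lipschitzness of $l'$ and Cauchy--Schwarz, and the same recursion $e_{t+1}\leqslant(1-2\nu\theta/t)e_t+\beta_1\sqrt{e_t/t}/t+\beta_2/t^2$ solved by induction, whose quadratic inequality yields exactly the stated $Q_1$. The only immaterial difference is that you bound the martingale's second moment directly by orthogonality of the differences, whereas the paper integrates the Azuma tail bound (needed anyway for its high-probability version); both give the same constant $C^2=4(\kappa+\phi)^2M^2\theta^2$.
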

\begin{theorem}[Convergence with high probability]\label{thm:probability}
When $\gamma_t=\frac{\theta}{t}$ with $\theta>0$ such that $\theta\nu\in\ZZ_+$ and $t\ge \theta\nu$,  for any $x\in \Xcal$, we have with probability at least $1-3\delta$ over $(\Dcal^t,\omegab^t)$,
$$  |f_{t+1}(x) - f_\ast(x)|^2 \leqslant  \frac{C^2\ln(2/\delta)}{t}+\frac{2\kappa Q_2^2\ln(2t/\delta)\ln^2(t)}{t}, $$
where $C$ is as above and $Q_2= \max\cbr{\nbr{f_\ast}_\Hcal, Q_0 + \sqrt{ Q_0^2 + \kappa M^2(1+\theta\nu)^2(\theta^2+16\theta/\nu) }}$, with $ Q_0 =4\sqrt{2}\kappa^{1/2}M\theta(8+(\kappa+\phi)\theta L)$.
\end{theorem}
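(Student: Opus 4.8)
The plan is to compare the actual iterate $f_{t+1}$, which may leave $\Hcal$, with an auxiliary sequence that stays inside $\Hcal$, and to split the pointwise error into a part caused by the random-feature approximation and a part caused by the random-data approximation. Concretely, I would define $h_{t+1}(\cdot):=\sum_{i=1}^t a_t^i\,\xi_i(\cdot)$ using the single-stochastic functional gradients $\xi_i(\cdot)=\ell'(f_i(x_i),y_i)k(x_i,\cdot)$ with the same deterministic coefficients $a_t^i$ as the algorithm, equivalently $h_{t+1}=h_t-\gamma_t(\xi_t+\nu h_t)$ with $h_1=0$. Since each $\xi_i\in\Hcal$ we have $h_{t+1}\in\Hcal$, and the identity $\xi_i=\EE_{\omega_i}[\zeta_i]$ makes the discrepancy $f_{t+1}-h_{t+1}=\sum_{i=1}^t a_t^i(\zeta_i-\xi_i)$ a martingale in the injected randomness $\omegab^t$. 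Using $|f_{t+1}(x)-f_\ast(x)|^2\le 2|f_{t+1}(x)-h_{t+1}(x)|^2+2|h_{t+1}(x)-f_\ast(x)|^2$ together with the reproducing-property bound $|h_{t+1}(x)-f_\ast(x)|^2\le\kappa\,\nbr{h_{t+1}-f_\ast}_\Hcal^2$, it remains to control these two pieces.

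For the random-feature term I would exploit the martingale structure: conditioning on the data and on $\omega_1,\dots,\omega_{i-1}$, the increments $\zeta_i(x)-\xi_i(x)$ have zero conditional mean and are bounded by $M(\kappa+\phi)$ by Assumptions B--D, so the cross terms vanish in expectation and $\EE|f_{t+1}(x)-h_{t+1}(x)|^2\le M^2(\kappa+\phi)^2\sum_{i=1}^t(a_t^i)^2$. What remains is the coefficient estimate $\sum_i(a_t^i)^2=O(\theta^2/t)$, which follows from $\prod_{j=i+1}^t(1-\gamma_j\nu)\approx(i/t)^{\theta\nu}$ and comparison with $\sum_i i^{2\theta\nu-2}$. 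This is exactly where the step-size restriction $\theta\nu\in(1,2)\cup\ZZ_+$ enters, keeping every factor $1-\theta\nu/j$ controlled in sign and magnitude and ensuring $2\theta\nu-1>0$ so the sum is summable; it yields the contribution $2C^2/t$ with $C^2=4(\kappa+\phi)^2M^2\theta^2$.

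The heart of the argument is a recurrence for $A_t:=\EE\nbr{h_t-f_\ast}_\Hcal^2$. Expanding $\nbr{h_{t+1}-f_\ast}_\Hcal^2$ and taking conditional expectation over $(x_t,y_t)$ replaces $\xi_t$ by the true data-gradient $g(f_t)=\EE[\ell'(f_t(x),y)k(x,\cdot)]$. Writing $g(f_t)+\nu f_\ast=(g(f_t)-g(h_t))+\nabla R(h_t)$ and using $\nu$-strong convexity of $R$ with $\nabla R(f_\ast)=0$ gives the contraction $\inner{\nabla R(h_t)}{h_t-f_\ast}_\Hcal\ge\nu\nbr{h_t-f_\ast}_\Hcal^2$, while the Lipschitz Assumption~B bounds the mismatch $\nbr{g(f_t)-g(h_t)}_\Hcal\le L\sqrt{\kappa}\,\EE_x|f_t(x)-h_t(x)|$. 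This is the step that couples the two error sources, since the right-hand side reintroduces the random-feature error, which I bound by the $O(\theta/\sqrt t)$ estimate above. The result is a nonlinear recurrence of the schematic form $A_{t+1}\le(1-\tfrac{2\nu\theta}{t})A_t+\tfrac{c_1}{t^{3/2}}\sqrt{A_t}+\tfrac{c_2}{t^2}$, with $c_1$ proportional to $Q_0$ and $c_2=(1+\theta\nu)^2\theta^2\kappa M^2$ arising from the $\gamma_t^2\nbr{\xi_t+\nu h_t}_\Hcal^2$ term. I would then prove by induction that $A_t\le Q_1^2/t$: the inductive step reduces to the quadratic inequality $(2\nu\theta-1)Q_1^2-c_1Q_1-c_2\ge0$ whose positive root is precisely the second argument of the $\max$ defining $Q_1$, while the base case $A_1=\nbr{f_\ast}_\Hcal^2$ forces the other argument. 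Combining this with the random-feature bound proves Theorem~\ref{thm:expectation}.

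For Theorem~\ref{thm:probability} the same decomposition is used, but each expectation is upgraded to a high-probability statement. The random-feature martingale $\sum_i a_t^i(\zeta_i(x)-\xi_i(x))$ has bounded increments, so an Azuma--Hoeffding/Bernstein inequality for Hilbert-space martingales gives $|f_{t+1}(x)-h_{t+1}(x)|^2\le C^2\ln(2/\delta)/t$; the recurrence for $\nbr{h_t-f_\ast}_\Hcal^2$ now carries a genuinely stochastic term $\gamma_t\inner{\xi_t-g(f_t)}{h_t-f_\ast}_\Hcal$ along the trajectory, whose running sum I control by concentration plus a union bound over the $t$ steps, producing the extra $\ln(2t/\delta)$ and $\ln^2(t)$ factors and replacing $Q_1$ by $Q_2$. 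I expect the main obstacle to be twofold. First, the $f_t$-versus-$h_t$ mismatch inside the loss derivative blocks a direct strong-convexity argument and forces the coupling between the two noise sources through the Lipschitz bound, which in turn is what makes the recurrence nonlinear through the $\sqrt{A_t}$ term; closing the induction with the correct constants requires using $\theta\nu\in(1,2)\cup\ZZ_+$ precisely. Second, in the high-probability case the new ingredient is the martingale concentration for estimators that live \emph{outside} the RKHS and are therefore measured pointwise, which departs from standard stochastic-gradient analyses.
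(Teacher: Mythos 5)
Your proposal follows the paper's proof essentially step for step: the same auxiliary sequence $h_{t+1}=\sum_{i=1}^t a_t^i\xi_i(\cdot)\in\Hcal$, the same pointwise decomposition with Azuma's inequality yielding the $C^2\ln(2/\delta)/t$ random-feature term, and the same treatment of $\nbr{h_t-f_\ast}_\Hcal^2$ via a cumulated recursion whose martingale part is handled by a Bernstein-type concentration bound (the paper invokes Lemma 3 of \cite{RakShaSri12}, which produces the $\ln(\ln t/\delta)$-flavored factors), whose Lipschitz-mismatch part $\langle h_i-f_\ast,\hat g_i-g_i\rangle_\Hcal$ is union-bounded over the $t$ steps to give $\ln(2t/\delta)$, and which is then closed by induction to yield $Q_2^2\ln(2t/\delta)\ln^2(t)/t$ with total failure probability $3\delta$. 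The one detail your sketch leaves implicit---that the variance proxy in the martingale concentration depends on the very quantities $A_i$ being bounded, which the paper resolves by an induction simultaneous over all $1\le j\le t$ (its Lemma~\ref{lem:rec2})---is a refinement of, not a departure from, your stated plan.
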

\begin{proof}\textbf{sketch:} We focus on the convergence in expectation; the high probability bound can be established in a similar fashion.
The main technical difficulty is that $f_{t+1}$ may not be in the RKHS $\Hcal$. The key of the proof is then to construct an intermediate function $h_{t+1}$, such that the difference between $f_{t+1}$ and $h_{t+1}$ and the difference  between $h_{t+1}$ and $f_*$ can be bounded. More specifically,
\begin{align}
  h_{t+1}(\cdot) = h_{t}(\cdot) - \gamma_t ( \xi_t(\cdot) + \nu h_t(\cdot) ) = \sum\nolimits_{i=1}^t a_t^i \xi_i(\cdot),\quad \forall t > 1, \quad\text{and}\quad h_1(\cdot) = 0,
\end{align}
where $\xi_t(\cdot) = \EE_{\omega_t}[\zeta_t(\cdot)]$. Then for any $x$, the error can be decomposed as two terms
\begin{eqnarray*}
  |f_{t+1}(x) - f_\ast(x)|^2
  \leqslant 2 \underbrace{ |f_{t+1}(x) - h_{t+1}(x)|^2 }_{\text{ error due to random features}}~~~~+~~~~2 \kappa \underbrace{\nbr{ h_{t+1} -  f_\ast}_{\Hcal}^2}_{\text{error due to random data}}
\end{eqnarray*}

For the error term due to random features, $h_{t+1}$ is constructed such that $f_{t+1}-h_{t+1}$ is a martingale, and the stepsizes are chosen such that $|a_t^i|\leqslant \frac{\theta}{t}$, which allows us to bound the martingale. In other words, the choices of the stepsizes keep $f_{t+1}$  close to the RKHS. For the error term due to random data, since $h_{t+1} \in \Hcal$, we can now apply the standard arguments for stochastic approximation in the RKHS. Due to the additional randomness, the recursion is slightly more complicated,
$
e_{t+1}\leqslant \rbr{1-  \frac{2\nu\theta}{t}} e_t  +\frac{\beta_1}{t}\sqrt{\frac{e_t}{t} }+\frac{ \beta_2}{t^2},
$
where $e_{t+1}=\EE_{\Dcal^{t},\omegab^{t}}[\|h_{t+1}-f_*\|_\Hcal^2]$, and $\beta_1$ and $\beta_2$ depends on the related parameters. Solving this recursion then leads to a bound for the second error term.
\end{proof}

\begin{figure}
\begin{center}
  \includegraphics[width=0.3\textwidth]{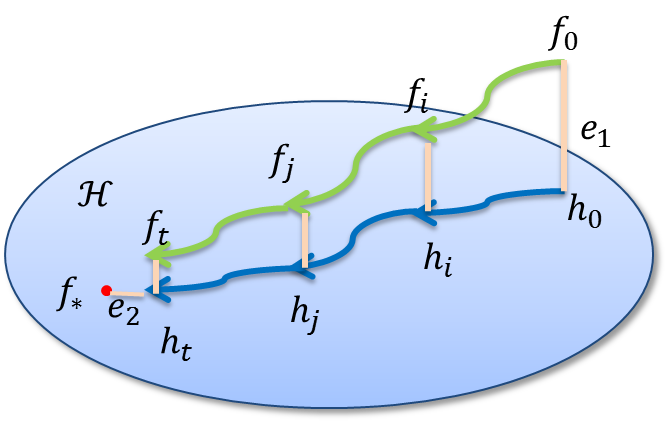}
\end{center}
\caption{$e_1$ stands the error due to random features, and $e_2$ stands for the error due to random data.}
\end{figure}

\begin{theorem}[Generalization bound]\label{thm:risk}
Let the true risk be $R_{true}(f)=\EE_{(x,y)}\sbr{l(f(x),y)}$. Then with probability at least $1-3\delta$ over $(\Dcal^t,\omegab^t)$, and $C$ and $Q_2$ defined as previously
$$R_{true}(f_{t+1})-R_{true}(f_*)\leqslant \frac{(C\sqrt{\ln(8\sqrt{e}t/\delta)}+\sqrt{2\kappa}Q_2\sqrt{\ln(2t/\delta)}\ln(t))L}{\sqrt{t}}.$$\vspace{-5mm}
\end{theorem}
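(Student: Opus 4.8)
The plan is to reduce the excess risk to the $L_2$ distance between $f_{t+1}$ and $f_*$, and then recycle the random-feature and random-data error bounds already produced for Theorem~\ref{thm:probability}. First I would invoke Assumption~B: since $l(u,y)$ is $L$-Lipschitz in its first argument, for every fixed realization of the training randomness
$$R_{true}(f_{t+1}) - R_{true}(f_*) = \EE_{(x,y)}\sbr{l(f_{t+1}(x),y) - l(f_*(x),y)} \le L\,\EE_{(x,y)}\sbr{|f_{t+1}(x) - f_*(x)|}.$$
Applying Jensen's inequality to the concave square root then gives $\EE_x[|f_{t+1}(x)-f_*(x)|] \le \sqrt{\EE_x[|f_{t+1}(x)-f_*(x)|^2]} = \nbr{f_{t+1}-f_*}_{L_2}$, so it suffices to control this $L_2$ distance with high probability over $(\Dcal^t,\omegab^t)$.

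Next I would reuse the intermediate function $h_{t+1}=\sum_i a_t^i \xi_i$ from the proof of Theorem~\ref{thm:probability}, but integrate the error decomposition against $\PP(x)$ instead of evaluating it pointwise:
$$\EE_x\sbr{|f_{t+1}(x)-f_*(x)|^2} \le 2\,\EE_x\sbr{|f_{t+1}(x)-h_{t+1}(x)|^2} + 2\kappa\,\nbr{h_{t+1}-f_*}_{\Hcal}^2.$$
The second (random-data) term is independent of $x$, so the high-probability bound $\nbr{h_{t+1}-f_*}_\Hcal^2 \le Q_2^2\ln(2t/\delta)\ln^2(t)/t$ established for Theorem~\ref{thm:probability} transfers verbatim; after taking a square root and using $\sqrt{a+b}\le\sqrt a+\sqrt b$ and multiplying by $L$, it produces exactly the summand $\sqrt{2\kappa}\,Q_2\sqrt{\ln(2t/\delta)}\ln(t)\,L/\sqrt t$ of the claimed bound.

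The main obstacle is the random-feature term $\EE_x[|f_{t+1}(x)-h_{t+1}(x)|^2] = \nbr{f_{t+1}-h_{t+1}}_{L_2}^2$. Unlike in Theorem~\ref{thm:probability}, I cannot evaluate at a single $x$ and then take $\EE_x$ of the high-probability statement, because the good event there depends on $x$. Instead I would treat $f_{t+1}-h_{t+1}=\sum_{i=1}^t a_t^i(\zeta_i-\xi_i)$, for fixed horizon $t$, as a martingale valued in the Hilbert space $L_2(\Xcal,\PP)$: the weights $a_t^i$ are deterministic and, conditioned on the past, $\EE_{\omega_i}[\zeta_i-\xi_i]=0$. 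By Assumptions~C and~D one has $\nbr{\zeta_i}_{L_2}\le \phi M$ and $\nbr{\xi_i}_{L_2}\le \kappa M$, so each increment satisfies $\nbr{a_t^i(\zeta_i-\xi_i)}_{L_2}\le \frac{\theta}{t}(\kappa+\phi)M = \frac{C}{2t}$ using $|a_t^i|\le\theta/t$, giving total quadratic variation $\sum_{i=1}^t (C/2t)^2 = C^2/(4t)$. Applying a Hilbert-space martingale (Pinelis/Azuma-type) concentration inequality to $\nbr{f_{t+1}-h_{t+1}}_{L_2}$ then yields $\nbr{f_{t+1}-h_{t+1}}_{L_2}^2 \le C^2\ln(8\sqrt{e}\,t/\delta)/(2t)$ with high probability; the enlarged logarithmic factor $\ln(8\sqrt{e}\,t/\delta)$, versus the pointwise $\ln(2/\delta)$, is precisely the price of the vector-valued inequality and of controlling the whole trajectory rather than a fixed point.

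Finally I would substitute both pieces back into the integrated decomposition, take the square root and distribute $L$, and merge the two failure events (one for each error term) by a union bound. This gives, with probability at least $1-3\delta$,
$$R_{true}(f_{t+1})-R_{true}(f_*) \le L\sqrt{\frac{C^2\ln(8\sqrt{e}\,t/\delta)}{t} + \frac{2\kappa Q_2^2\ln(2t/\delta)\ln^2(t)}{t}} \le \frac{(C\sqrt{\ln(8\sqrt{e}\,t/\delta)}+\sqrt{2\kappa}Q_2\sqrt{\ln(2t/\delta)}\ln(t))L}{\sqrt{t}},$$
which is the stated bound. I expect the only genuinely delicate step to be the Hilbert-space martingale concentration for $\nbr{f_{t+1}-h_{t+1}}_{L_2}$ and the bookkeeping of constants that produces the $8\sqrt{e}\,t$ factor inside the logarithm; the Lipschitz reduction, Jensen step, and transfer of the random-data term are routine.
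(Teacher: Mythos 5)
Your proof is correct, and it reaches the theorem by a genuinely different route for the random-feature term. Your Lipschitz/Jensen reduction, the integrated decomposition $\EE_x|f_{t+1}(x)-f_*(x)|^2 \leqslant 2\,\EE_x|f_{t+1}(x)-h_{t+1}(x)|^2 + 2\kappa\nbr{h_{t+1}-f_*}_{\Hcal}^2$, and the verbatim transfer of the random-data bound from Lemma~\ref{lem:random_data} all coincide with the paper's argument (packaged there as Corollary~\ref{cor:L2}). Where you diverge is the term $\nbr{f_{t+1}-h_{t+1}}_{L_2}^2$: the paper never invokes a vector-valued martingale inequality. It keeps only the pointwise Azuma bound of Lemma~\ref{lem:random_feature}, turns it into a statement about $\Pr_{x\sim\PP(x)}$ via Fubini's theorem and Markov's inequality, and then integrates using the uniform bound $|f_{t+1}(x)-h_{t+1}(x)|^2\leqslant C^2$ with the choice $\epsilon=\delta/(4t)$ --- and this elementary detour is exactly where the factor $\ln(8\sqrt{e}\,t/\delta)$ arises. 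Your route instead treats $f_{t+1}-h_{t+1}=\sum_{i=1}^t a_t^i(\zeta_i-\xi_i)$ as a martingale in the Hilbert space $L_2(\Xcal,\PP)$ with increments bounded by $C/(2t)$ (your norm bounds $\nbr{\zeta_i}_{L_2}\leqslant \phi M$, $\nbr{\xi_i}_{L_2}\leqslant \kappa M$ via Assumptions~C and~D are valid, since both functions are uniformly bounded); since $L_2(\Xcal,\PP)$ is $2$-smooth with constant $1$, a Pinelis-type inequality gives $\nbr{f_{t+1}-h_{t+1}}_{L_2}^2 \leqslant C^2\ln(2/\delta)/(2t)$ with probability $1-\delta$, which is in fact \emph{sharper} than the paper's $C^2\ln(8\sqrt{e}\,t/\delta)/(2t)$ and a fortiori implies the stated bound. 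Consequently your closing remark misdiagnoses the bookkeeping: the enlarged logarithm in the theorem is the price of the paper's Fubini--Markov argument, not of the vector-valued inequality; your approach removes it rather than incurs it. What each buys: the paper's proof uses only scalar Azuma plus elementary measure theory, at the cost of the extra $\ln t$ inside the logarithm; yours requires citing Hilbert-space martingale concentration but is shorter and marginally tighter. Your union-bound accounting is right provided you note that the random-data event already consumes $2\delta$ in Lemma~\ref{lem:random_data}(ii), so the total failure probability is $2\delta+\delta=3\delta$ as claimed.
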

\begin{proof}
By the Lipschitz continuity of $l(\cdot,y)$ and Jensen's Inequality, we have
$$R_{true}(f_{t+1})-R_{true}(f_*)\leqslant L\EE_{x}|f_{t+1}(x)-f_*(x)|\leqslant L\sqrt{\EE_{x}|f_{t+1}(x)-f_*(x)|^2} = L\|f_{t+1}-f_*\|_2.$$
Again, $\|f_{t+1}-f_*\|_2$ can be decomposed as  two terms $O\rbr{\|f_{t+1} - h_{t+1}\|_2^2}$ and $O(\nbr{ h_{t+1} -  f_\ast}_{\Hcal}^2)$, which can be bounded similarly as in Theorem~\ref{thm:probability} (see Corollary~\ref{cor:L2} in the appendix).
\end{proof}
{\bf Remarks.} The overall rate of convergence in expectation, which is $O(1/t)$, is indeed optimal.  Classical complexity theory (see, e.g. reference in~\cite{NemJudLanSha09}) shows that to obtain $\epsilon$-accuracy solution, the number of iterations needed for the stochastic approximation is  $\Omega(1/\epsilon)$ for strongly convex case and $\Omega(1/\epsilon^2)$ for general convex case. Different from the classical setting of stochastic approximation, our case imposes not one but two sources of randomness/stochasticity in the gradient, which intuitively speaking, might require higher order number of iterations for general convex case. However, the variance of the random features only contributes \emph{additively} to the constant in the final convergence rate. Therefore, our method is still able to achieve the same rate as in the classical setting. Notice that these bounds are achieved by adopting the classical stochastic 
gradient algorithm, and they may be further refined with more sophisticated techniques and analysis. For example, techniques for reducing variance of SGD proposed in~\cite{JohZha013}, mini-batch and preconditioning~\cite{AgaKakKarSonVal14, YanJinZhu14} can be used to reduce the constant factors in the bound significantly. Theorem \ref{thm:expectation} also reveals  bounds in $L_{\infty}$ and $L_2$ sense as in Appendix~\ref{appendix:L2}. The choices of stepsizes $\gamma_t$ and the tuning parameters given in these bounds are only for sufficient conditions and simple analysis; other choices can also lead to bounds in the same order.

\section{Computation, Memory and Statistics Trade-off}\label{sec:comp_analysis}

To investigate computation, memory and statistics trade-off, we will fix the desired $L_2$ error in the function estimation to $\epsilon$,~\ie, $\|f-f_*\|_2^2 \leqslant \epsilon$, and work out the dependency of other quantities on $\epsilon$. These other quantities include the preprocessing time, the number of samples and random features (or rank), the number of iterations of each algorithm, and the computational cost and memory requirement for learning and prediction. We assume that the number of samples, $n$, needed to achieve the prescribed error $\epsilon$ is of the order $O(1/\epsilon)$, the same for all methods. Furthermore, we make no other regularity assumption about margin properties or the kernel matrix such as fast spectrum decay. Thus the required number of random feature (or ranks), $r$, will be of the order $O(n)=O(1/\epsilon)$~\cite{DriMah05,CorMohTal10,RahRec09,LopSraSmoGhaetal14}.

We will pick a few representative algorithms for comparison, namely, \emph{(i)} NORMA~\cite{KivSmoWil04b}: kernel methods trained with stochastic functional gradients; \emph{(ii)} k-SDCA~\cite{ShaTon13b}: kernelized version of stochastic dual coordinate ascend; \emph{(iii)} r-SDCA: first approximate the kernel function with random features, and then run stochastic dual coordinate ascend; \emph{(iv)} n-SDCA: first approximate the kernel matrix using Nystr{\"o}m's method, and then run stochastic dual coordinate ascend; similarly we will combine Pegasos algorithm~\cite{ShaSinSre07}, stochastic block mirror descent (SBMD)~\cite{DanLan13}, and random block coordinate descent (RBCD)~\cite{Nesterov12} with random features and Nystr{\"o}m's method, and obtain \emph{(v)} r-Pegasos, \emph{(vi)} n-Pegasos, \emph{(vii)} r-SBMD, \emph{(viii)} n-SBMD, \emph{(ix)} r-RBCD, and \emph{(x)} n-RBCD, respectively. The comparisons are summarized below in Table.~\ref{table:tradeoff}\footnote{We only considered general kernel 
algorithms in this section. For some specific loss functions, \eg, hinge-loss, there are algorithms proposed to achieve better memory saving with extra training cost, such as support vector reduction technique~\cite{CotShaSre13}.}

\begin{table}[tb!]
\vspace{-5mm}
\caption{Comparison of Computation and Memory Requirements}
\label{table:tradeoff}
\begin{center}
\begin{tabular}{c|c|cc|cc}
\hline
Algorithms  &\multicolumn{1}{c|}{Preprocessing} &\multicolumn{2}{c|}{Total Computation Cost} &\multicolumn{2}{c}{Total Memory Cost}\\
\cline{3-6}
                                  &Computation   &Training              &Prediction     &Training &Prediction\\
\hline
Doubly SGD    &$O(1)$  &$O(d/\epsilon^2)$ &$O(d/\epsilon)$ &$O(1/\epsilon)$ &$O(1/\epsilon)$\\
NORMA    &$O(1)$  &$O(d/\epsilon^2)$ &$O(d/\epsilon)$ &$O(d/\epsilon)$ &$O(d/\epsilon)$\\
k-SDCA   &$O(1)$  &$O(d/\epsilon^2\log(\frac{1}{\epsilon}))$ &$O(d/\epsilon)$ &$O(d/\epsilon)$ &$O(d/\epsilon)$\\
r-SDCA   &$O(1)$  &$O(d/\epsilon^2\log(\frac{1}{\epsilon}))$ &$O(d/\epsilon)$ &$O(1/\epsilon)$ &$O(1/\epsilon)$ \\
n-SDCA   &$O(1/\epsilon^3)$  &$O(d/\epsilon^2\log(\frac{1}{\epsilon}))$ &$O(d/\epsilon)$ &$O(1/\epsilon)$       &$O(1/\epsilon)$ \\
r-Pegasos   &$O(1)$  &$O(d/\epsilon^2)$ &$O(d/\epsilon)$ &$O(1/\epsilon)$ &$O(1/\epsilon)$ \\
n-Pegasos   &$O(1/\epsilon^3)$  &$O(d/\epsilon^2)$ &$O(d/\epsilon)$ &$O(1/\epsilon)$       &$O(1/\epsilon)$ \\
r-SBMD  &$O(1)$  &$O(d/\epsilon^2)$ &$O(d/\epsilon)$ &$O(1/\epsilon)$       &$O(1/\epsilon)$ \\
n-SBMD  &$O(1/\epsilon^3)$  &$O(d/\epsilon^2)$ &$O(d/\epsilon)$ &$O(1/\epsilon)$       &$O(1/\epsilon)$ \\
r-RBCD &$O(1)$  &$O(d/\epsilon^2\log(\frac{1}{\epsilon}))$ &$O(d/\epsilon)$ &$O(1/\epsilon)$       &$O(1/\epsilon)$ \\
n-RBCD &$O(1/\epsilon^3)$  &$O(d/\epsilon^2\log(\frac{1}{\epsilon}))$ &$O(d/\epsilon)$ &$O(1/\epsilon)$       &$O(1/\epsilon)$ \\
\hline
\end{tabular}
\vspace{-5mm}
\end{center}
\end{table}
\begin{table}[tb!]
\caption{Comparison of Computation and Memory Requirement Per Iteration. $b$ denotes the block size in algorithms SBMD and RBCD.}
\vspace{-5mm}
\label{table:res_per_iter}
\begin{center}
\begin{tabular}{c|c|c|c}
\hline
Algorithms  &Computation per Iteration     & Memory per Iteration &Iteration \#\\
\hline
Doubly SGD &$\Theta(dt + t + t)$  &$\Theta(t)$ & $O(1/\epsilon)$\\
r-SDCA     &$\Theta(dn + n + n)$  &$\Theta(n)$ & $O(1/\epsilon\log(\frac{1}{\epsilon}))$\\
r-Pegasos  &$\Theta(dn + n + n)$  &$\Theta(n)$ & $O(1/\epsilon)$\\
r-SBMD     &$\Theta(dn + n + n/b)$ &$\Theta(n)$ & $O(b/\epsilon)$\\
r-RBCD     &$\Theta(dn^2+n^2+n/b)$ &$\Theta(n)$ & $O(\log(1/\epsilon))$\\
\hline
\end{tabular}
\vspace{-5mm}
\end{center}
\end{table}
From Table~\ref{table:tradeoff}, one can see that our method, r-SDCA, r-Pegasos, r-SBMD and r-RBCD achieve the best dependency on the dimension, $d$, of the data up to a log factor. However, often one is interested in increasing the number of random features as more data points are observed to obtain a better generalization ability, \eg, {\bf in streaming setting}. Then special procedures need to be designed for updating the r-SDCA, r-Pegasos, r-SBMD and r-RBCD solutions, which is not clear how to do easily and efficiently with theoretical guarantees. As a more refined comparison, our algorithm is also the cheapest in terms of per training iteration computation and memory requirement. We list the computational and memory requirements at a particular iteration $t < n$ for these five algorithms to achieve $\epsilon$ error in Table~\ref{table:res_per_iter}.

\section{Experiments}\label{sec:experiments}
We show that our method compares favorably to other scalable kernel methods in medium scale datasets, and neural nets in large scale datasets. Below is a summary of the datasets used. A ``yes'' for the last column means that virtual examples (random cropping and mirror imaging of the original pictures) are generated for training. K-ridge stands for kernel ridge regression; GPR stands for Gaussian processes regression; K-SVM stands for kernel SVM; K-logistic stands for kernel logistic regression.

\begin{table}[t!]
\begin{center}
\caption{Datasets}\label{table:datasets}
\renewcommand{\tabcolsep}{4pt}
\begin{tabular}{cc|c|cccc}
  \hline
  & Name & Model & \# of samples & Input dim & Output range & Virtual\\
  \hline
  (1) & Synthetic& GPR & $2^{11}$ & $2$ & $[-1, 1.3]$ & no\\
  (2) & Synthetic& K-ridge& $2^{20}$ & $2$ & $[-1, 1.3]$ & no\\
  (3) & Adult & K-SVM & 32K & 123 &$\cbr{-1, 1}$ & no\\
  (4) & MNIST 8M 8 vs. 6 & K-SVM&1.6M & 784 & $\cbr{-1, 1}$ & yes\\
  (5) & Forest  & K-SVM & 0.5M & 54 & $\cbr{-1, 1}$ & no\\
  (6) & MNIST 8M & K-logistic& 8M &1568 & $\cbr{0,\ldots,9}$ & yes\\
  (7) & CIFAR 10 & K-logistic& 60K &2304 & $\cbr{0,\ldots,9}$ & yes\\
  (8) & ImageNet & K-logistic& 1.3M &9216 & $\cbr{0,\ldots,999}$ & yes\\
  (9) & QuantumMachine & K-ridge &6K &276  & $[-800, -2000]$ & yes\\
  (10) & MolecularSpace & K-ridge & 2.3M & 2850 & $[0,13]$ & no\\
  \hline
\end{tabular}
\vspace{-5mm}
\end{center}
\end{table}

\paragraph{\bf Experiment settings.} We first justify the doubly stochastic algorithm for Gaussian processes regression on dataset (1), comparing with NORMA. The dataset is medium size, so that the closed-form for posterior is tractable. For the large-scale datasets (2) --- (5), we compare with the first seven algorithms for solving kernel methods discussed in Table~\ref{table:tradeoff}. For the algorithms based on low rank kernel matrix approximation and random features, \ie, pegasos and SDCA, we set the rank $r$ or number of random features $r$ to be $2^{8}$. We use the same batch size for both our algorithms and the competitors. We adopted two stopping criteria for different purposes. We first stopped the algorithms when they pass through the entire dataset once (SC1). This stopping criterion is designed for justifying our motivation. By investigating the performances of these algorithms with different levels of random feature approximations but the same number of training samples, we could identify that the bottleneck of the performances of the vanilla methods with explicit feature will be their approximation ability. To further demonstrate the advantages of the proposed algorithm in computational cost, we also conduct experiments on datasets (3) -- (5) running the competitors within the same time budget as the proposed algorithm (SC2). We do not count the preprocessing time of Nystr{\"o}m's method for n-Pegasos and n-SDCA, though it takes substantial amount of time. The algorithms are executed on the machine with AMD 16 2.4GHz Opteron CPUs and 200G memory. It should be noticed that this gives advantage to NORMA and k-SDCA which could save all the data in the memory. For fairness, we also record as many random features as the memory allowed.
\begin{figure}[t!]
\vspace{-5mm}
\centering
    \includegraphics[width=0.65\linewidth]{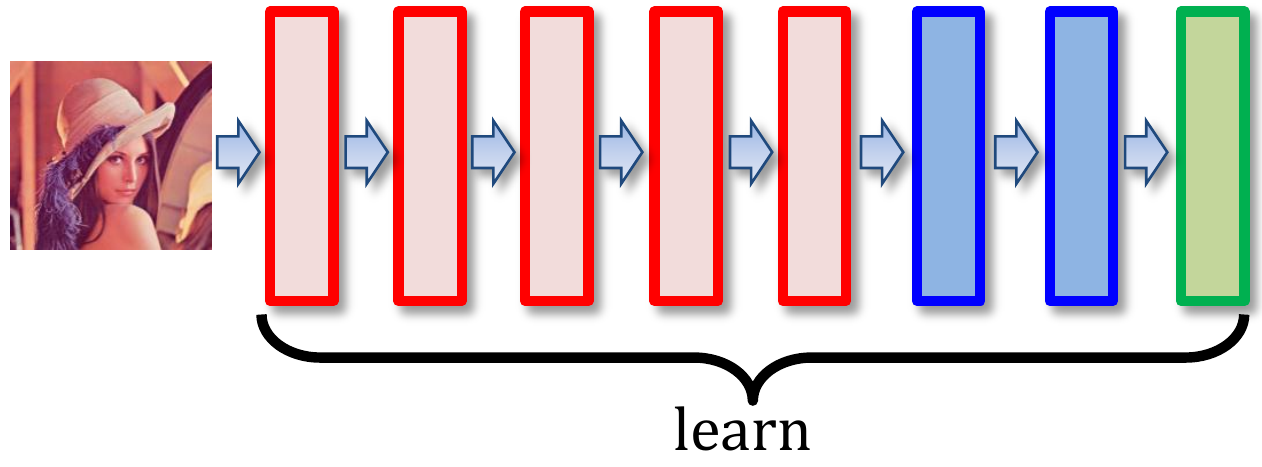} \\
    (1) Jointly Trained Neural Nets\\[1mm]
    \includegraphics[width=0.65\linewidth]{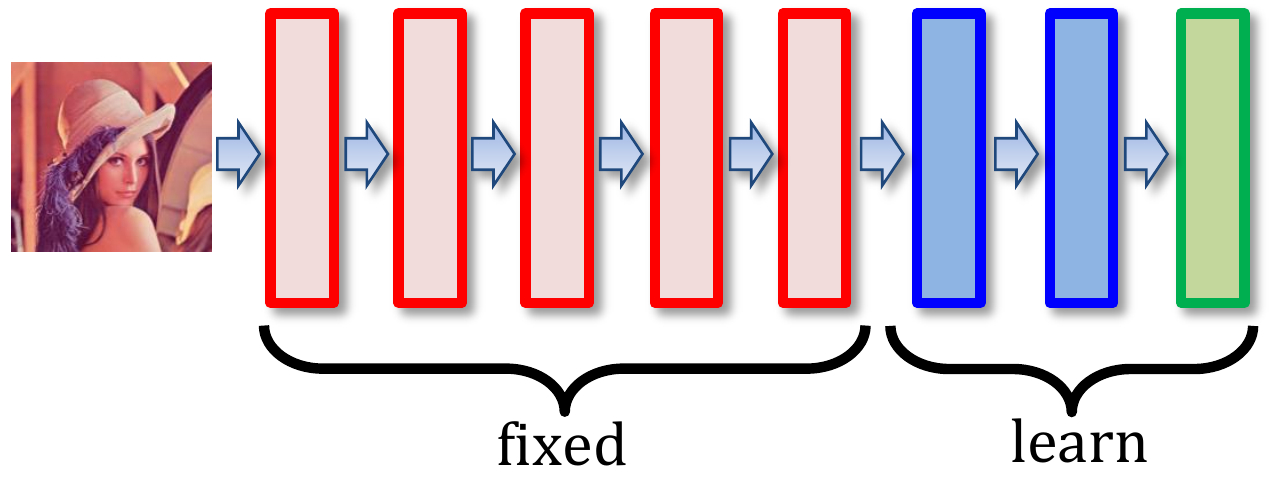} \\
    (2) {Fixed Neural Nets}\\[1mm]
    \includegraphics[width=0.55\linewidth]{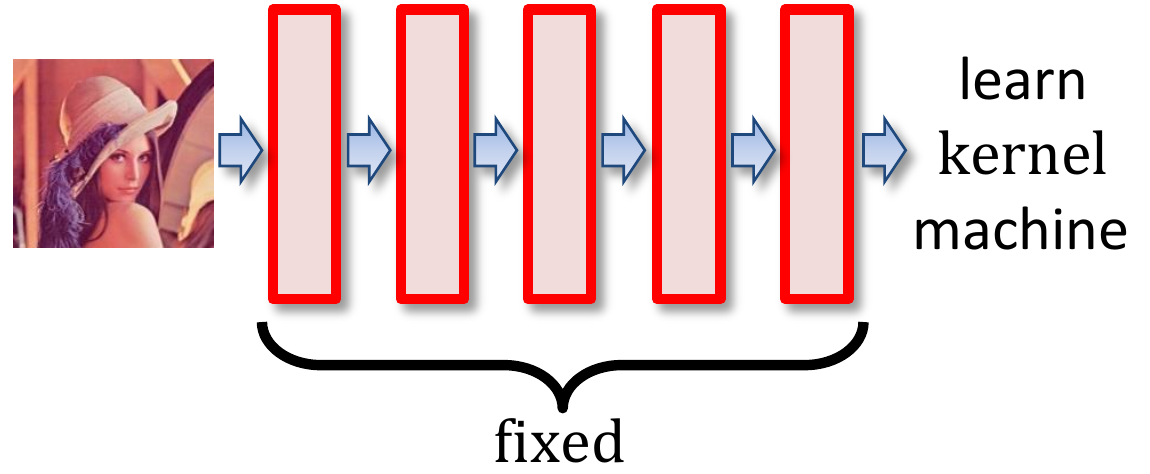} \\
    (3) {Doubly Stochastic Kernel with Fixed Nets}
\caption{Illustration of the neural nets structure in our experiments. The first several {\color{red} red} layers are convolutions with max pooling layers. The following {\color{blue} blue} layers are fully connected layes. The {\color{green} green} layer is the output layer which is multiclass logistic regression model.}
\label{fig:nn_comparison}

\end{figure}
For datasets (6) --- (8), we compare with neural nets for images (``jointly-trained''). In order to directly compare the performance of nonlinear classifiers rather than feature learning abilities, we also use the convolution layers of a trained neural net to extract features, then apply our algorithm and a nonlinear neural net on top to learn classifiers (``fixed''). The structures of these neural nets in Figure~\ref{fig:nn_comparison}. For datasets (9) and (10), we compare with the neural net described in~\cite{MonHanFazRupetal12} and use exactly the same input. In all the experiments, we select the batch size so that for each update, the computation resources can be utilized efficiently.

\subsection{Kernel Ridge Regression}

In this section, we compare our approach with alternative algorithms for kernel ridge regression on 2D synthetic dataset. The data are generated by
\begin{eqnarray*}
y = \cos(0.5\pi \|x\|_2) \exp(-0.1\pi\|x\|_2) + 0.1e
\end{eqnarray*}
where $x\in[-5, 5]^2$ and $e\sim \Ncal(0, 1)$.
We use Gaussian RBF kernel with kernel bandwidth $\sigma$ chosen to be $0.1$ times the median of pairwise distances between data points (median trick). The regularization parameter $\nu$ is set to be $10^{-6}$.  The batch size and feature block are set to be $2^{10}$.

The results are shown in Figure~\ref{fig:syn_dataset_KRR}. In Figure~\ref{fig:syn_dataset_KRR}(1), we plot the optimal functions generating the data. We justify our proof of the convergence rate in Figure~\ref{fig:syn_dataset_KRR}(2). The blue dotted line is a convergence rate of $1/t$ as a guide. $\hat f_t$ denotes the average solution after $t$-iteration, \ie, $\hat f_t(x) = \frac{1}{t}\sum_{i=1}^{t}f_i(x)$. It could be seen that our algorithm indeed converges in the rate of $O({1}/{t})$. In Figure~\ref{fig:syn_dataset_KRR} (3), we compare the first seven algorithms listed in the Table~\ref{table:tradeoff} for solving the kernel ridge regression.

The comparison on synthetic dataset demonstrates the advantages of our algorithm clearly. Our algorithm achieves comparable performance with NORMA, which uses full kernel, in similar time but less memory cost. The pegasos and SDCA using $2^8$ random or Nystr{\"o}m features perform worse.

\begin{figure*}[!t]\vspace{-5mm}
  \begin{tabular}{ccc}
    \includegraphics[width=0.315\columnwidth]{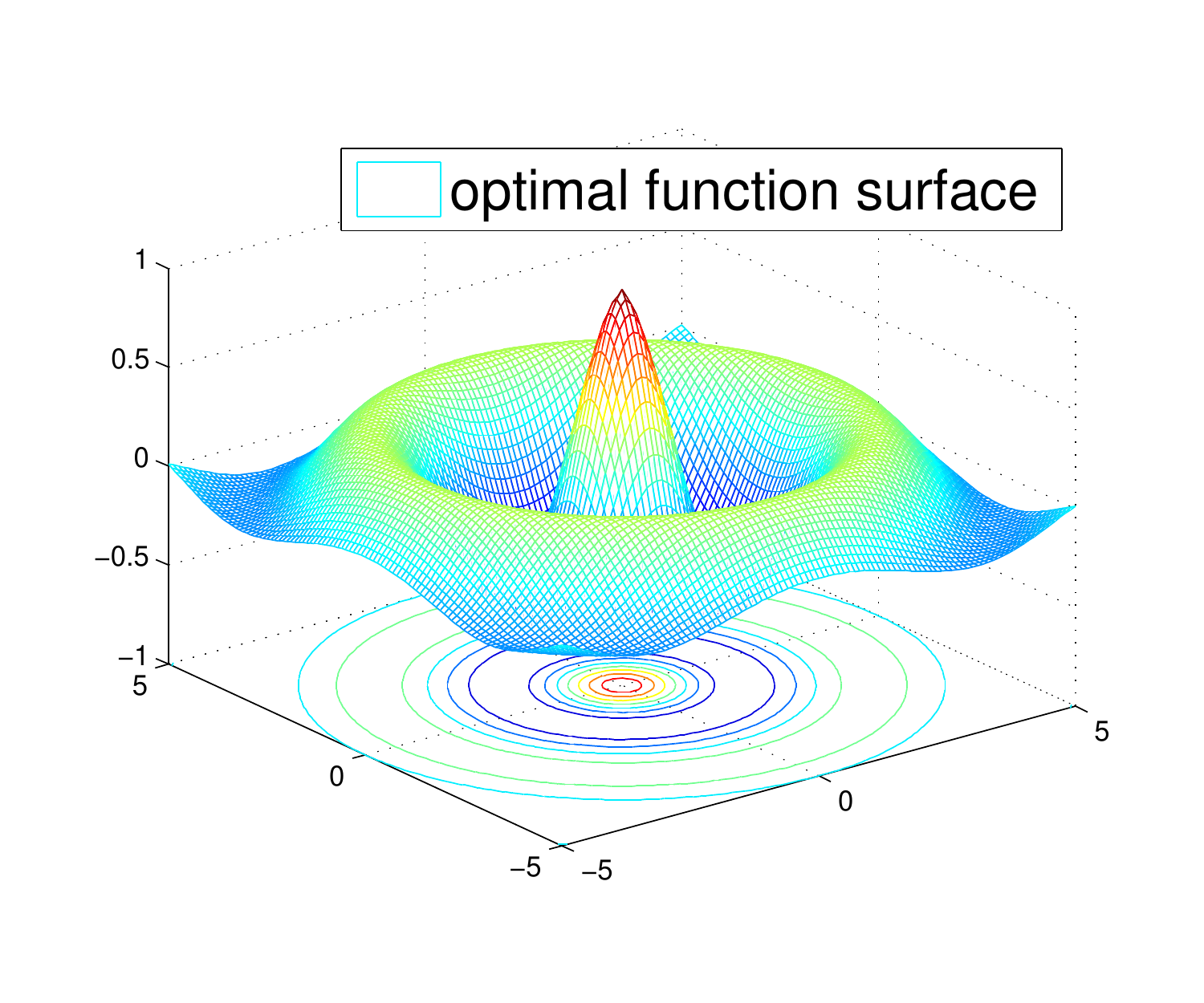} &
    \includegraphics[width=0.315\columnwidth]{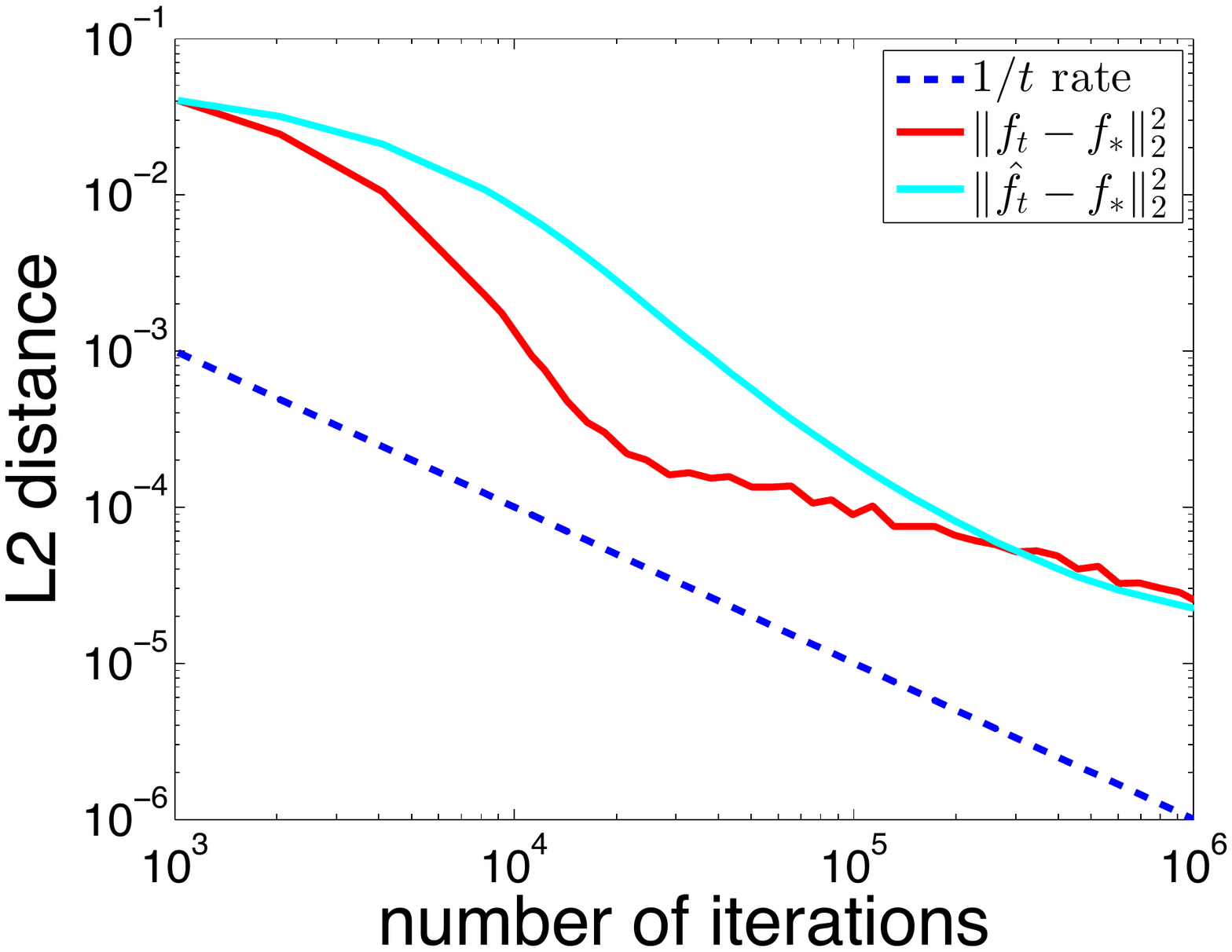} &
    \includegraphics[width=0.315\columnwidth]{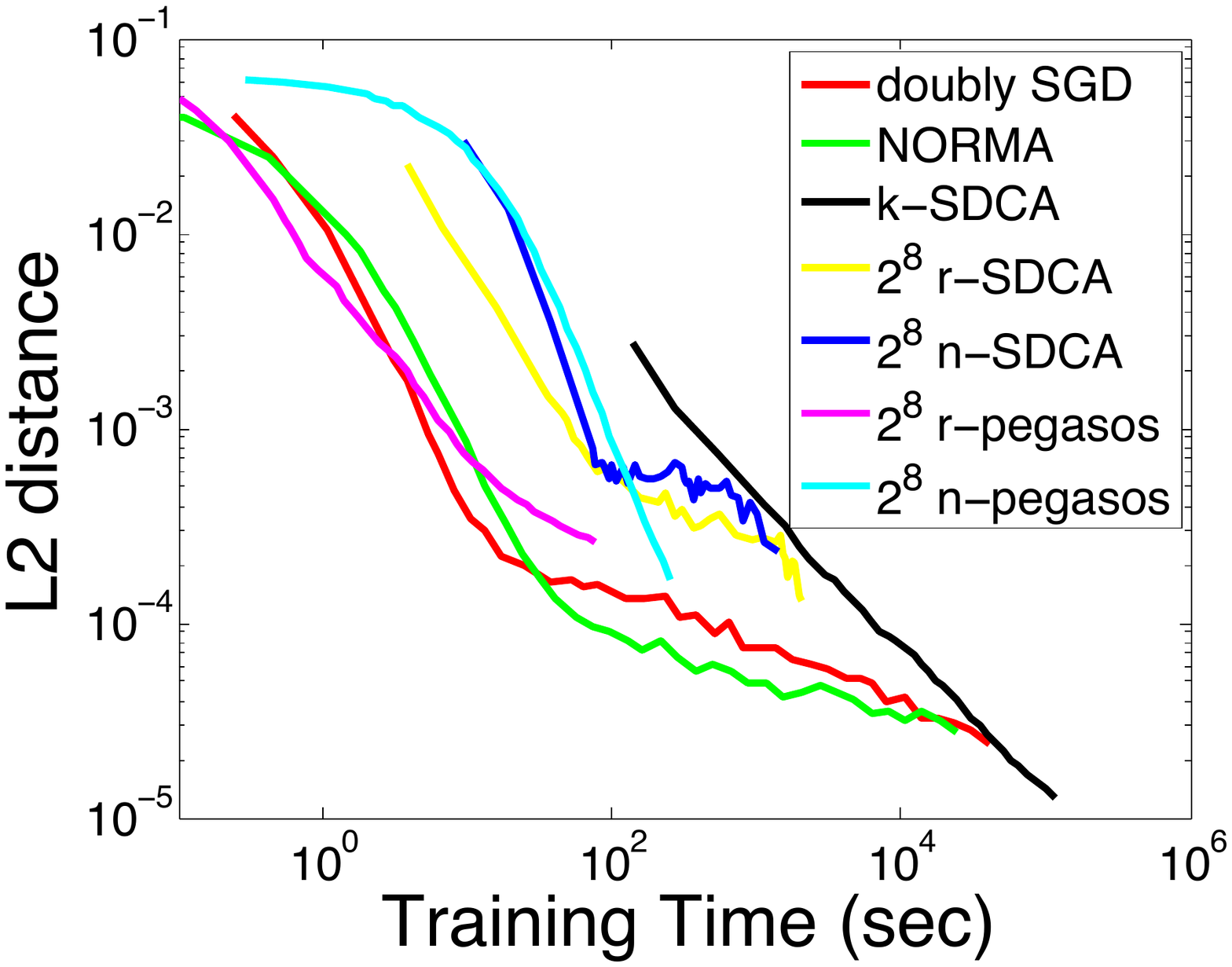} \\
    (1) 2D Synthetic Dataset & (2) Convergence Rate & (3) Accuracy vs. Time\\
  \end{tabular}
  \caption{Experimental results for kernel ridge regression on synthetic dataset.}
  \label{fig:syn_dataset_KRR}
\end{figure*}
\begin{figure*}[!t]\vspace{-5mm}
\centering
  \begin{tabular}{cc}
    \includegraphics[width=0.38\columnwidth]{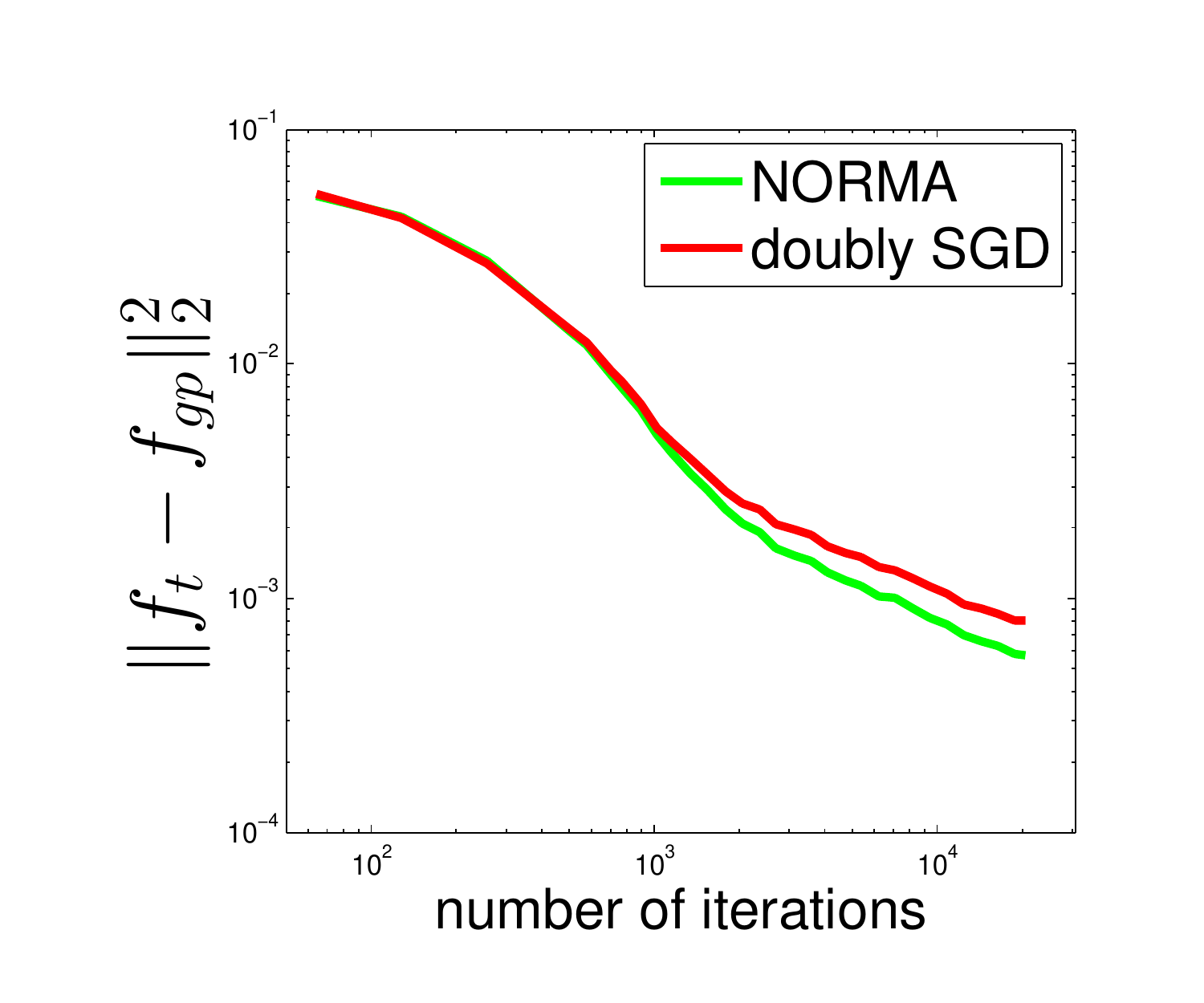} &
    \includegraphics[width=0.38\columnwidth]{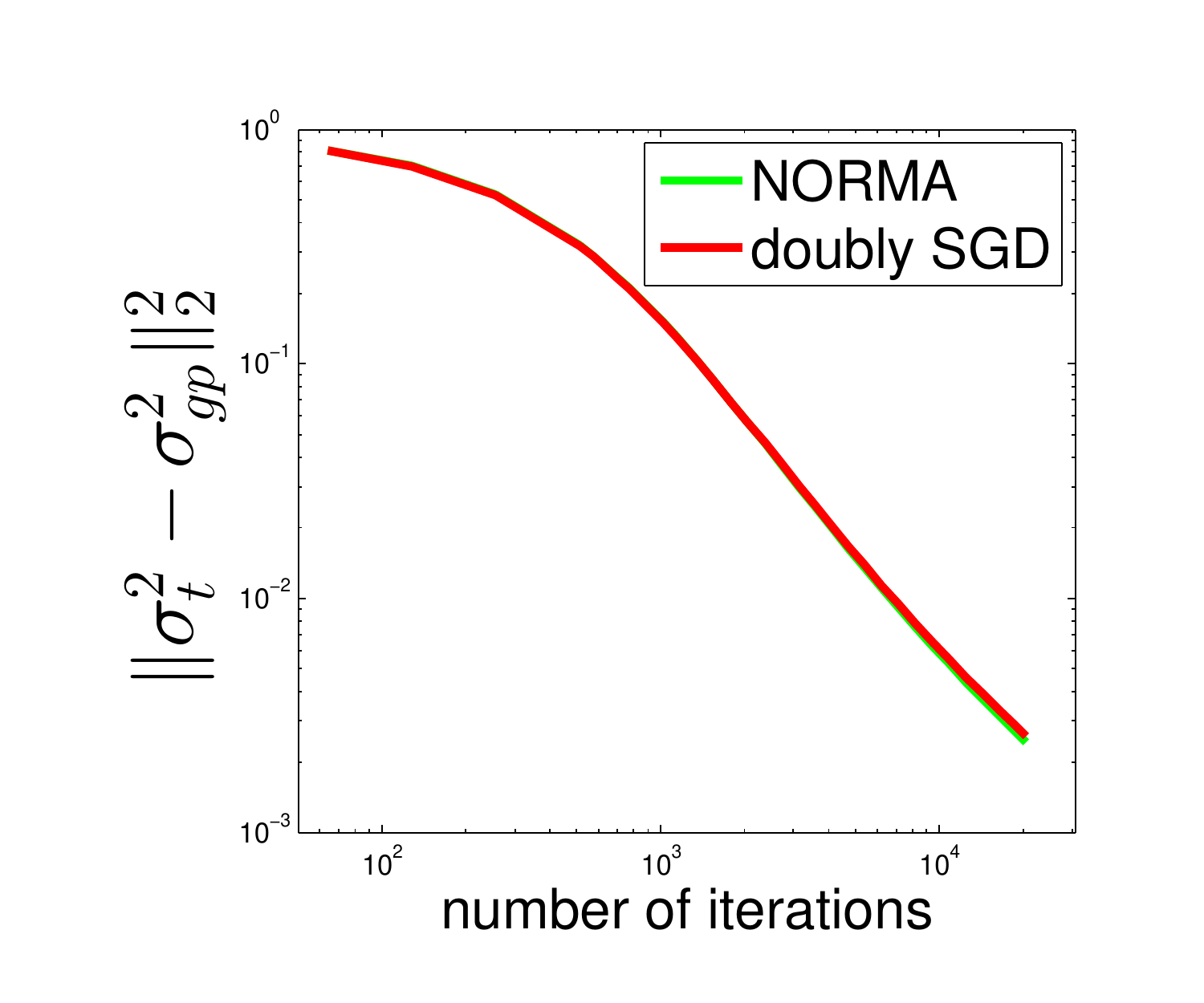} \\
    (1) Posterior Mean Convergence & (2) Posterior Variance Convergence\\
  \end{tabular}
  \caption{Experimental results for Gaussian Processes regression.}
  \label{fig:syn_dataset_GP}
\end{figure*}

\subsection{Gaussian Processes Regression}

As we introduced in Section.~(\ref{sec:doubly_sgd}), the mean and variance of posterior of Gaussian processes for regression problem can be formulated as solutions to some convex optimization problems. We conduct experiments on synthetic dataset for justification. Since the task is computing the posterior, we evaluate the performances by comparing the solutions to the posterior mean and variance, denoted as $f_{gp}$ and $\sigma_{gp}^2$, obtained by closed-form~(\ref{eq:gpr_posterior}). We select $2^{11}$ data from the same model in previous section for training and $2^{10}$ data for testing, so that the closed-form of posterior is tractable. We use Gaussian RBF kernel with kernel bandwidth $\sigma$ chosen by median trick. The noise level $\sigma^2$ is set to be $0.1$. The batch size is set to be $64$ and feature block is set to be $512$. 

We compared the doubly stochastic algorithm with NORMA. The results are shown in Figure~\ref{fig:syn_dataset_GP}. Both the doubly stochastic algorithm and NORMA converge to the posterior, and our algorithm achieves comparable performance with NORMA in approximating both the mean and variance.

\subsection{Kernel Support Vector Machine}

We evaluate our algorithm solving kernel SVM on three datasets (3)--(5) comparing with other several algorithms listed in Table~\ref{table:tradeoff} using stopping criteria SC1 and SC2.

\begin{figure*}[!t]
{\centering
\begin{tabular}{ccc}
    \includegraphics[width=0.31\textwidth]{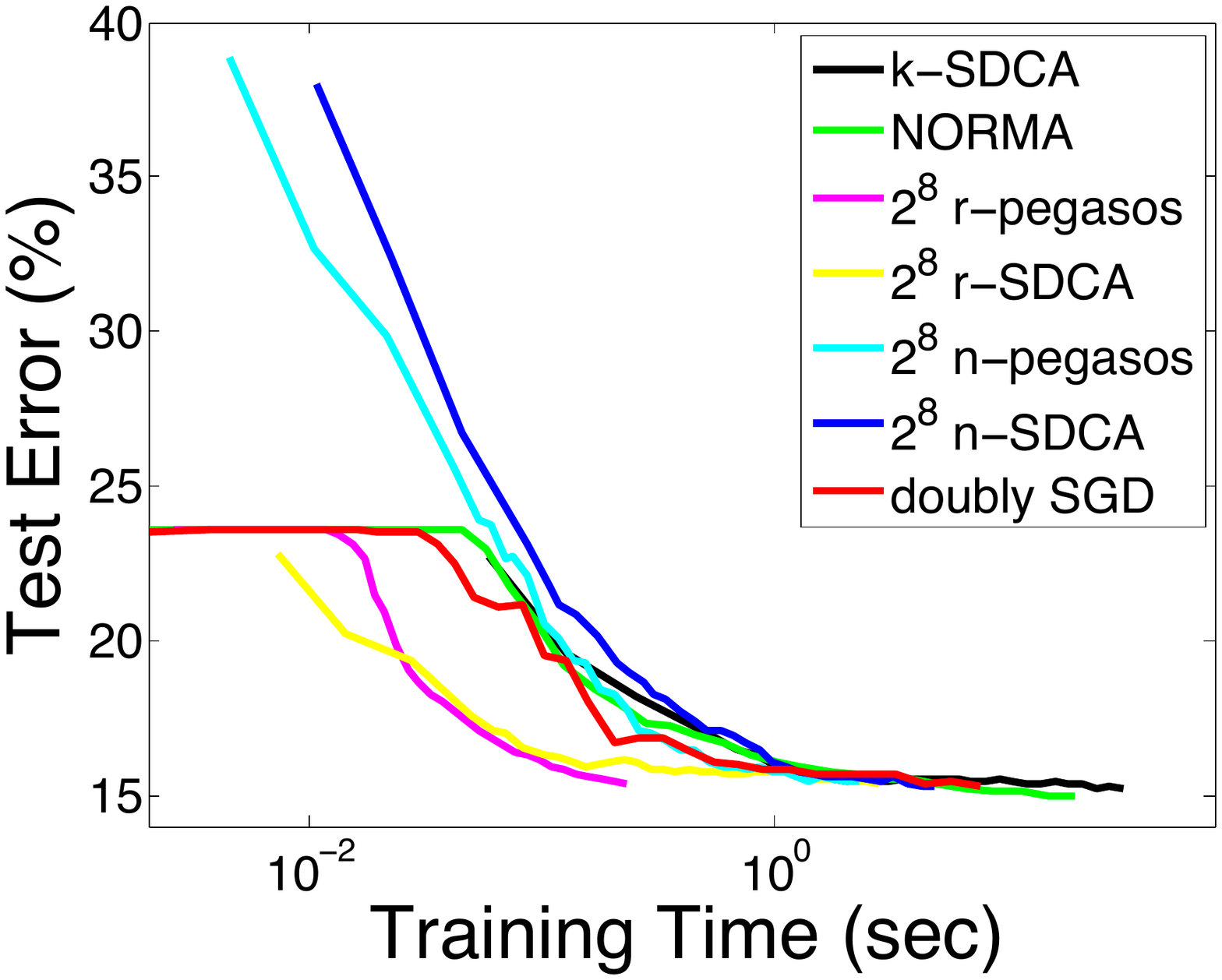} &
    \includegraphics[width=0.316\textwidth]{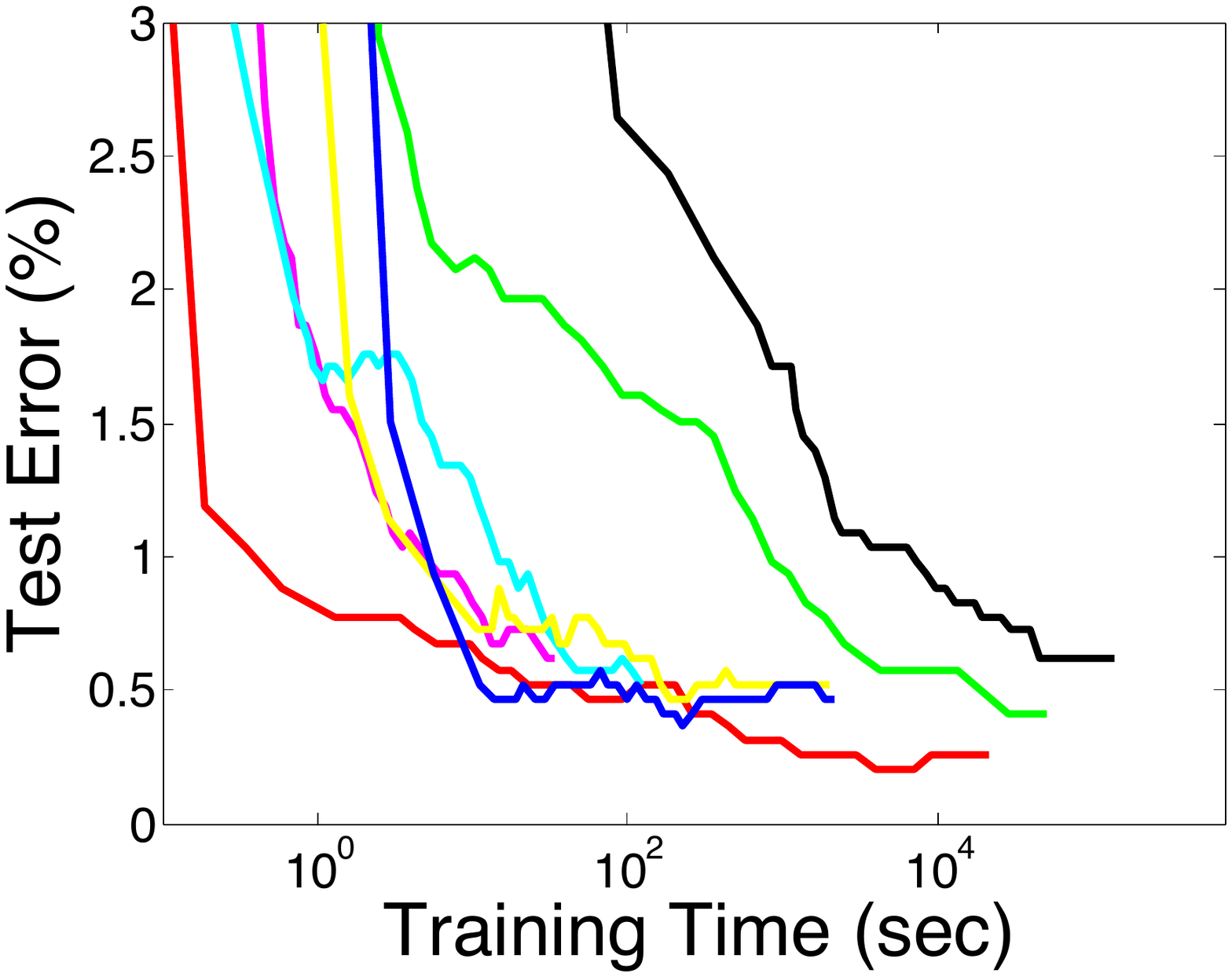}&
    \includegraphics[width=0.313\textwidth]{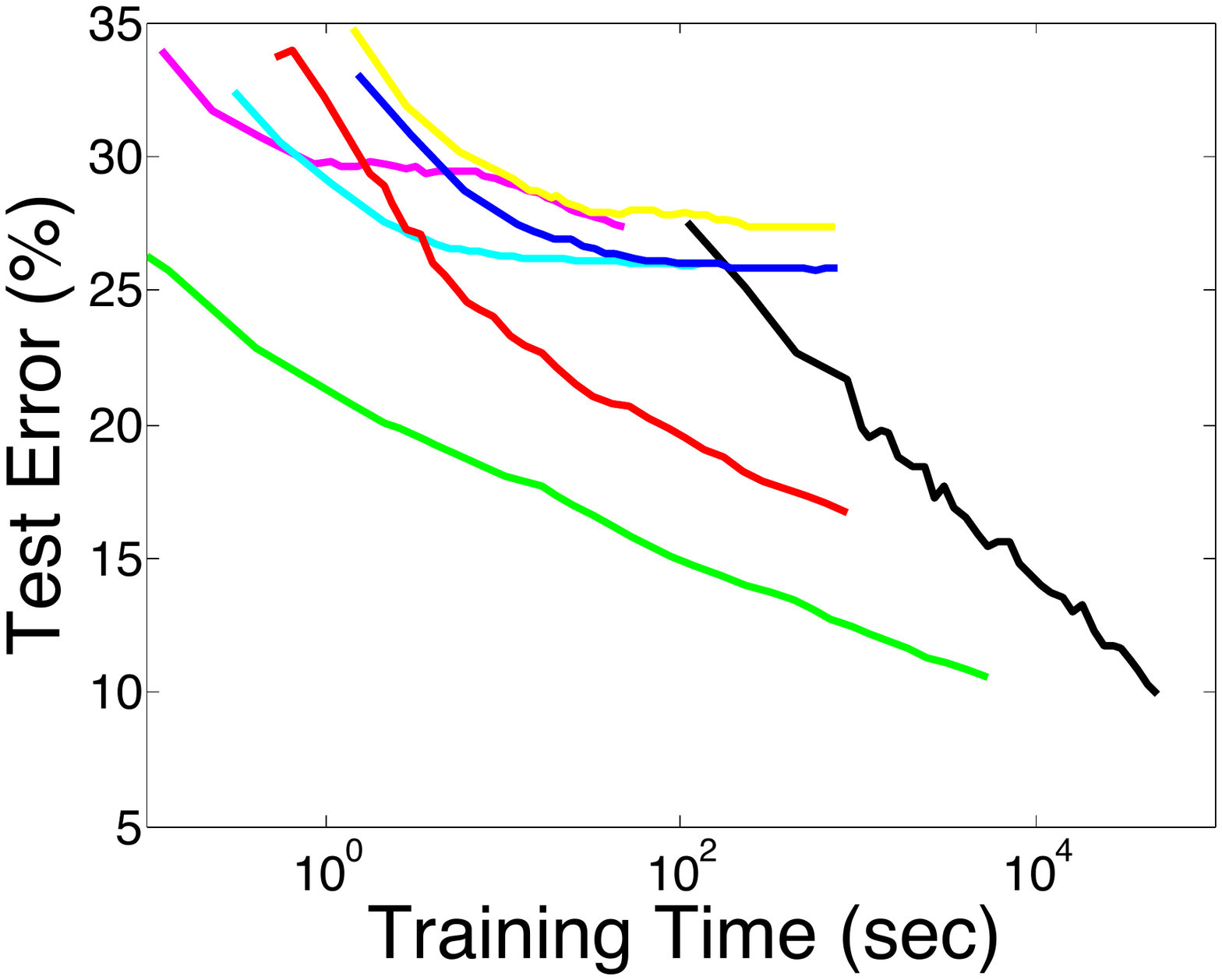}  \\
    SC1:  (1) Adult  & (2) MNIST 8M 8 vs. 6 &(3) Forest \\[-1mm]
  \end{tabular}
  \begin{tabular}{ccc}
    \includegraphics[width=0.315\textwidth]{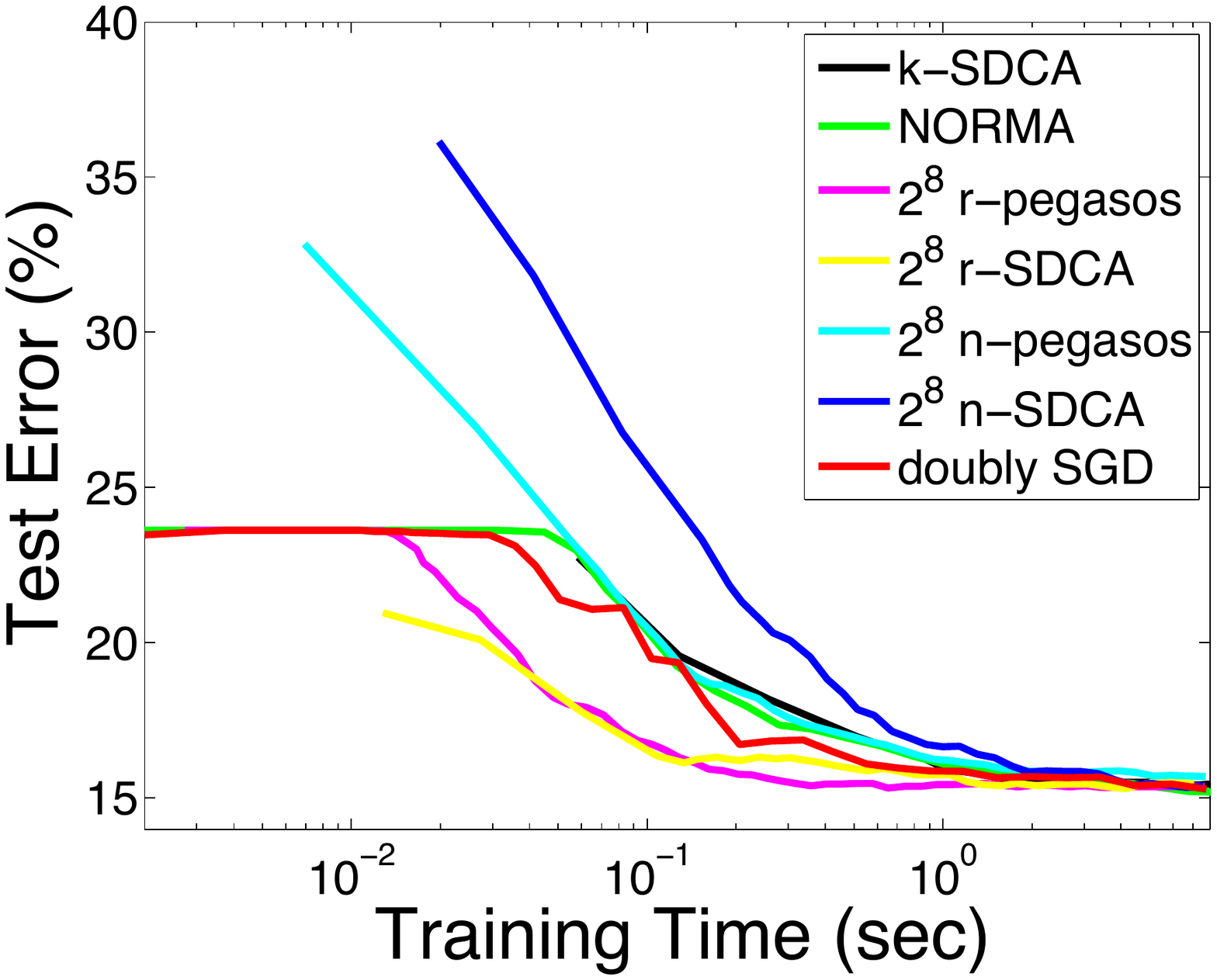} &
    \includegraphics[width=0.315\textwidth]{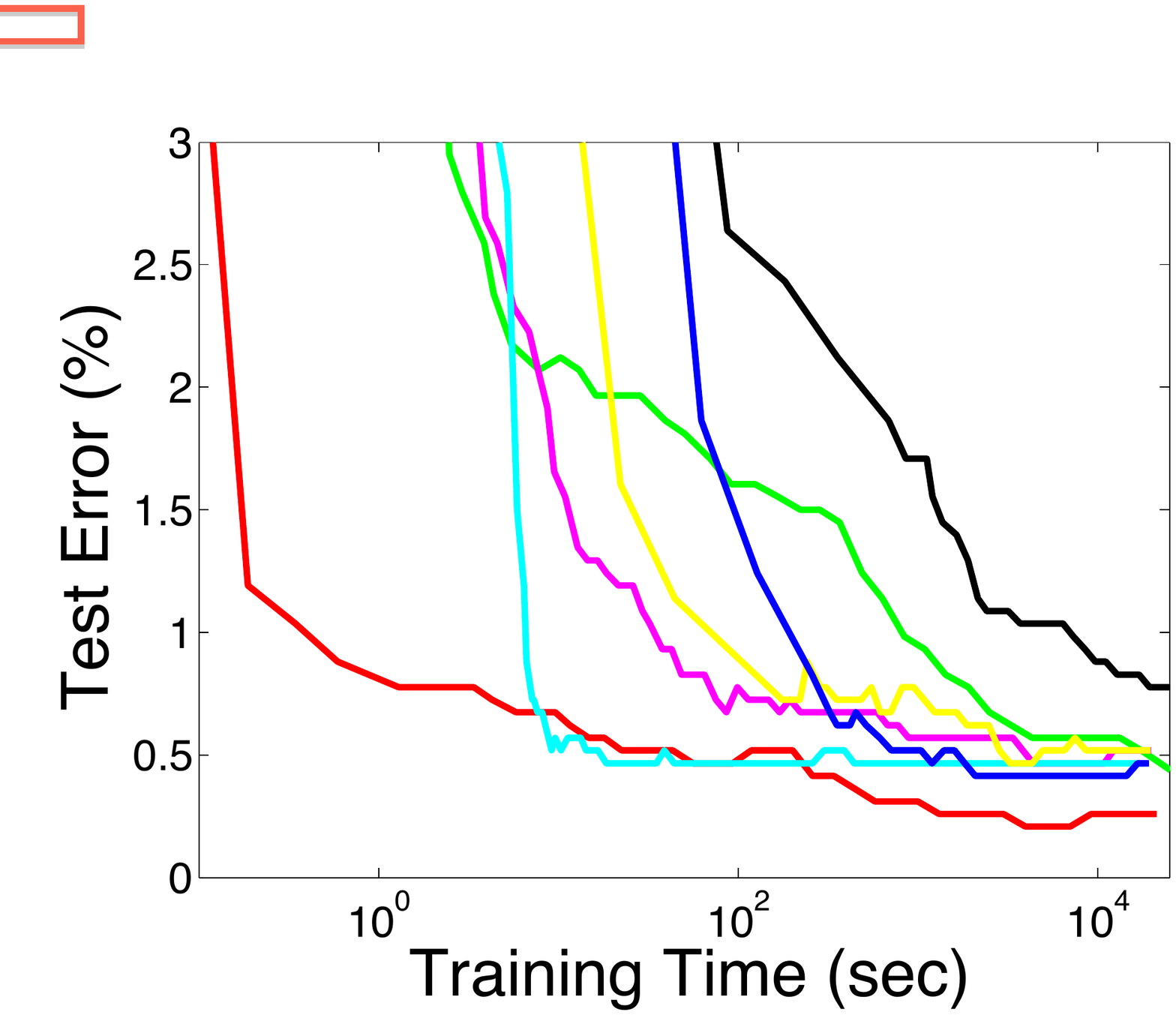} &
    \includegraphics[width=0.315\textwidth]{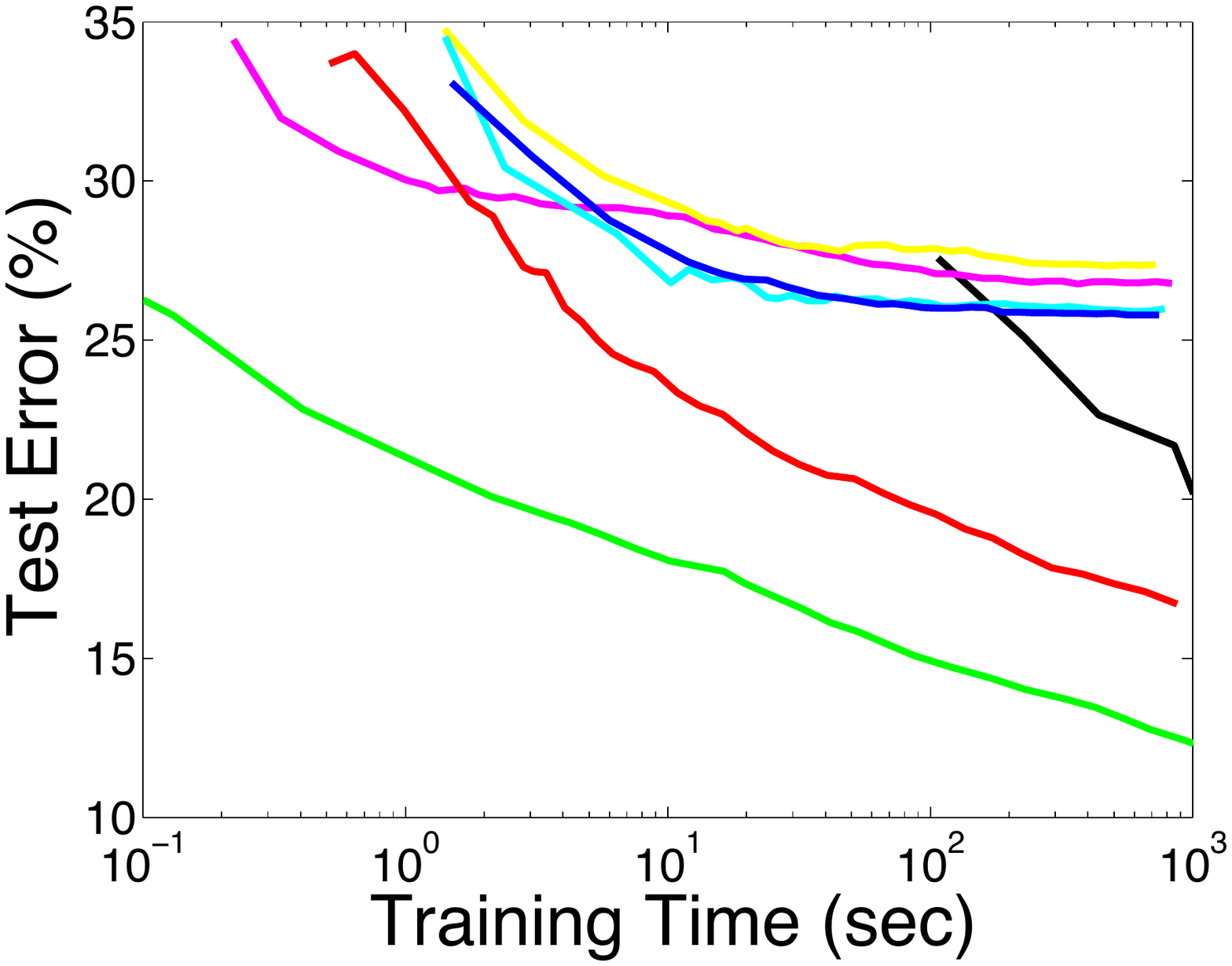} \\
    SC2:   (4) Adult & (5)MNIST 8M 8 vs. 6 & (6) Forest.\\[-3mm]
  \end{tabular}
  \caption{Comparison with other kernel SVM solvers on datasets (3) -- (5) with two different stopping criteria.}
  \label{fig:ksvm_results}}
\end{figure*}

\paragraph{\bf Adult.} We use Gaussian RBF kernel with kernel bandwidth obtained by median trick. The regularization parameter $\nu$ is set to be $1/(100n)$ where $n$ is the number of training samples. We set the batch size to be $2^{6}$ and feature block to be $2^5$. After going through the whole dataset one pass, the best error rate is achieved by NORMA and k-SDCA which is $15\%$ while our algorithm achieves comparable result $15.3\%$. The performances are illustrated in Figure~\ref{fig:ksvm_results}(1). Under the same time budget, all the algorithms perform similarly in Figure~\ref{fig:ksvm_results}(4). The reason of flat region of r-pegasos, NORMA and the proposed method on this dataset is that Adult dataset is unbalanced. There are about $24\%$ positive samples while $76\%$ negative samples. 
\vspace{-3mm}
\paragraph{\bf MNIST 8M 8 vs. 6.}  We first reduce the dimension to 50 by PCA and use Gaussian RBF kernel with kernel bandwidth $\sigma=9.03$ obtained by median trick. The regularization parameter $\nu$ is set to be $1/n$ where $n$ is the number of training samples. We set the batch size to be $2^{10}$ and feature block to be $2^8$. The results are shown in Figure~\ref{fig:ksvm_results}(2) and (5) under SC1 and SC2 respectively. Under both these two stopping criteria, our algorithm achieves the best test error $0.26\%$ using similar training time.
\vspace{-3mm}
\paragraph{\bf Forest.} We use Gaussian RBF kernel with kernel bandwidth obtained by median trick. The regularization parameter $\nu$ is set to be $1/n$ where $n$ is the number of training samples. We set the batch size to be $2^{10}$ and feature block to be $2^8$. In Figure~\ref{fig:ksvm_results}(3), we shows the performances of all algorithms using SC1. NORMA and k-SDCA achieve the best error rate, which is $10\%$, while our algorithm achieves around $15\%$, but still much better than the pegasos and SDCA with $2^8$ features. In the same time budget, the proposed algorithm performs better than all the alternatives except NORMA in Figure~\ref{fig:ksvm_results}(6).

As seen from the performance of pegasos and SDCA on Adult and MNIST, using fewer features does not deteriorate the classification error. This might be because there are cluster structures in these two binary classification datasets. Thus, they prefer low rank approximation rather than full kernel. Different from these two datasets, in the forest dataset, algorithms with full kernel, \ie, NORMA and k-SDCA, achieve best performance. With more random features, our algorithm performs much better than pegasos and SDCA under both SC1 and SC2. Our algorithm is preferable for this scenario, \ie, huge dataset with sophisticated decision boundary. Although utilizing full kernel could achieve better performance, the computation and memory requirement for the kernel on huge dataset are costly. To learn the sophisticated boundary while still considering the computational and memory cost, we need to efficiently approximate the kernel in $O(\frac{1}{\epsilon})$ with $O(n)$ random features at least. Our algorithm could handle so many random features efficiently in both computation and memory cost, while for pegasos and SDCA such operation is prohibitive.

\subsection{Classification Comparisons to Convolution Neural Networks}

\begin{figure*}[!t]
{\centering
   \begin{tabular}{ccc}
       \includegraphics[width=0.315\columnwidth]{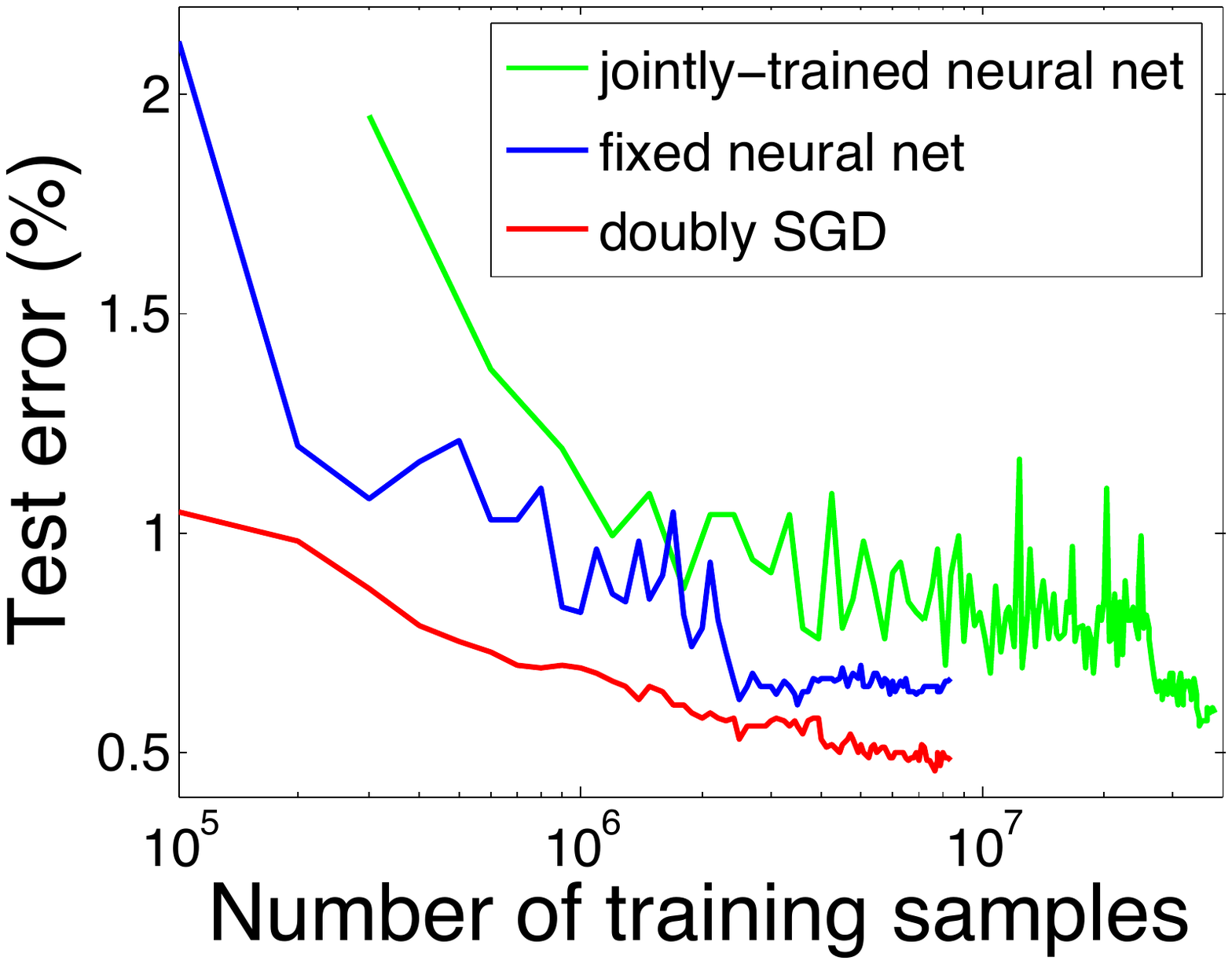}  &
          \includegraphics[width=0.315\columnwidth]{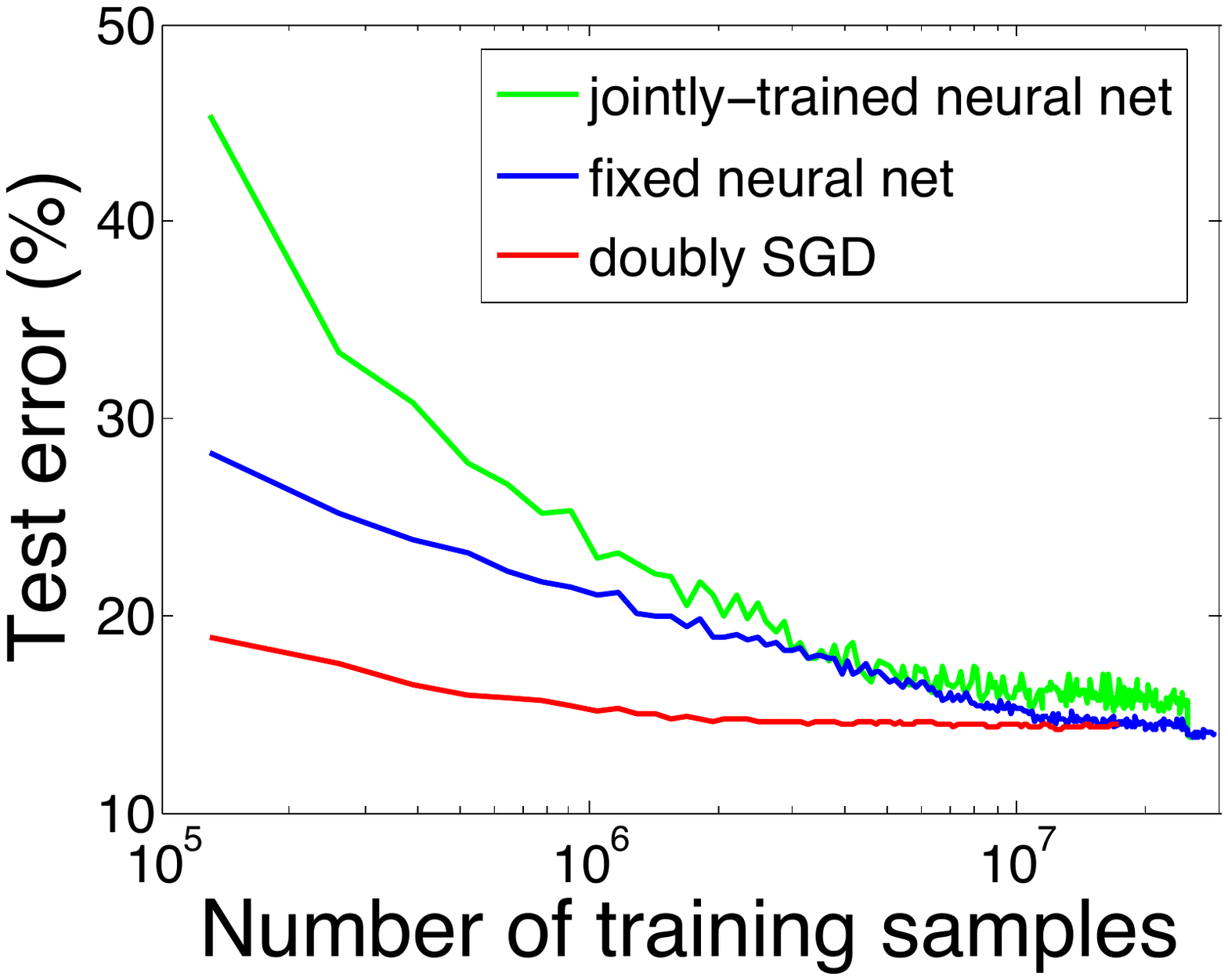} &
    \includegraphics[width=0.315\columnwidth]{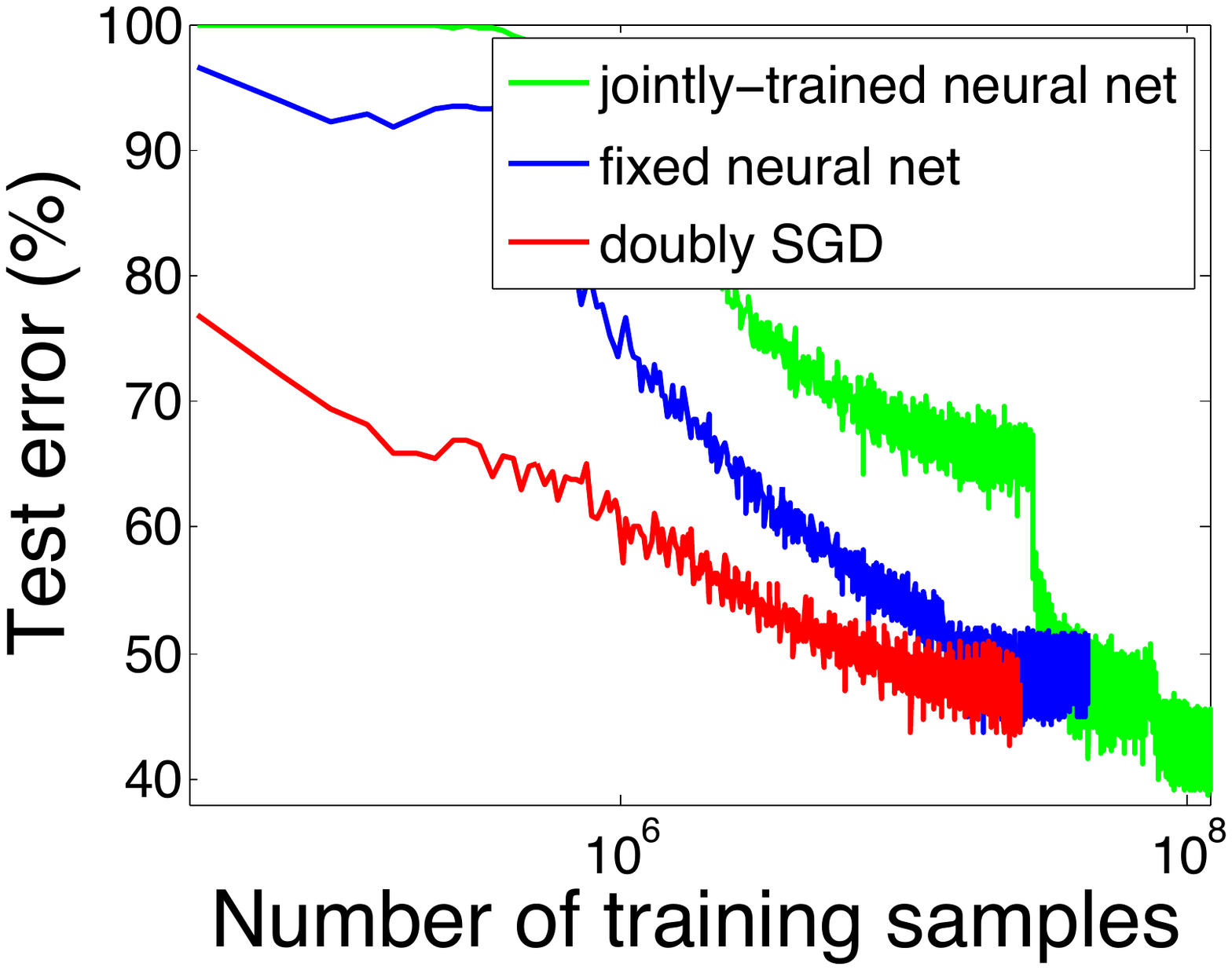} \\
     (1) MNIST 8M  &(2) CIFAR 10 & (3) ImageNet\\[3mm]
    \end{tabular}
  \begin{tabular}{cc}
    \includegraphics[width=0.315\textwidth]{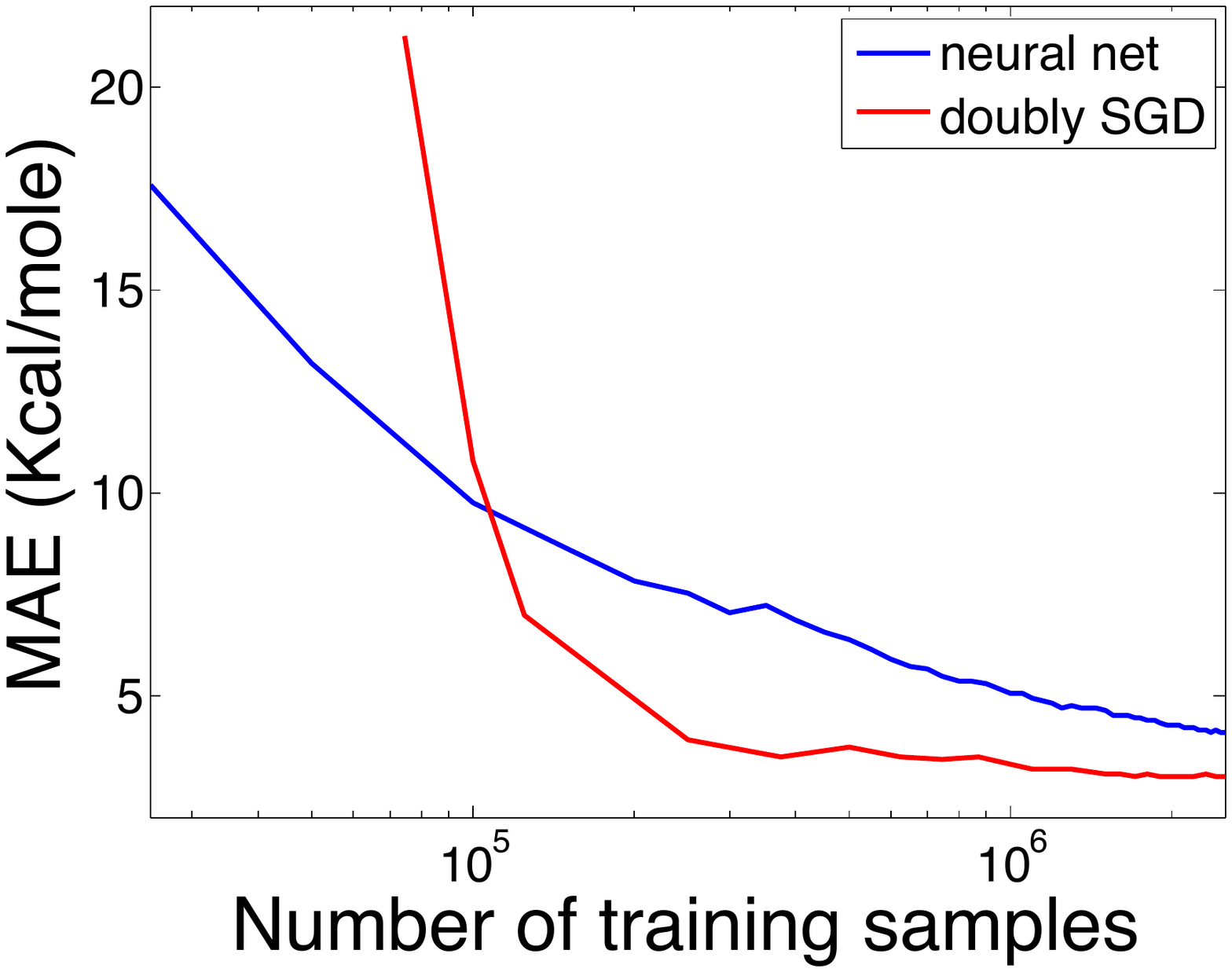} &
    \includegraphics[width=0.315\textwidth]{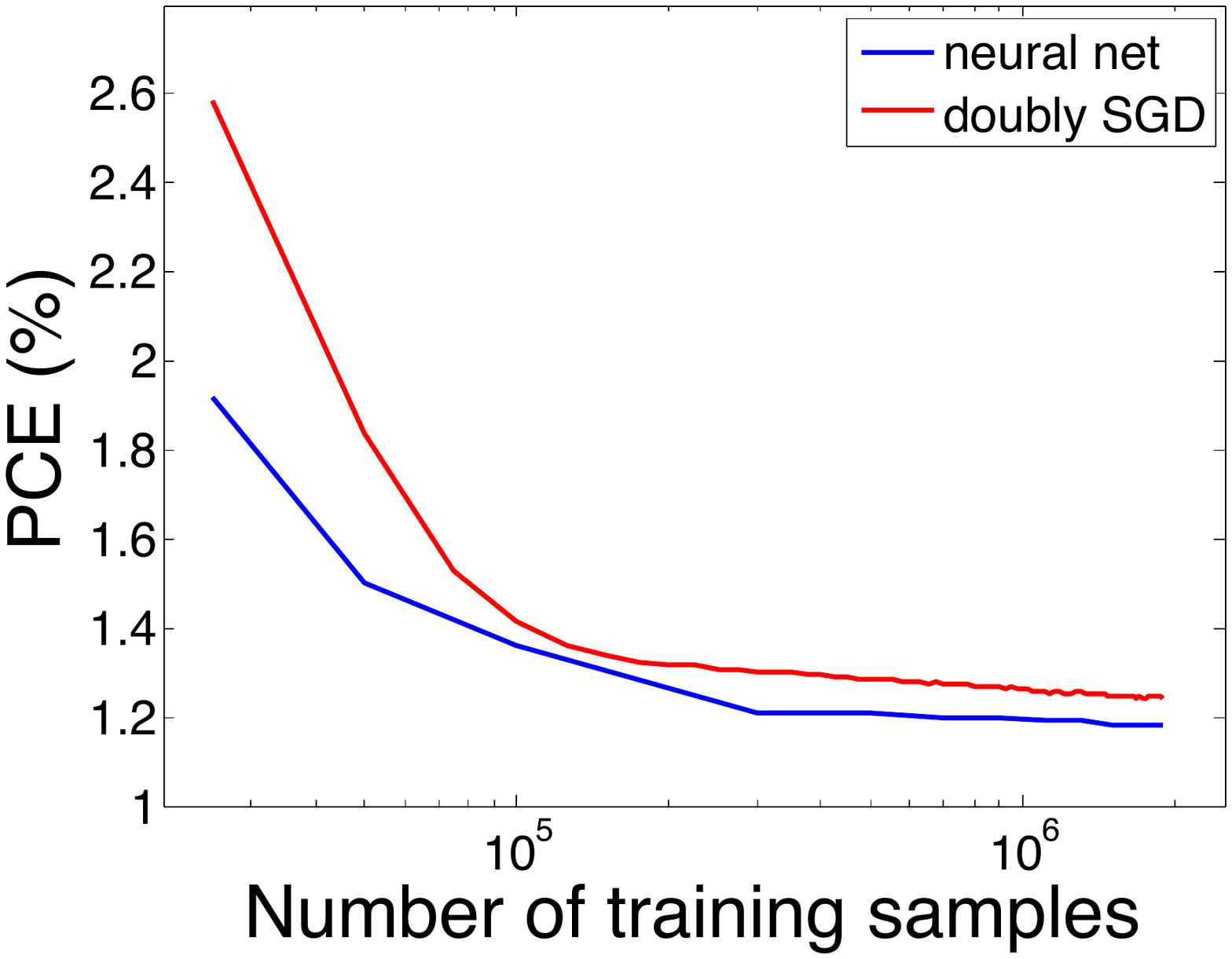} \\
  (4) QuantumMachine & (5) MolecularSpace.\\
  \end{tabular}
  \caption{Comparison with Neural Networks on datasets (6) -- (10).}
  \label{fig:results}}
\end{figure*}

We also compare our algorithm with the state-of-the-art neural network. In these experiments, the block size is set to be $O(10^4)$. Compared to the number of samples, $O(10^8)$, this block size is reasonable.

\paragraph{\bf MNIST 8M.} In this experiment, we compare to a variant of LeNet-5~\cite{LeCBotBenHaf98}, where all tanh units are replaced with rectified linear units. We also use more convolution filters and a larger fully connected layer. Specifically, the first two convolutions layers have 16 and 32 filters, respectively, and the fully connected layer contains 128 neurons. We use kernel logistic regression for the task. We extract features from the last max-pooling layer with dimension 1568, and use Gaussian RBF kernel with kernel bandwidth $\sigma$ equaling to four times the median pairwise distance. The regularization parameter $\nu$ is set to be $0.0005$.

 The result is shown in Figure~\ref{fig:results}(1). As expected, the neural net with pre-learned features is faster to train than the jointly-trained one. However, our method is much faster compared to both methods. In addition, it achieves a lower error rate (0.5\%) compared to the 0.6\% error provided by the neural nets.

\paragraph{\bf CIFAR 10.} In this experiment, we compare to a neural net with two convolution layers (after contrast normalization and max-pooling layers) and two local layers that achieves 11\% test error\footnote{The specification is at https://code.google.com/p/cuda-convnet/} on CIFAR 10~\cite{Krizhevsky09}. The features are extracted from the top max-pooling layer from a trained neural net with 2304 dimension. We use kernel logistic regression for this problem. The kernel bandwidth $\sigma$ for Gaussian RBF kernel is again four times the median pairwise distance. The regularization parameter $\nu$ is set to be $0.0005$. We also perform a PCA (without centering) to reduce the dimension to 256 before feeding to our method.

The result is shown in Figure~\ref{fig:results}(2). The test error for our method drops significantly faster in the earlier phase, then gradually converges to that achieved by the neural nets. Our method is able to produce the same performance within a much restricted time budget.

\paragraph{\bf ImageNet.} In this experiment, we compare our algorithm with the neural nets on the ImageNet 2012 dataset, which contains 1.3 million color images from 1000 classes. Each image is of size 256 $\times$ 256, and we randomly crop a 240 $\times$ 240 region with random horizontal flipping. The jointly-trained neural net is Alex-net~\cite{KriSutHin12}. The 9216 dimension features for our classifier and fixed neural net are from the last pooling layer of the jointly-trained neural net. The kernel bandwidth $\sigma$ for Gaussian RBF kernel is again four times the median pairwise distance. The regularization parameter $\nu$ is set to be $0.0005$.

Test error comparisons are shown in Figure~\ref{fig:results}(3). Our method achieves a test error of 44.5\% by further max-voting of 10 transformations of the test set while the jointly-trained neural net arrives at 42\% (without variations in color and illumination). At the same time, fixed neural net can only produce an error rate of 46\% with max-voting. There may be some advantages to train the network jointly such that the layers work together to achieve a better performance. Although there is still a gap to the best performance by the jointly-trained neural net, our method comes very close with much faster convergence rate. Moreover, it achieves superior performance than the neural net with pre-learned features, both in accuracy and speed.

\begin{figure*}[!t]
    \includegraphics[width=\columnwidth]{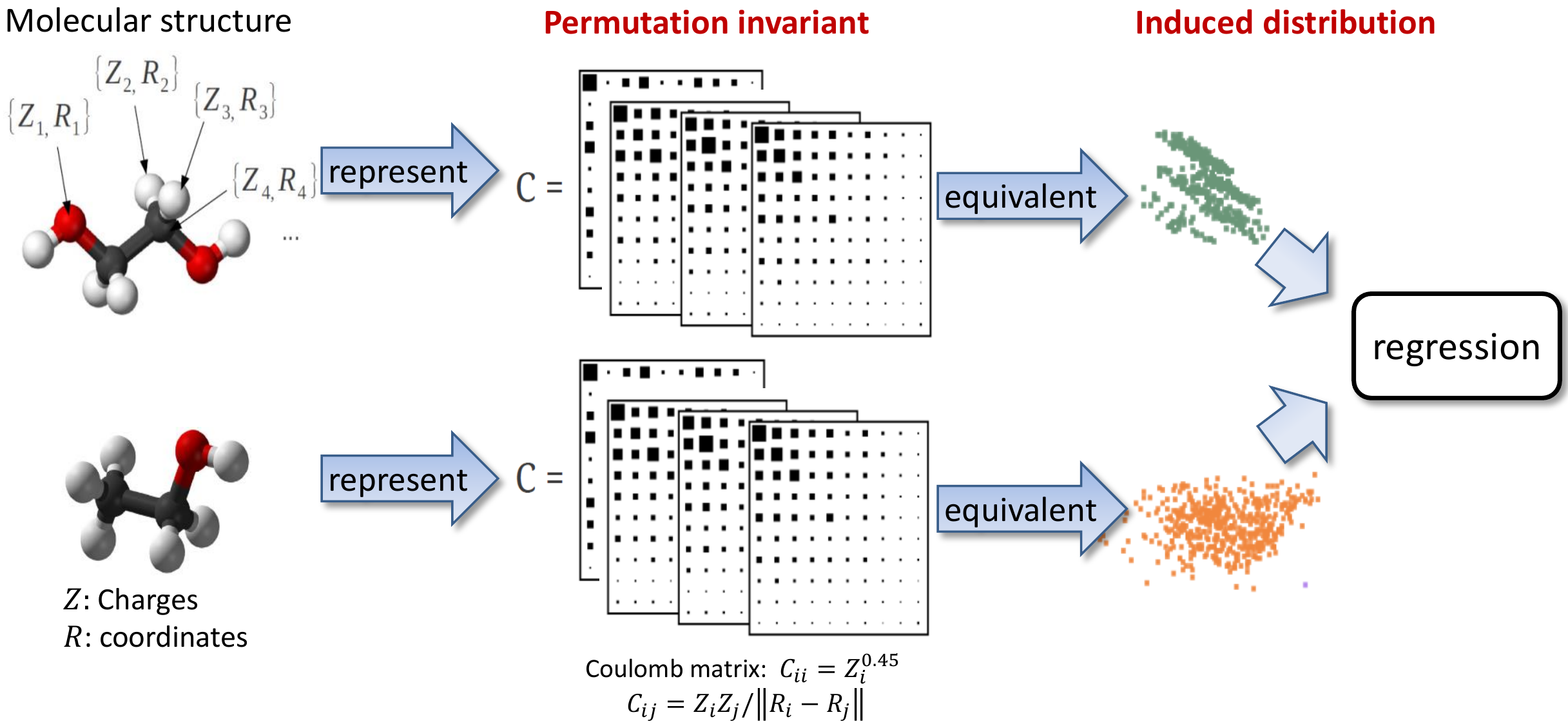}
  \caption{The computational procedure for predicting molecular property from molecular structure.}
  \label{fig:molecular_nn}
\end{figure*}

\subsection{Regression Comparisons to Neural Networks}

We test our algorithm for kernel ridge regression with neural network proposed in~\cite{MonHanFazRupetal12} on two large-scale real-world regression datasets, (9) and (10) in Table~\ref{table:datasets}. To our best knowledge, this is the first comparison between kernel ridge regression and neural network on the dataset MolecularSpace.

\paragraph{\bf QuantumMachine.} In this experiment, we use the same binary representations converted based on random Coulomb matrices as in~\cite{MonHanFazRupetal12}. We first generate a set of randomly sorted coulomb matrices for each molecule. And then, we break each dimension of the Coulomb matrix apart into steps and convert them to the binary predicates. Predictions are made by taking average of all prediction made on various Coulomb matrices of the same molecule. The procedure is illustrated in Figure.~\ref{fig:molecular_nn}. For this experiment, 40 sets of randomly permuted matrices are generated for each training example and 20 for each test example. We use Gaussian kernel with kernel bandwidth $\sigma = 60$ obtained by median trick. The batch size is set to be $50000$ and the feature block is $2^{11}$. The total dimension of random features is $2^{20}$.

The results are shown in Figure~\ref{fig:results}(4). In QuantumMachine dataset, our method achieves Mean Absolute Error (MAE) of $2.97$ kcal/mole, outperforming neural nets results, $3.51$ kcal/mole. Note that this result is already close to the $1$ kcal/mole required for chemical accuracy.

\paragraph{\bf MolecularSpace.} In this experiment, the task is to predict the power conversion efficiency (PCE) of the molecule. This dataset of 2.3 million molecular motifs is obtained from the Clean Energy Project Database. We use the same feature representation as for ``QuantumMachine'' dataset~\cite{MonHanFazRupetal12}. We set the kernel bandwidth of Gaussian RBF kernel to be $290$ by median trick. The batch size is set to be $25000$ and the feature block is $2^{11}$. The total dimension of random features is $2^{20}$.

The results are shown in Figure ~\ref{fig:results}(5). It can be seen that our method is comparable with neural network on this 2.3 million dataset.

\section{Discussion}

Our work contributes towards making kernel methods scalable for large-scale datasets. Specifically, by introducing artificial randomness associated with kernels besides the random data samples, we propose doubly stochastic functional gradient for kernel machines which makes the kernel machines efficient in both computation and memory requirement. Our algorithm successfully reduces the memory requirement of kernel machines from $O(dn)$ to $O(n)$. Meanwhile, we also show that our algorithm achieves the optimal rate of convergence, $O(1/t)$, for strongly convex stochastic optimization. We compare our algorithm on both classification and regression problems with the state-of-the-art neural networks as well as some other competing algorithms for kernel methods on several large-scale datasets. With our efficient algorithm, kernel methods could perform comparable to sophisticated-designed neural network empirically.

The theoretical analysis, which provides the rate of convergence \emph{independent} to the dimension, is also highly non-trivial. It twists martingale techniques and the vanilla analysis for stochastic gradient descent and provides a new perspective for analyzing optimization in infinite-dimensional spaces, which could be of independent interest. It should be pointed out that although we applied the algorithm to many kernel machines even with non-smooth loss functions, our current proof relies on the Lipschitz smoothness of the loss function. Extending the guarantee to non-smooth loss function will be one interesting future work.

Another key property of our method is its simplicity and ease of implementation which makes it versatile and easy to be extened in various aspects. It is straightforward to replace the sampling strategy for random features with Fastfood~\cite{LeSarSmo13} which enjoys the efficient computational cost, or Quasi-Monte Carlo sampling~\cite{YanSinAvrMah14}, data-dependent sampling~\cite{Bach15} which enjoys faster convergence rate with fewer generated features. Meanwhile, by \emph{back-propogation} trick, we could refine the random features by adapting their weights for better performance~\cite{YanMocDenFreetal14}. 

\section*{Acknowledgement}
M.B. is supported in part by NSF grant CCF-1101283, AFOSR grant FA9550-09-1-0538, a Microsoft Faculty Fellowship, and a Raytheon Faculty Fellowship. L.S. is supported in part by NSF IIS-1116886, NSF/NIH BIGDATA 1R01GM108341, NSF CAREER IIS-1350983, and a Raytheon Faculty Fellowship.

\bibliographystyle{unsrt}

\begin{thebibliography}{10}

\bibitem{SmoSch00}
A.~J. Smola and B.~Sch\"olkopf.
\newblock Sparse greedy matrix approximation for machine learning.
\newblock In {\em ICML}, pages 911--918, San Francisco, 2000. Morgan Kaufmann Publishers.

\bibitem{WilSee00b}
C.~K.~I. Williams and M.~Seeger.
\newblock Using the {Nystrom} method to speed up kernel machines.
\newblock In T.~G. Dietterich, S.~Becker, and Z.~Ghahramani, editors, {\em
  NIPS}, 2000.

\bibitem{FinSch01}
S.~Fine and K.~Scheinberg.
\newblock Efficient {SVM} training using low-rank kernel representations.
\newblock {\em JMLR}, 2:243--264, 2001.

\bibitem{DriMah05}
P.~Drineas and M.~Mahoney.
\newblock On the nystr{\:o}m method for approximating a gram matrix for
  improved kernel-based learning.
\newblock {\em JMLR}, 6:2153--2175, 2005.

\bibitem{CorMohTal10}
Corinna Cortes, Mehryar Mohri, and Ameet Talwalkar.
\newblock On the impact of kernel approximation on learning accuracy.
\newblock In {\em AISTATS}, pages 113--120, 2010.

\bibitem{RahRec08}
A.~Rahimi and B.~Recht.
\newblock Random features for large-scale kernel machines.
\newblock In J.C. Platt, D.~Koller, Y.~Singer, and S.~Roweis, editors, {\em
  NIPS}. MIT Press, Cambridge,
  MA, 2008.

\bibitem{LeSarSmo13}
Q.V. Le, T.~Sarlos, and A.~J. Smola.
\newblock Fastfood --- computing hilbert space expansions in loglinear time.
\newblock In {\em ICML}, 2013.

\bibitem{RahRec09}
Ali Rahimi and Benjamin Recht.
\newblock Weighted sums of random kitchen sinks: Replacing minimization with
  randomization in learning.
\newblock In {\em NIPS}, 2009.

\bibitem{LopSraSmoGhaetal14}
David Lopez-Paz, Suvrit Sra, A.~J. Smola, Zoubin Ghahramani, and Bernhard
  Schölkopf.
\newblock Randomized nonlinear component analysis.
\newblock In {\em ICML}, 2014.

\bibitem{Platt98}
John~C. Platt.
\newblock Sequential minimal optimization: A fast algorithm for training
  support vector machines.
\newblock Technical Report MSR-TR-98-14, Microsoft Research, 1998.

\bibitem{Joachims99}
T.~Joachims.
\newblock Making large-scale {SVM} learning practical.
\newblock In B.~Sch{\"o}lkopf, C.~J.~C. Burges, and A.~J. Smola, editors, {\em
  Advances in Kernel Methods\,---\,Support Vector Learning}, pages 169--184,
  Cambridge, MA, 1999. {MIT} Press.

\bibitem{ShaTon13b}
Shai Shalev-Shwartz and Tong Zhang.
\newblock Stochastic dual coordinate ascent methods for regularized loss.
\newblock {\em JMLR}, 14(1):567--599, 2013.

\bibitem{KivSmoWil04b}
J.~Kivinen, A.~J. Smola, and R.~C. Williamson.
\newblock Online learning with kernels.
\newblock {\em {IEEE} Transactions on Signal Processing}, 52(8), Aug 2004.

\bibitem{KeeDeC05}
S.~S. Keerthi and D.~DeCoste.
\newblock A modified finite {N}ewton method for fast solution of large scale
  linear {SVMs}.
\newblock \emph{J. Mach. Learn. Res.}, 6:\penalty0 341--361, 2005.

\bibitem{RatBag07}
N.~Ratliff and J.~Bagnell.
\newblock Kernel conjugate gradient for fast kernel machines.
\newblock In {\em IJCAI},
  volume~20, January 2007.

\bibitem{NemJudLanSha09}
A.~Nemirovski, A.~Juditsky, G.~Lan, and A.~Shapiro.
\newblock Robust stochastic approximation approach to stochastic programming.
\newblock {\em SIAM J. on Optimization}, 19(4):1574--1609, January 2009.

\bibitem{Devinatz53}
A.~Devinatz.
\newblock Integral representation of pd functions.
\newblock {\em Trans. AMS}, 74(1):56--77, 1953.

\bibitem{HeiBou04}
M.~Hein and O.~Bousquet.
\newblock Kernels, associated structures, and generalizations.
\newblock Technical Report 127, Max Planck Institute for Biological
  Cybernetics, 2004.

\bibitem{Wendland05}
H.~Wendland.
\newblock {\em Scattered Data Approximation}.
\newblock Cambridge University Press, Cambridge, UK, 2005.

\bibitem{SchSmo02}
Bernhard Sch{\"o}lkopf and A.~J. Smola.
\newblock {\em Learning with Kernels}.
\newblock {MIT} Press, Cambridge, MA, 2002.

\bibitem{PhaPag13}
N.~Pham and R.~Pagh.
\newblock Fast and scalable polynomial kernels via explicit feature maps.
\newblock In {\em KDD}. ACM,
  2013.

\bibitem{MulSmoRatSchetal97}
K.-R. {M\"uller}, A.~J. Smola, G.~{R\"atsch}, B.~{Sch\"olkopf}, J.~Kohlmorgen,
  and V.~Vapnik.
\newblock Predicting time series with support vector machines.
\newblock In W.~Gerstner, A.~Germond, M.~Hasler, and J.-D. Nicoud, editors,
  \emph{Artificial Neural Networks {ICANN'97}}, volume 1327 of \emph{Lecture
  Notes in Comput. Sci.}, pages 999--1004, Berlin, 1997. Springer-Verlag.

\bibitem{SchPlaShaSmoetal01}
B.~Sch\"olkopf, J.~Platt, J.~Shawe-Taylor, A.~J. Smola, and R.~C. Williamson.
\newblock Estimating the support of a high-dimensional distribution.
\newblock \emph{Neural Computation}, 13\penalty0 (7):\penalty0 1443--1471,
  2001.

\bibitem{NguWaiJor08}
X.L. Nguyen, M.~Wainwright, and M.~Jordan.
\newblock Estimating divergence functionals and the likelihood ratio by
  penalized convex risk minimization.
\newblock In \emph{Advances in Neural Information Processing Systems 20}, pages
  1089--1096. MIT Press, Cambridge, MA, 2008.

\bibitem{SmoSonTeo09}
Alex~J Smola, Le~Song, and Choon~H Teo.
\newblock Relative novelty detection.
\newblock In \emph{International Conference on Artificial Intelligence and
  Statistics}, pages 536--543, 2009.

\bibitem{ShaSinSre07}
Shai Shalev-Shwartz, Yoram Singer, and Nathan Srebro.
\newblock Pegasos: Primal estimated sub-gradient solver for {SVM}.
\newblock In {\em ICML}, 2007.

\bibitem{LooCanBot07}
G.~Loosli, S.~Canu, and L.~Bottou.
\newblock Training invariant support vector machines with selective sampling.
\newblock In L.~Bottou, O.~Chapelle, D.~DeCoste, and J.~Weston, editors, {\em
  Large Scale Kernel Machines}, pages 301--320. MIT Press, 2007.

\bibitem{Krizhevsky09}
A.~Krizhevsky.
\newblock Learning multiple layers of features from tiny images.
\newblock Technical report, University of Toronto, 2009.

\bibitem{KriSutHin12}
A.~Krizhevsky, I.~Sutskever, and G.~Hinton.
\newblock Imagenet classification with deep convolutional neural networks.
\newblock In {\em NIPS}, 2012.

\bibitem{MonHanFazRupetal12}
Gr{\'e}goire Montavon, Katja Hansen, Siamac Fazli, Matthias Rupp, Franziska
  Biegler, Andreas Ziehe, Alexandre Tkatchenko, Anatole von Lilienfeld, and
  Klaus-Robert M{\"u}ller.
\newblock Learning invariant representations of molecules for atomization
  energy prediction.
\newblock In {\em NIPS}, pages 449--457, 2012.

\bibitem{RakShaSri12}
Alexander Rakhlin, Ohad Shamir, and Karthik Sridharan.
\newblock Making gradient descent optimal for strongly convex stochastic
  optimization.
\newblock In {\em ICML}, pages 449--456, 2012.

\bibitem{LeCBotBenHaf98}
Y.~LeCun, L.~Bottou, Y.~Bengio, and P.~Haffner.
\newblock Gradient-based learning applied to document recognition.
\newblock {\em Proceedings of the IEEE}, 86(11):2278--2324, November 1998.


\bibitem{KarKar12}
Purushottam Kar and Harish Karnick.
\newblock Random feature maps for dot product kernels.
\newblock In Neil~D. Lawrence and Mark~A. Girolami, editors, {\em AISTATS-12},
  volume~22, pages 583--591, 2012.


\bibitem{VedZis12}
Andrea Vedaldi and Andrew Zisserman.
\newblock Efficient additive kernels via explicit feature maps.
\newblock {\em IEEE Trans. Pattern Anal. Mach. Intell.}, 34(3):480--492, 2012.

\bibitem{YanSinFanAvretal14}
Jiyan Yang, Vikas Sindhwani, Quanfu Fan, Haim Avron, and Michael~W. Mahoney.
\newblock Random laplace feature maps for semigroup kernels on histograms.
\newblock In {\em CVPR}, 2014.

\bibitem{YanMocDenFreetal14}
Zichao Yang, Marcin Moczulski, Misha Denil, Nando de~Freitas, Alexander~J.
  Smola, Le~Song, and Ziyu Wang.
\newblock Deep fried convnets.
\newblock \emph{CoRR}, abs/1412.7149, 2014{{d}}.
\newblock URL \url{http://arxiv.org/abs/1412.7149}.

\bibitem{JohZha013}
Rie Johnson and Tong Zhang.
\newblock Accelerating stochastic gradient descent using predictive variance
  reduction.
\newblock In {\em NIPS}, pages 315--323, 2013.


\bibitem{DanLan13}
Cong~D. Dang and Guanghui Lan.
\newblock Stochastic block mirror descent methods for nonsmooth and stochastic
  optimization.
\newblock Technical report, University of Florida, 2013.

\bibitem{Nesterov12}
Yurii Nesterov.
\newblock Efficiency of coordinate descent methods on huge-scale optimization
  problems.
\newblock {\em SIAM Journal on Optimization}, 22(2):341--362, 2012.


\bibitem{CotShaSre13}
Andrew Cotter, Shai Shalev{-}Shwartz, and Nati Srebro.
\newblock Learning optimally sparse support vector machines.
\newblock In {\em Proceedings of the 30th International Conference on Machine
  Learning, {ICML} 2013, Atlanta, GA, USA, 16-21 June 2013}, pages 266--274,
  2013.

\bibitem{AgaKakKarSonVal14}
A.~Agarwal, S.~Kakade, N.~Karampatziakis, L.~Song, and G.~Valiant.
\newblock Least squares revisited: Scalable approaches for multi-class
  prediction.
\newblock In {\em International Conference on Machine Learning (ICML)}, 2014.


\bibitem{YanJinZhu14}
Tianbao Yang, Rong Jin, and Shenghuo Zhu.
\newblock On data preconditioning for regularized loss minimization.
\newblock {\em CoRR}, 2014.

\bibitem{YanSinAvrMah14}
Jiyan Yang, Vikas Sindhwani, Haim Avron, and Michael~W. Mahoney.
\newblock Quasi-monte carlo feature maps for shift-invariant kernels.
\newblock In {\em Proceedings of the 31th International Conference on Machine
  Learning, {ICML} 2014, Beijing, China, 21-26 June 2014}, pages 485--493,
  2014.

\bibitem{PanDuk14}
Gaurav Pandey and Ambedkar Dukkipati.
\newblock Learning by stretching deep networks.
\newblock In {\em Proceedings of the 31th International Conference on Machine
  Learning, {ICML} 2014, Beijing, China, 21-26 June 2014}, pages 485--493,
  2014.

\bibitem{ChoSaul09}
Youngmin Cho and Lawrence~K. Saul.
\newblock Kernel methods for deep learning.
\newblock In Y.~Bengio, D.~Schuurmans, J.D. Lafferty, C.K.I. Williams, and
  A.~Culotta, editors, {\em Advances in Neural Information Processing Systems
  22}, pages 342--350, 2009.

\bibitem{RasWil06}
C.~E. Rasmussen and C.~K.~I. Williams.
\newblock {\em Gaussian Processes for Machine Learning}.
\newblock MIT Press, Cambridge, MA, 2006.


\bibitem{Bach15}
Francis~R. Bach.
\newblock On the equivalence between quadrature rules and random features.
\newblock \emph{CoRR}, abs/1502.06800, 2015.
\newblock URL \url{http://arxiv.org/abs/1502.06800}.



\end{thebibliography}

\clearpage
\newpage

\begin{center}
{\Large Appendix}
\end{center}

\appendix

\section{Convergence Rate}\label{appendix:proof_details}

We first provide specific bounds and detailed proofs for the two error terms appeared in Theorem \ref{thm:expectation} and Theorem \ref{thm:probability}.

\subsection{Error due to random features}\label{appendix:error1}

\begin{lemma}\label{lem:random_feature} We have
\begin{enumerate}[label={(\roman*)}]
\item  For any $x\in \Xcal$,  $\EE_{\Dcal^t,\omegab^t}[|f_{t+1}(x) - h_{t+1}(x)|^2]\leqslant B^2_{1,t+1}:=4M^2(\kappa+\phi)^2\sum_{i=1}^t|a_t^i|^2.$

\item  For any $x\in \Xcal$, with probability at least $1- \delta$ over $(\Dcal^t,\omegab^t)$,
$$
  |f_{t+1}(x) - h_{t+1}(x)|^2 \leqslant
B^2_{2,t+1} := 2M^2(\kappa + \phi)^2\ln\rbr{\frac{2}{\delta}} \sum_{i=1}^t |a^i_t|^2
$$
\end{enumerate}
\end{lemma}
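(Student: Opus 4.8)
The plan is to recognize that, for a \emph{fixed} $t$, the discrepancy $f_{t+1}(x)-h_{t+1}(x)$ is a weighted sum of martingale differences driven by the artificial randomness $\omegab^t$, and then to bound its second moment by orthogonality (part (i)) and its tail by a Hoeffding-type martingale inequality (part (ii)). Starting from the additive representations $f_{t+1}(\cdot)=\sum_{i=1}^t a_t^i\zeta_i(\cdot)$ and $h_{t+1}(\cdot)=\sum_{i=1}^t a_t^i\xi_i(\cdot)$ with $\xi_i=\EE_{\omega_i}[\zeta_i]$, I would fix $x\in\Xcal$ and write
\begin{align}
f_{t+1}(x)-h_{t+1}(x)=\sum_{i=1}^t a_t^i\rbr{\zeta_i(x)-\xi_i(x)}=:\sum_{i=1}^t D_i,
\end{align}
where the weights $a_t^i=-\gamma_i\prod_{j=i+1}^t(1-\gamma_j\nu)$ are deterministic.

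The key step is to verify the martingale structure. I would take the filtration $\mathcal{G}_i:=\sigma(\Dcal^i,\omegab^i)$, which reveals the $i$-th data point and then the $i$-th random feature. Because $f_i$ depends only on the history $(\Dcal^{i-1},\omegab^{i-1})$, the scalar $l'(f_i(x_i),y_i)$ is determined once $(\Dcal^i,\omegab^{i-1})$ is fixed, so the only remaining randomness in $\zeta_i(x)=l'(f_i(x_i),y_i)\phi_{\omega_i}(x_i)\phi_{\omega_i}(x)$ is $\omega_i$. Using $\xi_i=\EE_{\omega_i}[\zeta_i]$ from Section~\ref{sec:doubly_functional_sgd} we obtain $\EE[\zeta_i(x)\mid\Dcal^i,\omegab^{i-1}]=\xi_i(x)$, and hence $\EE[D_i\mid\mathcal{G}_{i-1}]=0$ by the tower property, so $\{D_i\}$ is a martingale difference sequence. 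From Assumptions~C and~D I would also record the uniform increment bound
\begin{align}
|\zeta_i(x)-\xi_i(x)|\leqslant |l'(f_i(x_i),y_i)|\rbr{|\phi_{\omega_i}(x_i)\phi_{\omega_i}(x)|+|k(x_i,x)|}\leqslant M(\kappa+\phi),
\end{align}
so that $|D_i|\leqslant |a_t^i|M(\kappa+\phi)$ and $\EE[|\zeta_i(x)-\xi_i(x)|^2]\leqslant M^2(\kappa+\phi)^2$.

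For part (i), the deterministic weights together with $\EE[D_iD_j]=0$ for $i\ne j$ (the martingale property, via the tower rule) give orthogonality, so
\begin{align}
\EE_{\Dcal^t,\omegab^t}\sbr{|f_{t+1}(x)-h_{t+1}(x)|^2}=\sum_{i=1}^t|a_t^i|^2\,\EE\sbr{|\zeta_i(x)-\xi_i(x)|^2}\leqslant M^2(\kappa+\phi)^2\sum_{i=1}^t|a_t^i|^2,
\end{align}
which is dominated by the claimed $B_{1,t+1}^2$. For part (ii), I would apply the Azuma--Hoeffding inequality to the bounded martingale $\sum_{i=1}^t D_i$ with increment bounds $c_i=|a_t^i|M(\kappa+\phi)$, obtaining $\PP\rbr{|f_{t+1}(x)-h_{t+1}(x)|\geqslant\lambda}\leqslant 2\exp\rbr{-\lambda^2/(2\sum_{i=1}^t c_i^2)}$; setting the right-hand side equal to $\delta$ and solving for $\lambda^2$ yields precisely $B_{2,t+1}^2=2M^2(\kappa+\phi)^2\ln(2/\delta)\sum_{i=1}^t|a_t^i|^2$.

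The main obstacle is the martingale set-up rather than the concentration: one must order the filtration so that within each step the data pair is revealed before the random feature, and handle carefully that $l'(f_i(x_i),y_i)$ is a function of the \emph{entire} past, so that conditioning on $(\Dcal^i,\omegab^{i-1})$ and integrating over $\omega_i$ alone collapses $\zeta_i$ to $\xi_i$. Once the martingale difference property and the uniform increment bound are in place, parts (i) and (ii) reduce to the standard orthogonality identity and the Azuma--Hoeffding bound respectively.
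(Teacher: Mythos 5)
Your proof is correct and takes essentially the same route as the paper: the same martingale-difference decomposition $f_{t+1}(x)-h_{t+1}(x)=\sum_{i=1}^t a_t^i\big(\zeta_i(x)-\xi_i(x)\big)$, the same filtration and conditioning argument (integrating out $\omega_i$ given $(\Dcal^i,\omegab^{i-1})$ collapses $\zeta_i$ to $\xi_i$), the same uniform increment bound $M(\kappa+\phi)|a_t^i|$, and Azuma--Hoeffding for part (ii) with constants matching $B_{2,t+1}^2$ exactly. The only (harmless) deviation is in part (i), where you bound the second moment directly via orthogonality of martingale increments --- which in fact yields $M^2(\kappa+\phi)^2\sum_{i=1}^t|a_t^i|^2$, four times tighter than the stated $B_{1,t+1}^2$ --- whereas the paper reaches $B_{1,t+1}^2$ by integrating the Azuma tail bound.
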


\begin{proof}
Let $V_i(x) =V_i(x;\Dcal^i,\omegab^i):=a_t^i \rbr{\zeta_i(x) - \xi_i(x)}$. Since $V_i(x)$ is a function of $(\Dcal^i,\omegab^i)$ and
$$
\EE_{\Dcal^i,\omegab^{i}}\sbr{V_i(x)| \omegab^{i-1}} = a_t^i \EE_{\Dcal^i,\omegab^{i} }\sbr{\zeta_{i}(x) - \xi_{i}(x) | \omegab^{i-1}} =a_t^i \EE_{\Dcal^i,\omegab^{i-1}}\sbr{\EE_{\omega^{i}}\sbr{\zeta_{i}(x) - \xi_{i}(x) | \omegab^{i-1}}}=0 ,
$$
we have that $\cbr{V_i(x)}$ is a martingal difference sequence. Further note that
$$
  |V_i(x)| \leqslant c_i= 2M(\phi+\kappa) |a_t^i|.
$$
Then by Azuma's Inequality, for any $\epsilon>0$,
$$
\Pr_{\Dcal^t,\omegab^t} \cbr{|\sum_{i=1}^t V_i(x)| \ge \epsilon}\leq2\exp\left\{-\frac{2\epsilon^2}{\sum_{i=1}^tc_i^2}\right\}
$$
which is equivalent as
$$
  \Pr_{\Dcal^t,\omegab^t} \cbr{\rbr{\sum_{i=1}^t V_i(x)}^2 \ge \ln(2/\delta)\sum_{i=1}^tc_i^2/2 }  \leqslant \delta.
$$
 Moreover,
\begin{align*}
  \EE_{\Dcal^t,\omegab^t} \sbr{\rbr{\sum_{i=1}^t V_i(x)}^2 } &=\int_0^\infty\Pr_{\Dcal^t,\omegab^t} \cbr{\rbr{\sum_{i=1}^t V_i(x)}^2 \ge \epsilon} d\epsilon = \int_0^\infty 2\exp\left\{-\frac{2\epsilon}{\sum_{i=1}^tc_i^2}\right\}d\epsilon =\sum_{i=1}^t c_i^2
\end{align*}
Since $f_{t+1}(x) - h_{t+1}(x) = \sum_{i=1}^t V_i(x)$, we immediately obtain the two parts of the lemma.
\end{proof}

\begin{lemma}\label{lem:coeff}
Suppose $\gamma_i = \frac{\theta}{i} (1\le i \leqslant t)$ and $\theta \nu \in (1,2) \cup \ZZ_+$. Then we have
\begin{enumerate}[label={(\arabic*)}]
\item $|a^i_t| \leqslant \frac{\theta}{t}$. Consequently, $\sum_{i=1}^t (a^i_t)^2 \leqslant \frac{\theta^2}{t}.$
\item $\sum_{i=1}^t\gamma_i |a_t^i|\leqslant \left\{
\begin{array}{ll}\frac{\theta^2 (\ln(t)+1)}{t}, &\text{if } \theta\nu\in [1,2),\\
                         \frac{\theta^2}{t}, &\text{if } \theta\nu\in [2,+\infty)\cap \ZZ_+\end{array}\right.$.
\end{enumerate}
\end{lemma}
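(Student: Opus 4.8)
The plan is to exploit the one-step recursion that the coefficients satisfy. Since $a_t^i = -\gamma_i\prod_{j=i+1}^t(1-\gamma_j\nu)$, peeling off the last factor gives $a_t^i = (1-\gamma_t\nu)\,a_{t-1}^i$ for $i<t$ and $a_t^t = -\gamma_t$. With $\gamma_j=\theta/j$ we have $1-\gamma_j\nu = (j-\theta\nu)/j$, so every factor with index $j>\theta\nu$ is positive, and when $\theta\nu$ is an integer the factor at $j=\theta\nu$ vanishes, forcing $a_t^i=0$ for all $i<\theta\nu$ once $t\ge\theta\nu$. I would therefore carry out both parts by induction on $t$, restricting attention to the regime $t\ge\theta\nu$ where the relevant factors are nonnegative; the case $\theta\nu\in(1,2)$ is automatic for every $t\ge 1$, since then $1-\theta\nu/t>0$ already for $t\ge 2$ and the $t=1$ coefficient satisfies $|a_1^1|=\theta$ with equality.

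For part (1), I would prove $|a_t^i|\le\theta/t$ by induction. For the inductive step, using $|a_t^i| = (1-\gamma_t\nu)|a_{t-1}^i| \le \frac{t-\theta\nu}{t}\cdot\frac{\theta}{t-1}$ for $i<t$, together with $|a_t^t|=\theta/t$, the desired bound reduces to the elementary inequality $\frac{t-\theta\nu}{t-1}\le 1$, i.e.\ $\theta\nu\ge 1$, which holds throughout the stated parameter range. The consequence $\sum_{i=1}^t (a_t^i)^2\le\theta^2/t$ then follows at once by bounding each of the $t$ summands by $(\theta/t)^2$.

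For part (2), the natural object is $T_t := \sum_{i=1}^t\gamma_i|a_t^i|$, which inherits the clean scalar recursion $T_t = (1-\gamma_t\nu)T_{t-1}+\gamma_t^2 = \big(1-\frac{\theta\nu}{t}\big)T_{t-1}+\frac{\theta^2}{t^2}$ (valid where $1-\gamma_t\nu\ge 0$). In the subcritical case $\theta\nu\in[1,2)$ I would bypass this recursion and instead use the crude termwise estimate $\gamma_i|a_t^i|\le\frac{\theta}{i}\cdot\frac{\theta}{t}$ supplied by part (1), summing the harmonic series $\sum_{i=1}^t 1/i\le\ln t+1$ to obtain $\theta^2(\ln t+1)/t$. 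In the case $\theta\nu\ge 2$ I would instead run an induction on the recursion: assuming $T_{t-1}\le\theta^2/(t-1)$, the claim $T_t\le\theta^2/t$ reduces, after clearing denominators, to $t(2-\theta\nu)\le 1$, which holds precisely because $\theta\nu\ge 2$; this is exactly the step at which the logarithmic factor is eliminated.

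The main obstacle I anticipate is not any single calculation but correctly delimiting the regime in which the arguments are valid: when $\theta\nu$ is a large integer and $t<\theta\nu$, individual factors $1-\gamma_j\nu$ are negative and the bounds $|a_t^i|\le\theta/t$ and $T_t\le\theta^2/t$ can genuinely fail, so the inductions must be anchored at $t=\theta\nu$, where the vanishing factor zeroes out every coefficient $a_t^i$ with $i<\theta\nu$ and yields a clean base case $|a_{\theta\nu}^{\theta\nu}|=\theta/(\theta\nu)$, rather than at $t=1$. Keeping track of the sign of $1-\gamma_t\nu$ so that the identity $|a_t^i|=(1-\gamma_t\nu)|a_{t-1}^i|$ genuinely holds, and confirming that the two reductions $\theta\nu\ge 1$ and $t(2-\theta\nu)\le 1$ are the only places the parameter assumptions are actually invoked, is the part of the argument that demands the most care.
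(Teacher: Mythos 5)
Your proof is correct, and it reaches the lemma by a genuinely different route than the paper's. For part (1) the paper fixes $t$ and inducts backward on $i$, using the ratio identity $|a_t^i| = \frac{|i+1-\nu\theta|}{i}\,|a_t^{i+1}|$ to show $|a_t^i|\leqslant |a_t^{i+1}|\leqslant \theta/t$ once $\theta\nu\geqslant 1$, whereas you fix $i$ and induct forward on $t$ through the one-step recursion $a_t^i=(1-\gamma_t\nu)\,a_{t-1}^i$; the two reductions are equivalent in content, but your version has the side benefit of producing the clean scalar recursion $T_t=\big(1-\frac{\theta\nu}{t}\big)T_{t-1}+\frac{\theta^2}{t^2}$ that drives your part (2). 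For part (2) the paper estimates each term by telescoping the product explicitly --- $\prod_{j=i+1}^t\frac{j-\theta\nu}{j}\leqslant\prod_{j=i+1}^t\frac{j-1}{j}=\frac{i}{t}$ when $\theta\nu\in[1,2)$, and $\leqslant\prod_{j=i+1}^t\frac{j-2}{j}=\frac{(i-1)i}{(t-1)t}$ when $\theta\nu\geqslant 2$ --- and then sums, while you handle the subcritical case termwise from part (1) plus the harmonic bound (the same estimate, obtained more cheaply) and the integer case by induction on $T_t$, reducing to $t(2-\theta\nu)\leqslant 1$; both computations are valid and yield identical constants. One point where your write-up is actually more careful than the paper's: the paper's assertion that $|a_t^i|=0$ for $i\leqslant\nu\theta-1$ silently assumes $t\geqslant\theta\nu$, since the vanishing factor $1-\nu\gamma_{\theta\nu}=0$ appears in $\prod_{j=i+1}^t$ only when $\theta\nu\leqslant t$; for $t<\theta\nu$ the bound of part (1) can genuinely fail (e.g.\ $\theta\nu=5$, $t=2$ gives $|a_2^1|=\frac{3\theta}{2}>\frac{\theta}{2}$), exactly as you observe, and this is consistent with Theorem~\ref{thm:probability} imposing $t\geqslant\theta\nu$ even though the lemma's statement omits it. Anchoring the inductions at $t=\theta\nu$, where the coefficients with $i<\theta\nu$ are zeroed out, is the right fix and correctly delimits the lemma's range of validity.
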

\begin{proof}
$(1)$  follows by induction on $i$. $|a_t^t| \leqslant \frac \theta t$ is trivially true. We have
\begin{align*}
|a_t^i|
& = |a_t^{i+1} \frac{\gamma_i}{\gamma_{i+1}} (1-\nu \gamma_{i+1})|  = \frac{i+1}{i} |1-\frac{\nu \theta}{i+1}|\cdot |a_t^{i+1}| = |\frac{i+1 - \nu\theta}{i}|\cdot|a_t^{i+1}|.
\end{align*}
When $\nu\theta \in (1,2)$, $i - 1 < i+1 - \nu\theta < i$ for any $i \geqslant 1$, so $|a_t^i|  < |a_t^{i+1}| \leqslant \frac{\theta}{t}$.
When $\nu\theta \in \ZZ_+$, if $i > \nu \theta -1$, then $|a_t^i|  < |a_t^{i+1}| \leqslant \frac{\theta}{t}$; if $i \leqslant \nu \theta -1$, then $|a_t^i| =0$. For $(2)$, when $\theta\nu\in[1,2)$,
$$\sum_{i=1}^t\gamma_t|a_t^i|=\sum_{i=1}^t\frac{\theta^2}{i^2}\cdot\frac{i+1-\theta\nu}{i+1}\cdots\frac{t-\theta\nu}{t}\leqslant \sum_{i=1}^t\frac{\theta^2}{i^2}\cdot\frac{i}{i+1}\cdots\frac{t-1}{t}\leqslant\sum_{i=1}^t\frac{\theta^2}{it}\leqslant\frac{\theta^2(\ln(t)+1)}{t}.$$ When $\theta\nu\in\ZZ_+$ and $2\leq \theta\nu\leq t,$
$$\sum_{i=1}^t\gamma_t|a_t^i|=\sum_{i=2}^t\frac{\theta^2}{i^2}\cdot\frac{i+1-\theta\nu}{i+1}\cdots\frac{t-\theta\nu}{t}\leqslant \sum_{i=1}^t\frac{\theta^2}{i^2}\cdot\frac{i-1}{i+1}\cdots\frac{t-2}{t}\leqslant\sum_{i=2}^t\frac{\theta^2(i-1)}{it(t-1)}\leqslant\frac{\theta^2}{t}.$$
\end{proof}

\subsection{Error due to random data}\label{appendix:error2}
\begin{lemma}\label{lem:random_data}
Assume $l'(u,y)$ is $L$-Lipschitz continous in terms of $u\in\RR$. Let $f_*$ be the optimal solution to our target problem. Then
\begin{enumerate}[label={(\roman*)}]
\item If we set $\gamma_t=\frac{\theta}{t}$ with $\theta$ such that $\theta\nu\in(1,2)\cup\ZZ_+$, then
 $$\EE_{\Dcal^t,\omegab^t} \sbr{\nbr{h_{t+1} - f_\ast}^2_{\Hcal}}\leqslant \frac{Q_1^2}{t},$$
where
$$Q_1= \max\cbr{\nbr{f_\ast}_\Hcal, \frac{ Q_0 + \sqrt{ Q_0^2 + (2\theta\nu -1) (1+ \theta\nu)^2\theta^2  \kappa M^2 } }{2\nu \theta - 1}}, Q_0 =2\sqrt{2} \kappa^{1/2} (\kappa + \phi)LM \theta^2.
$$
Particularly, if $\theta\nu=1$, we have $Q_1\leq \max\cbr{\nbr{f_\ast}_\Hcal, 4\sqrt{2}((\kappa+\phi)L+\nu)\cdot\frac{\kappa^{1/2}M}{\nu^2}}$.

\item If we set $\gamma_t=\frac{\theta}{t}$ with $\theta$ such that $\theta\nu\in\ZZ_+$ and $t\ge \theta\nu$, then with probability at least $1-2\delta$ over $(\Dcal^t,\omegab^t)$,
$$\nbr{h_{t+1} - f_\ast}^2_{\Hcal}\leqslant Q_2^2\frac{\ln(2t/\delta)\ln(t)}{t}.$$
where
$$
Q_2= \max\cbr{\nbr{f_\ast}_\Hcal, Q_0 + \sqrt{ Q_0^2 + \kappa M^2(1+\theta\nu)^2(\theta^2+16\theta/\nu) }},
 Q_0 =4\sqrt{2}\kappa^{1/2}M\theta(8+(\kappa+\phi)\theta L).
$$
Particularly, if $\theta\nu=1$, we have $Q_2\leq  \max\cbr{\nbr{f_\ast}_\Hcal, 8\sqrt{2}((\kappa+\phi)L+9\nu)\cdot\frac{\kappa^{1/2}M}{\nu^2}}$.
\end{enumerate}
\end{lemma}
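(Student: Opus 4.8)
The entire point of introducing $h_{t+1}$ is that, unlike $f_{t+1}$, it lives in $\Hcal$, so $\nbr{h_{t+1}-f_\ast}_\Hcal$ can be attacked by a (modified) stochastic-approximation recursion. Writing $r_t:=h_t-f_\ast$ and using the update $h_{t+1}=(1-\gamma_t\nu)h_t-\gamma_t\xi_t$ together with the first-order optimality condition $\EE_{(x,y)}\sbr{l'(f_\ast(x),y)k(x,\cdot)}+\nu f_\ast=0$, I would first recast the recursion as
\begin{align*}
  r_{t+1} = r_t - \gamma_t \nabla R(h_t) - \gamma_t\rbr{M_t + \Delta_t},
\end{align*}
where $\nabla R(h_t)=\EE_{(x,y)}\sbr{l'(h_t(x),y)k(x,\cdot)}+\nu h_t$ is the true RKHS gradient at $h_t$, the term $M_t:=\xi_t-\EE_{(x,y)}\sbr{l'(f_t(x),y)k(x,\cdot)}$ is a martingale difference (mean zero given the $\sigma$-field generated by $(\Dcal^{t-1},\omegab^{t-1})$), and $\Delta_t:=\EE_{(x,y)}\sbr{(l'(f_t(x),y)-l'(h_t(x),y))k(x,\cdot)}$ is the extra drift created by the fact that the algorithm forms its gradient at the \emph{out-of-RKHS} iterate $f_t$ rather than at $h_t$.

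Expanding $\nbr{r_{t+1}}_\Hcal^2$ and taking conditional expectations, I would bound the three pieces separately. Strong convexity of $R$ (the $\frac{\nu}{2}\nbr{\cdot}_\Hcal^2$ term) with $\nabla R(f_\ast)=0$ gives $\inner{r_t}{\nabla R(h_t)}_\Hcal\ge\nu\nbr{r_t}_\Hcal^2$, producing the contraction factor $(1-2\gamma_t\nu)$. The $M_t$ cross-term vanishes in expectation since $r_t-\gamma_t\nabla R(h_t)$ is measurable with respect to the history. For the drift, Assumption B (Lipschitz $l'$) and Assumption D ($k\le\kappa$) give $\nbr{\Delta_t}_\Hcal\le L\sqrt{\kappa}\,\EE_x|f_t(x)-h_t(x)|$, and Lemma~\ref{lem:random_feature}(i) combined with Lemma~\ref{lem:coeff}(1) yields $\EE|f_t(x)-h_t(x)|^2\le 4M^2(\kappa+\phi)^2\theta^2/(t-1)$, so $\EE\nbr{\Delta_t}_\Hcal^2=O(1/t)$. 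The residual squares $\gamma_t^2\nbr{\nabla R(h_t)}_\Hcal^2$ and $\gamma_t^2\nbr{M_t}_\Hcal^2$ are controlled by $|l'|\le M$, $k\le\kappa$, and $\nbr{h_t}_\Hcal\le M\sqrt{\kappa}\sum_i|a_{t-1}^i|\le M\sqrt{\kappa}\theta$, which is where the $(1+\theta\nu)$ factor enters. Substituting $\gamma_t=\theta/t$ and applying Cauchy--Schwarz to the $\Delta_t$ cross-term, $\EE\sbr{\nbr{r_t-\gamma_t\nabla R(h_t)}_\Hcal\nbr{\Delta_t}_\Hcal}\le\sqrt{e_t}\cdot O(1/\sqrt{t})$, produces exactly the announced recursion
\begin{align*}
  e_{t+1} \le \rbr{1-\frac{2\nu\theta}{t}}e_t + \frac{\beta_1}{t}\sqrt{\frac{e_t}{t}} + \frac{\beta_2}{t^2},
\end{align*}
with $\beta_1=\Theta(Q_0)$ and $\beta_2=\Theta\rbr{(1+\theta\nu)^2\theta^2\kappa M^2}$.

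The final step is to solve this recursion, which I would do by induction showing $e_t\le Q_1^2/t$. The base case $t=1$ is $e_1=\nbr{f_\ast}_\Hcal^2$ (since $h_1=0$), which is precisely why $Q_1$ is taken to be at least $\nbr{f_\ast}_\Hcal$. For the inductive step, inserting $e_t\le Q_1^2/t$ (so $\sqrt{e_t/t}\le Q_1/t$) and demanding the output be $\le Q_1^2/(t+1)$ reduces, after clearing denominators and using $t/(t+1)<1$, to the single quadratic inequality $(2\nu\theta-1)Q_1^2-\beta_1 Q_1-\beta_2\ge 0$. Since $\theta\nu\ge 1$ forces $2\nu\theta-1>0$, this holds exactly when $Q_1\ge\rbr{\beta_1+\sqrt{\beta_1^2+4(2\nu\theta-1)\beta_2}}/(2(2\nu\theta-1))$, matching the stated formula for $Q_1$.

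The main obstacle I expect is twofold. Analytically, the recursion is non-standard because of the square-root term $\sqrt{e_t/t}$, which is the signature of the \emph{second} source of randomness feeding into the drift $\Delta_t$ only through $\sqrt{\EE\nbr{\Delta_t}_\Hcal^2}$; the hypothesis $2\nu\theta>1$ is exactly what lets the contraction dominate this term and preserve the $O(1/t)$ rate instead of degrading to $O(1/\sqrt{t})$, and tracking the constants cleanly enough to land on the stated $Q_0,Q_1$ is delicate bookkeeping. For the high-probability claim (ii) I would not take expectations; instead I would retain the martingale cross-terms and bound their partial sums by an Azuma/Bernstein inequality for $\Hcal$-valued martingales (supplying the $\ln(2t/\delta)$ factor), control the feature-drift terms by the high-probability estimate of Lemma~\ref{lem:random_feature}(ii) together with the step-size sum of Lemma~\ref{lem:coeff}(2) (supplying the extra $\ln(t)$ factor), and finish with a union bound over these events to obtain the probability $1-2\delta$.
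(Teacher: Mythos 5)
For part (i), your plan is essentially the paper's own proof. Your decomposition $g_t=\nabla R(h_t)+M_t+\Delta_t$ is an algebraic rearrangement of the paper's three gradients $g_t,\hat g_t,\bar g_t$: the only difference is that you center the martingale part at $f_t$ and average the drift over $(x,y)$, whereas the paper keeps the drift $\hat g_t-g_t$ at the sampled point $x_t$ and puts the mean-zero part in $\bar g_t-\hat g_t$. Both splits give a vanishing martingale cross term in expectation and a drift cross term bounded by $\kappa^{1/2}L B_{1,t}\sqrt{e_t}$ via Lipschitzness of $l'$, Cauchy--Schwarz, and Lemma~\ref{lem:random_feature}(i) combined with Lemma~\ref{lem:coeff}; your strong-convexity step, the resulting recursion $e_{t+1}\le(1-2\nu\theta/t)e_t+\beta_1t^{-3/2}\sqrt{e_t}+\beta_2t^{-2}$, and your induction with base case $e_1=\nbr{f_*}_\Hcal^2$ and quadratic condition $(2\nu\theta-1)Q_1^2-\beta_1Q_1-\beta_2\ge 0$ are exactly the paper's Lemma~\ref{lem:rec} with $\eta=2\theta\nu$ and the stated $Q_0,Q_1$.

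Part (ii), however, has a genuine gap as sketched, and it lies precisely where you hedge with ``Azuma/Bernstein.'' A bounded-difference (Azuma) bound cannot work: the increments $b_t^i\Ncal_i$ satisfy only $|b_t^i\Ncal_i|\le C\theta/t$ for a constant $C$, so Azuma yields $\sum_i b_t^i\Ncal_i=O\rbr{\theta\sqrt{\ln(1/\delta)/t}}$, which caps $\nbr{h_{t+1}-f_*}_\Hcal^2$ at rate $1/\sqrt{t}$ --- the same rate you would get by crudely bounding $\sqrt{A_i}=O(1)$ in the drift sums $\sum_i|b_t^i|\hat B_{2,i}\sqrt{A_i}$. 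To reach $\tilde O(1/t)$ one must retain the conditional variance, which is proportional to the unknown error itself, $\mathrm{Var}(b_t^i\Ncal_i\mid \Dcal^{i-1},\omegab^{i-1})\le 4\kappa M^2|b_t^i|^2A_i$; this forces the Freedman-type inequality of Lemma 3 in \cite{RakShaSri12} that the paper invokes (note also the sums in question are scalar inner products, so no $\Hcal$-valued martingale machinery is needed --- what is essential is the variance sensitivity). Second, and relatedly, even after intersecting the two concentration events the unrolled inequality still contains $\sqrt{A_i}$ and $\sqrt{\sum_{i\le t}A_i}$ on its right-hand side, so it is self-referential in the entire trajectory and must be closed by a second induction over $t$ --- the paper's Lemma~\ref{lem:rec2} --- which your plan omits entirely. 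It is this induction, through harmonic sums $\sum_{i\le t}1/i$, that generates the extra logarithmic factor (the paper's proof in fact yields $\ln^2(t)$, as in Theorem~\ref{thm:probability}), not the step-size estimate of Lemma~\ref{lem:coeff}(2) to which you attribute it. Your probability accounting ($1-2\delta$ from the martingale event plus a $t$-fold union bound over Lemma~\ref{lem:random_feature}(ii)) does match the paper, but without the variance-sensitive concentration and the closing recursion lemma the argument stalls at $O(1/\sqrt{t})$.
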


\begin{proof}
 For the sake of simple notations, let us first denote the following three different gradient terms, which are
\begin{align*}
g_t&=\xi_t+\nu h_t=l'(f_t(x_t),y_t)k(x_t,\cdot) +\nu h_t,\\
\hat g_t&=\hat\xi_t+\nu h_t=l'(h_t(x_t),y_t)k(x_t,\cdot) +\nu h_t,\\
\bar g_t&=\EE_{\Dcal_t}\sbr{\hat g_t}=\EE_{\Dcal_t}\sbr{l'(h_t(x_t),y_t)k(x_t,\cdot)} +\nu h_t.
\end{align*}
Note that by our previous definition, we have $h_{t+1}=h_t-\gamma_t g_t,\forall t\geq 1$.\\

Denote $A_t  = \nbr{h_t - f_\ast}^2_{\Hcal}$. Then we have
\begin{eqnarray*}
A_{t+1}
&=& \nbr{h_t - f_\ast - \gamma_t g_t}_{\Hcal}^2 \nonumber\\
&=& A_t +\gamma_t^2\nbr{g_t}^2_\Hcal - 2\gamma_t \langle h_t - f_\ast, g_t \rangle_\Hcal \nonumber\\
&=& A_t +\gamma_t^2\nbr{g_t}^2_\Hcal - 2\gamma_t \langle h_t - f_\ast, \bar g_t\rangle_\Hcal+ 2\gamma_t \langle h_t - f_\ast, \bar g_t-\hat g_t\rangle_\Hcal  + 2\gamma_t \langle h_t - f_\ast, \hat g_t -g_t  \rangle_\Hcal
\end{eqnarray*}
Because of the strongly convexity of~(\ref{eq:primal}) and optimality condition, we have
\begin{eqnarray*}
\langle h_t - f_\ast, \bar g_t\rangle_\Hcal  \ge \nu  \nbr{h_t - f_\ast}^2_{\Hcal}
\end{eqnarray*}
Hence, we have
\begin{equation}
\label{eqn:rec}
A_{t+1}\leqslant (1-2\gamma_t\nu)A_t+\gamma_t^2\nbr{g_t}^2_\Hcal + 2\gamma_t \langle h_t - f_\ast, \bar g_t-\hat g_t\rangle_\Hcal  + 2\gamma_t \langle h_t - f_\ast, \hat g_t -g_t  \rangle_\Hcal, \forall t\geq 1
\end{equation}

\textit{Proof for $(i)$}: Let us denote $\Mcal_t=\nbr{g_t}^2_\Hcal $, $\Ncal_t = \langle h_t - f_\ast, \bar g_t-\hat g_t\rangle_\Hcal $, $\Rcal_t=\langle h_t - f_\ast, \hat g_t -g_t  \rangle_\Hcal$. We first show that $\Mcal_t,\Ncal_t,\Rcal_t$ are bounded. Specifically, we have for $t\geq 1$,
\begin{enumerate}[label={(\arabic*)}]
\item $\Mcal_t\leq \kappa M^2(1+\nu c_t)^2$, where $c_t=\sqrt{\sum_{i,j=1}^{t-1}|a_{t-1}^i|\cdot|a_{t-1}^j|}$ for $t\geq 2$ and $c_1=0$;
\item $\EE_{\Dcal^t,\omegab^t}[\Ncal_t]=0$;
\item$\EE_{\Dcal^t,\omegab^t}[\Rcal_t]\leqslant \kappa^{1/2}LB_{1,t}\sqrt{\EE_{\Dcal^{t-1},\omegab^{t-1}}[A_t]}$, where $B^2_{1,t}:=4M^2(\kappa+\phi)^2\sum_{i=1}^{t-1}|a_{t-1}^i|^2$ for $t\geq 2$ and $B_{1,1}=0$;
\end{enumerate}
We prove these results separately in Lemma \ref{lem:bounds} below. Let us denote $e_t=\EE_{\Dcal^{t-1},\omegab^{t-1}}[A_t]$, given the above bounds, we arrive at the following recursion,
\begin{equation}
\label{eqn: rec2}
\begin{array}{c}
e_{t+1}\leqslant (1-2\gamma_t\nu)e_t+\kappa M^2\gamma_t^2(1+\nu c_t)^2+2\kappa^{1/2}L\gamma_t B_{1,t}\sqrt{e_t}.
\end{array}
\end{equation}

When $\gamma_t = \theta/t$ with $\theta$ such that $\theta\nu\in (1,2)\cup\ZZ_+$, from Lemma \ref{lem:coeff}, we have $|a_t^i|\leqslant\frac{\theta}{t},\forall 1\leq i\leq t$. Consequently, $c_t\leq \theta$ and $B^2_{1,t}\leqslant 4M^2(\kappa+\phi)\frac{\theta^2}{t-1}$. Applying these bounds leads to the refined recursion as follows
\begin{eqnarray*}
e_{t+1}\leqslant \rbr{1-  \frac{2\nu\theta}{t}} e_t +\kappa M^2\frac{\theta^2}{t^2}(1+\nu\theta)^2 +2\kappa^{1/2}L\frac{\theta}{t}\sqrt{4M^2(\kappa+\phi)^2\frac{\theta^2}{t-1}}\sqrt{e_t}
\end{eqnarray*}
that can be further written as
\begin{eqnarray*}
e_{t+1}\leqslant \rbr{1-  \frac{2\nu\theta}{t}} e_t  +\frac{\beta_1}{t}\sqrt{\frac{e_t}{t} }+\frac{ \beta_2}{t^2},
\end{eqnarray*}
where  $\beta_1=4\sqrt{2}\kappa^{1/2}LM(k+\phi)\theta^2$ and $\beta_2=\kappa M^2(1+\nu\theta)^2\theta^2$.
Invoking Lemma \ref{lem:rec} with  $\eta=2\theta\nu>1$, we obtain
$$e_t\leq \frac{Q_1^2}{t},$$
where
$Q_1= \max\cbr{\nbr{f_\ast}_\Hcal, \frac{ Q_0 + \sqrt{ Q_0^2 + (2\theta\nu -1) (1+ \theta\nu)^2\theta^2  \kappa M^2 } }{2\nu \theta - 1}}$,
 and  $Q_0 =2 \sqrt{2}\kappa^{1/2} (\kappa + \phi)LM \theta^2 .$

\textit{Proof for $(ii)$}:  Cumulating equations (\ref{eqn:rec}) with $i=1,\ldots t$, we end up with the following inequality
\begin{equation}
\label{eqn:cum}
\begin{array}{l}
A_{t+1}\leqslant \prod_{i=1}^t(1-2\gamma_i\nu)A_1 + 2\sum_{i=1}^t\gamma_i\prod_{j={i+1}}^t(1-2\nu\gamma_j) \langle h_i - f_\ast, \bar g_i-\hat g_i\rangle_\Hcal \\
\qquad\quad + 2\sum_{i=1}^t\gamma_i \prod_{j={i+1}}^t(1-2\nu\gamma_j)\langle h_i - f_\ast, \hat g_i -g_i  \rangle_\Hcal+\sum_{i=1}^t\gamma_i^2\prod_{j={i+1}}^t(1-2\nu\gamma_j)\nbr{g_i}^2_\Hcal
\end{array}
\end{equation}
Let us denote $b_t^i=\gamma_i \prod_{j={i+1}}^t(1-2\nu\gamma_j), 1\leq i\leq t$, the above inequality is equivalent as
\begin{eqnarray*}
A_{t+1}\leqslant \prod_{i=1}^t(1-2\gamma_i\nu)A_1 +\sum_{i=1}^t\gamma_ib_t^i\Mcal_i+2\sum_{i=1}^tb_t^i\Ncal_i+2\sum_{i=1}^tb_t^i\Rcal_i
\end{eqnarray*}
We first show that
\begin{enumerate}[label={(\arabic*)}]
\setcounter{enumi}{3}
\item for any $0<\delta<1/e$ and $t\geq 4$, with probability $1-\delta$ over $(\Dcal^t,\omegab^t) $,
$$\begin{array}{c}
\sum_{i=1}^t b_t^i\Ncal_i\leq 2\max\left\{4\kappa^{1/2}M\sqrt{\sum_{i=1}^t(b_t^i)^2A_i}, \,\,\max_{i}|b_t^i|\cdot C_0\sqrt{\ln(\ln(t)/\delta)}\right\}\sqrt{\ln(\ln(t)/\delta)},
\end{array}$$
 where $C_0=\frac{4\max_{1\leq i\leq t}\Mcal_i}{\nu}$.\\
\item for any $\delta>0$,  with probability $1-\delta$ over $(\Dcal^t,\omegab^t) $,
$$\begin{array}{c}\sum_{i=1}^tb_t^i\Rcal_i\leq \sum_{i=1}^tb_t^i\kappa^{1/2}L\hat B_{2,i}\sqrt{A_i},\end{array}$$
where $\hat B^2_{2,i}= 2M^2(\kappa + \phi)^2\ln\rbr{\frac{2t}{\delta}} \sum_{j=1}^{i-1} |a^j_{i-1}|^2$.
\end{enumerate}
Again, the proofs of these results are given separately in Lemma \ref{lem:bounds}.  Applying the above bounds leads to the refined recursion as follows,
\begin{eqnarray*}
A_{t+1}&\leqslant&\prod_{i=1}^t(1-2\gamma_i\nu)A_1 +\sum_{i=1}^t\gamma_ib_t^i\Mcal_i+2 \sum_{i=1}^tb_t^i\kappa^{1/2}LB_{2,i}\sqrt{A_i}\\
&\qquad& +4\max\left\{4\kappa^{1/2}M\sqrt{\sum_{i=1}^t(b_t^i)^2A_i},\, \max_{i}|b_t^i|\cdot C_0\sqrt{\ln(\ln(t)/\delta)}\right\}\sqrt{\ln(\ln(t)/\delta)}
\end{eqnarray*}
with probability $1-2\delta$.
When $\gamma_t = \theta/t$ with $\theta$ such that $\theta\nu\in\ZZ_+$, with similar reasons in Lemma \ref{lem:coeff}, we have $|b_t^i|\leqslant\frac{ \theta}{t}, 1\leq i\leq t$ and also we have $ \prod_{i=1}^t(1-2\gamma_i\nu) =  \prod_{i=1}^{\theta\nu-1}(1-2\frac{\theta\nu}{i}) \prod_{i=\theta\nu+1}^{t}(1-2\frac{\theta\nu}{i}) (1-2\frac{\theta\nu}{\theta\nu}) =0$, and $\sum_{i=1}^t\gamma_ib_t^i\leq \frac{\theta^2}{t}$. Therefore, we can rewrite the above recursion as
\begin{eqnarray}
A_{t+1}\leqslant \frac{\beta_1}{t}+\beta_2\sqrt{\ln(2t/\delta)}\cdot\sum_{i=1}^t\frac{\sqrt{A_i}}{t\sqrt{i}}+\beta_3\sqrt{\ln(\ln(t)/\delta)}\frac{\sqrt{\sum_{i=1}^tA_i}}{t}+\beta_4 \ln(\ln(t/\delta))\frac{1}{t}
\end{eqnarray}
where $\beta_1=\kappa M^2(1+\nu\theta)^2\theta^2$, $\beta_2=2\sqrt{2}\kappa^{1/2}LM(\kappa+\phi) \theta^2$,  $\beta_3=16\kappa^{1/2}M\theta$, $\beta_4=16\kappa M^2(1+\theta\nu)^2\theta/\nu$. Invoking Lemma \ref{lem:rec2}, we obtain
$$A_{t+1}\leqslant \frac{Q_2^2\ln(2t/\delta)\ln^2(t)}{t},$$
with the specified $Q_2$.
\end{proof}

\begin{lemma}\label{lem:bounds}
In this lemma, we prove the inequalities (1)--(5) in Lemma \ref{lem:random_data}.
\end{lemma}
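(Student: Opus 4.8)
The plan is to treat the five claims in two groups: (1)--(3) are bounds that hold deterministically or in expectation and follow from reproducing-kernel Cauchy--Schwarz together with the uniform bounds of Assumptions~B--D, whereas (4) and (5) are the genuinely probabilistic statements requiring martingale concentration. I would first record the elementary norm estimates $\nbr{\xi_i}_\Hcal=|l'(f_i(x_i),y_i)|\sqrt{k(x_i,x_i)}\le\kappa^{1/2}M$ and, identically, $\nbr{\hat\xi_i}_\Hcal\le\kappa^{1/2}M$, both immediate from Assumptions~C and D. For (1), writing $g_t=\xi_t+\nu h_t$ and using the additive form $h_t=\sum_{i=1}^{t-1}a_{t-1}^i\xi_i$, I would expand $\nbr{h_t}_\Hcal^2=\sum_{i,j}a_{t-1}^i a_{t-1}^j\inner{\xi_i}{\xi_j}_\Hcal\le\kappa M^2\sum_{i,j}|a_{t-1}^i|\,|a_{t-1}^j|=\kappa M^2 c_t^2$, so $\nbr{h_t}_\Hcal\le\kappa^{1/2}M c_t$, and the triangle inequality gives $\Mcal_t=\nbr{g_t}_\Hcal^2\le(\kappa^{1/2}M+\nu\kappa^{1/2}M c_t)^2=\kappa M^2(1+\nu c_t)^2$. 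Claim (2) is the martingale-difference property: since $h_t-f_\ast$ is measurable with respect to $(\Dcal^{t-1},\omegab^{t-1})$ and $\bar g_t=\EE_{\Dcal_t}[\hat g_t]$, the tower property yields $\EE[\Ncal_t\mid\Dcal^{t-1},\omegab^{t-1}]=\inner{h_t-f_\ast}{\EE_{\Dcal_t}[\bar g_t-\hat g_t]}_\Hcal=0$.

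Claims (3) and (5) share a common core, which I would establish once. Cauchy--Schwarz in $\Hcal$ gives $\Rcal_t\le\nbr{h_t-f_\ast}_\Hcal\,\nbr{\hat g_t-g_t}_\Hcal$, and since $\hat g_t-g_t=(l'(h_t(x_t),y_t)-l'(f_t(x_t),y_t))k(x_t,\cdot)$, the $L$-Lipschitz continuity of $l'$ (Assumption~B) together with $\nbr{k(x_t,\cdot)}_\Hcal\le\kappa^{1/2}$ yields $\nbr{\hat g_t-g_t}_\Hcal\le\kappa^{1/2}L\,|h_t(x_t)-f_t(x_t)|$, hence $\Rcal_t\le\kappa^{1/2}L\,|h_t(x_t)-f_t(x_t)|\sqrt{A_t}$. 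For (3) I would take expectations and apply Cauchy--Schwarz in $L^2$ of the sample space to split this into $\kappa^{1/2}L\sqrt{\EE|h_t(x_t)-f_t(x_t)|^2}\,\sqrt{\EE[A_t]}$; conditioning on $x_t$ and invoking Lemma~\ref{lem:random_feature}(i) with the index shifted by one bounds $\EE|h_t(x_t)-f_t(x_t)|^2\le B_{1,t}^2$, which is exactly (3). For (5) I would instead keep the estimate pathwise and replace the in-expectation control of $|h_i(x_i)-f_i(x_i)|$ by the high-probability bound of Lemma~\ref{lem:random_feature}(ii), applied at the conditionally fixed point $x_i$ with confidence $\delta/t$ and union-bounded over $i=1,\dots,t$ so that $|h_i(x_i)-f_i(x_i)|\le\hat B_{2,i}$ holds for all $i$ simultaneously; multiplying by the (nonnegative, for contributing indices) coefficients $b_t^i$ and summing gives (5).

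The main obstacle is (4), the concentration of $\sum_i b_t^i\Ncal_i$. By (2) the terms $\{b_t^i\Ncal_i\}$ form a martingale-difference sequence for the filtration generated by $(\Dcal^i,\omegab^i)$, so a Freedman/Bernstein-type inequality is the natural tool, but two features make it delicate. First, the predictable quadratic variation is data-dependent: writing $\bar g_i-\hat g_i=\EE[\hat\xi_i\mid\cdot]-\hat\xi_i$ with $\nbr{\hat\xi_i}_\Hcal\le\kappa^{1/2}M$, I would bound $\EE[\Ncal_i^2\mid\Dcal^{i-1},\omegab^{i-1}]\le A_i\,\EE[\nbr{\bar g_i-\hat g_i}_\Hcal^2\mid\cdot]\le4\kappa M^2A_i$, so the variation is the random quantity $4\kappa M^2\sum_i(b_t^i)^2A_i$. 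Second, the increments are bounded via strong convexity: since $\bar g_i=\nabla R(h_i)$ and $\nabla R(f_\ast)=0$, strong convexity gives $\sqrt{A_i}=\nbr{h_i-f_\ast}_\Hcal\le\nbr{\bar g_i}_\Hcal/\nu$, whence $|b_t^i\Ncal_i|\le|b_t^i|C_0$ with $C_0$ proportional to $\max_i\Mcal_i/\nu$. Because the variation is itself random, a single application of Freedman does not yield a clean bound; the plan is to stratify the possible values of $\sum_i(b_t^i)^2A_i$ into $O(\ln t)$ geometric buckets and union-bound Freedman's inequality over them, which is precisely what produces the $\max\{\cdot,\cdot\}$ structure and the extra $\ln(\ln(t)/\delta)$ factor. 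Making this peeling argument interface correctly with the data-dependent variance, while keeping the constants compatible with the downstream recursion in Lemma~\ref{lem:random_data}, is the technically hardest part.
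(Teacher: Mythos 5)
Your proposal is correct and follows essentially the same route as the paper's proof: the identical norm expansion of $\nbr{h_t}_\Hcal^2$ and triangle inequality for (1), the same tower-property argument for (2), and the same Cauchy--Schwarz/Lipschitz chain combined with Lemma~\ref{lem:random_feature}(i) in expectation for (3) and with Lemma~\ref{lem:random_feature}(ii) at confidence $\delta/t$ plus a union bound for (5). The only divergence is at (4), where the paper invokes Lemma~3 of~\cite{RakShaSri12} as a black box (after verifying exactly your three conditions: the martingale-difference property, the increment bound via strong convexity with $C_0=4\max_i\Mcal_i/\nu$, and the conditional variance bound $4\kappa M^2|b_t^i|^2A_i$), whereas you propose to re-derive that concentration inequality from scratch via Freedman's inequality with geometric peeling over the random quadratic variation --- which is precisely how the cited lemma is proved, so you are carrying out the same argument with the citation unpacked.
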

\begin{proof}
Given the definitions of $\Mcal_t,\Ncal_t,\Rcal_t$ in Lemma \ref{lem:random_data}, we have
\begin{enumerate}[label={(\arabic*)}]
\item $\Mcal_t\leq \kappa M^2(1+\nu\sqrt{\sum_{i,j=1}^{t-1}|a_{t-1}^i|\cdot|a_{t-1}^j|})^2$;\\
This is because
$$\Mcal_t=\|g_t\|_\Hcal^2=\|\xi_t+\nu h_t\|_\Hcal^2\leqslant (\|\xi_t\|_\Hcal+\nu \|h_t\|_\Hcal)^2.$$
We have $$\|\xi_t\|_\Hcal=\|l'(f_t(x_t),y_t)k(x_t,\cdot)\|_\Hcal\leqslant \kappa^{1/2}M,$$ and

\begin{align*}
  \nbr{h_t }^2_\Hcal & = \sum_{i=1}^{t-1} \sum_{j=1}^{t-1} a_{t-1}^i a_{t-1}^j l'(f_i(x_i),y_i) l'(f_j(x_j),y_j) k(x_i, x_j) \\
  & \leqslant \kappa M^2\sum_{i=1}^{t-1} \sum_{j=1}^{t-1} |a_{t-1}^i|\cdot| a_{t-1}^j|.
\end{align*}

\item $\EE_{\Dcal^t,\omegab^t}[\Ncal_t]=0$;\\
This is because $\Ncal_t = \langle h_t - f_\ast, \bar g_t-\hat g_t\rangle_\Hcal $,
\begin{eqnarray*}
\EE_{\Dcal^t,\omegab^t}[\Ncal_t]&=&\EE_{\Dcal^{t-1},\omegab^t}\sbr{\EE_{D_t}\sbr{\langle h_t - f_\ast, \bar g_t-\hat g_t\rangle_\Hcal|\Dcal^{t-1},\omegab^t}}\\
&=&\EE_{\Dcal^{t-1},\omegab^t}\sbr{\langle h_t - f_\ast, \EE_{D_t}\sbr{\bar g_t-\hat g_t}\rangle_\Hcal}\\
&=&0.
\end{eqnarray*}\\

\item$\EE_{\Dcal^t,\omegab^t}[\Rcal_t]\leqslant \kappa^{1/2}LB_{1,t}\sqrt{\EE_{\Dcal^{t-1},\omegab^{t-1}}[A_t]}$, where $B^2_{1,t}:=4M^2(\kappa+\phi)^2\sum_{i=1}^{t-1}|a_{t-1}^i|^2;$\\
This is because $\Rcal_t=\langle h_t - f_\ast, \hat g_t -g_t  \rangle_\Hcal$,
\begin{eqnarray*}
\EE_{\Dcal^t,\omegab^t}[\Rcal_t]&=&\EE_{\Dcal^t,\omegab^t}\sbr{\langle h_t - f_\ast, \hat g_t -g_t  \rangle_\Hcal}\\
&=& \EE_{\Dcal^t,\omegab^t}\sbr{\langle h_t - f_\ast, [l'(f_t(x_t), y_t) - l'(h_t(x_t), y_t)]k(x_t, \cdot)  \rangle_\Hcal} \\
& \leqslant &\EE_{\Dcal^t,\omegab^t}\sbr{|l'(f_t(x_t), y_t) - l'(h_t(x_t), y_t)|\cdot \nbr{k(x_t, \cdot) }_\Hcal\cdot\nbr{h_t - f_\ast}_\Hcal }\\
&\leqslant&\kappa^{1/2}L\cdot\EE_{\Dcal^t,\omegab^t}\sbr{ |f_t(x_t) - h_t(x_t)| \nbr{h_t - f_\ast}_\Hcal  }\\
&\leqslant& \kappa^{1/2}L\sqrt{\EE_{\Dcal^t,\omegab^t}|f_t(x_t) - h_t(x_t)| ^2}\sqrt{\EE_{\Dcal^t,\omegab^t}\nbr{h_t - f_\ast}_\Hcal^2}\\
& \leqslant & \kappa^{1/2}LB_{1,t}\sqrt{\EE_{\Dcal^{t-1},\omegab^{t-1}}[A_t]}
\end{eqnarray*}
where the first and third inequalities are due to Cauchy--Schwarz Inequality and the second inequality is due to $L$-Lipschitz continuity of $l'(\cdot,\cdot)$ in the first parameter, and the last step  is due to Lemma~\ref{lem:random_feature} and the definition of $A_t$.\\

\item for any $0<\delta<1/e$ and $t\geqslant 4$, with probability at least $1-\delta$ over $(\Dcal^t,\omegab^t) $,
$$\begin{array}{c}
\sum_{i=1}^t b_t^i\Ncal_i\leqslant 2\max\left\{4\kappa^{1/2}M\sqrt{\sum_{i=1}^t(b_t^i)^2A_i}, \,\max_{i}|b_t^i|\cdot C_0\sqrt{\ln(\ln(t)/\delta)}\right\}\sqrt{\ln(\ln(t)/\delta)},
\end{array}$$
 where $C_0=\frac{4\max_{1\leq i\leq t}\Mcal_i}{\nu}$.\\
This result follows directly from Lemma 3 in~\cite{RakShaSri12}.  Let us define $d_i=d_i(\Dcal^i,\omegab^i):=b_t^i\Ncal_i=b_t^i\langle h_i - f_\ast, \bar g_i-\hat g_i\rangle_\Hcal , 1\leq i\leq t$, we have
\begin{itemize}
\item $\{d_i\}_{i=1}^t$ is martingale difference sequence since $\EE_{\Dcal^{i},\omegab^i}\sbr{\Ncal_i|\Dcal^{i-1},\omegab^{i-1}}=0$.
\item $|d_i|\leqslant  \max_i|b_t^i|\cdot C_0$, with $C_0=\frac{4\max_{1\leq i\leq t}\Mcal_i}{\nu}$, $\forall 1\leq i\leq t$.
\item $Var(d_i|\Dcal^{i-1},\omegab^{i-1})\leqslant 4\kappa M^2|b_t^i|^2A_i,\forall 1\leq i\leq t$.
\end{itemize}
Plugging in these specific bounds in Lemma 3 in [Alexander et.al., 2012], which is,
$$\begin{array}{c}
\Pr\left( \sum_{i=1}^t d_t\geq 2\max\{2\sigma_t, d_{max}\sqrt{\ln(1/\delta)}\}\sqrt{\ln(1/\delta)} \right)\leqslant \ln(t)\delta.\end{array}$$
where $\sigma_t^2=\sum_{i=1}^tVar_{i-1}(d_i)$ and $d_{max}=\max_{1\leq i\leq t}|d_i|$,  we immediately obtain the above inequality as desired.\\

\item for any $\delta>0$, with probability at least $1-\delta$ over $(\Dcal^t,\omegab^t) $,
$$\begin{array}{c}\sum_{i=1}^tb_t^i\Rcal_i\leq \sum_{i=1}^t|b_t^i|\kappa^{1/2}L\hat B_{2,i}\sqrt{A_i},\end{array}$$
where $\hat B^2_{2,i}= 2M^2(\kappa + \phi)^2\ln\rbr{\frac{2t}{\delta}} \sum_{j=1}^{i-1} |a^j_{i-1}|^2$.

This is because, for any $1\leq i\leq t$, recall that from analysis in (3), we have
$\Rcal_i\leq \kappa^{1/2}L|f_t(x_t)-h_t(x_t)|\cdot\|h_t-f_*\|_\Hcal$, therefore from Lemma \ref{lem:random_data},
$$\Pr(b_t^i\Rcal_i \leqslant \kappa^{1/2}L|b_t^i|\hat B_{2,i}\sqrt{A_i})\geqslant\Pr(|f_i(x_i)-h_i(x_i)|^2\leq \hat B^2_{2,i})\geqslant 1-\delta/t.$$
Taking the sum over $i$, we therefore get
$$\begin{array}{c}
\Pr(\sum_{i=1}^tb_t^i\Rcal_i\le  \sum_{i=1}^t|b_t^i|\kappa^{1/2}LB_{2,i}\sqrt{A_i})\geqslant 1-\delta.
\end{array}$$
\end{enumerate}
\end{proof}

Applying these lemmas immediately gives us Theorem \ref{thm:expectation} and Theorem \ref{thm:probability}, which implies pointwise distance between the solution $f_{t+1}(\cdot)$ and $f_*(\cdot)$. Now we prove similar bounds in the sense of  $L_\infty$ and  $L_2$ distance.

\section{\texorpdfstring{$L_\infty$}{L\infty} distance, \texorpdfstring{$L_2$}{L2} distance, and generalization bound}\label{appendix:L2}

\begin{corollary}[$L_\infty$ distance]\label{cor:Linf}
Theorem \ref{thm:expectation} also implies a bound in $L_{\infty}$ sense, namely,
 $$\EE_{\Dcal^t,\omegab^t}\nbr{f_{t+1}- f_\ast}_{\infty}^2\leq \frac{2C^2+2\kappa Q_1^2}{t}. $$
 Consequently, for the average solution $\hat f_{t+1}(\cdot):=\frac{1}{t}\sum_{i=1}^{t} f_{i}(\cdot)$, we also have
 $$\EE_{\Dcal^t,\omegab^t}\|\hat f_{t+1} - f_\ast\|_{\infty}^2\leq \frac{(2C^2+2\kappa Q_1^2)(\ln(t)+1)}{t}. $$
\end{corollary}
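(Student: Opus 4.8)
The plan is to reuse the two-term error decomposition that drives Theorem~\ref{thm:expectation} and to exploit the fact that the constant appearing there is \emph{uniform} in $x$, so that the only genuinely new work is to move the supremum over $\Xcal$ inside the expectation for each term. Concretely, I would start from the pointwise inequality
\begin{align*}
\nbr{f_{t+1}-f_\ast}_\infty^2 \leqslant 2\nbr{f_{t+1}-h_{t+1}}_\infty^2 + 2\nbr{h_{t+1}-f_\ast}_\infty^2,
\end{align*}
with $h_{t+1}$ the intermediate RKHS function from the sketch of Theorem~\ref{thm:expectation}, and then bound the expectation of each supremum separately.

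The second (``random data'') term is the easy one and costs nothing. Since $h_{t+1}-f_\ast\in\Hcal$, the reproducing property together with Assumption D gives the domination $|h_{t+1}(x)-f_\ast(x)| = |\inner{h_{t+1}-f_\ast}{k(x,\cdot)}_\Hcal| \leqslant \sqrt{k(x,x)}\,\nbr{h_{t+1}-f_\ast}_\Hcal \leqslant \sqrt{\kappa}\,\nbr{h_{t+1}-f_\ast}_\Hcal$, whose right-hand side no longer depends on $x$. Taking the supremum over $x$ and then the expectation, Lemma~\ref{lem:random_data}(i) yields $\EE\,\nbr{h_{t+1}-f_\ast}_\infty^2\leqslant \kappa\,\EE\nbr{h_{t+1}-f_\ast}_\Hcal^2\leqslant \kappa Q_1^2/t$, which already accounts for the $2\kappa Q_1^2/t$ contribution with no extra factor.

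The first (``random feature'') term is where the real obstacle lies, and where I would concentrate effort. The per-point control of Lemma~\ref{lem:random_feature}(i), namely $\EE|f_{t+1}(x)-h_{t+1}(x)|^2\leqslant 4M^2(\kappa+\phi)^2\sum_i|a_t^i|^2\leqslant C^2/t$ via $\sum_i|a_t^i|^2\leqslant\theta^2/t$ from Lemma~\ref{lem:coeff}, holds with a constant that is manifestly independent of $x$. The feature I would exploit is exactly this $x$-freeness: the martingale $f_{t+1}(x)-h_{t+1}(x)=\sum_i V_i(x)$ has increments $|V_i(x)|\leqslant 2M(\kappa+\phi)|a_t^i|$ bounded uniformly over $x$, so the Azuma estimate certifying $C^2/t$ is a single bound valid simultaneously for all $x$. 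The difficulty is that the crude triangle inequality $\nbr{\sum_i V_i}_\infty\leqslant\sum_i\nbr{V_i}_\infty$ only yields an $O(1)$ estimate, so the cancellation across the sum must be preserved when passing to the supremum. I expect the interchange of $\EE$ and $\sup_x$ to be the crux of the argument: making it fully rigorous without paying an extra complexity term calls for a uniform-in-$x$ martingale concentration, e.g.\ a covering/chaining argument over the random-feature class $\{x\mapsto \phi_\omega(x)\phi_\omega(x')-k(x',x)\}$ controlled under Assumption D; the target is $\EE\nbr{f_{t+1}-h_{t+1}}_\infty^2\leqslant C^2/t$, matching the $2C^2/t$ piece. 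Combining the two terms then gives the first displayed bound.

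Finally, the averaged-iterate bound follows mechanically from the first part. Writing $\hat f_{t+1}-f_\ast=\frac1t\sum_{i=1}^t(f_i-f_\ast)$, convexity of $\nbr{\cdot}_\infty$ and Cauchy--Schwarz give $\nbr{\hat f_{t+1}-f_\ast}_\infty^2\leqslant\frac1t\sum_{i=1}^t\nbr{f_i-f_\ast}_\infty^2$. Taking expectations and inserting the first-part bound for each $f_i=f_{(i-1)+1}$ gives $\EE\nbr{\hat f_{t+1}-f_\ast}_\infty^2\leqslant\frac1t\sum_{i=2}^t\frac{2C^2+2\kappa Q_1^2}{i-1}$, after treating the $i=1$ term (where $f_1=0$, so $\nbr{f_1-f_\ast}_\infty^2=\nbr{f_\ast}_\infty^2$ is a bounded constant) separately. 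The harmonic sum $\sum_{i=2}^t\frac1{i-1}\leqslant\ln(t)+1$ then delivers the stated $(2C^2+2\kappa Q_1^2)(\ln(t)+1)/t$ rate.
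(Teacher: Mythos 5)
Your decomposition and your treatment of the random-data term ($\nbr{h_{t+1}-f_\ast}_\infty \leqslant \sqrt{\kappa}\,\nbr{h_{t+1}-f_\ast}_\Hcal$ via the reproducing property, then Lemma~\ref{lem:random_data}) are exactly right, and your Jensen-plus-harmonic-sum derivation of the averaged bound is the intended argument (modulo trivial bookkeeping of the $i=1$ term, which only perturbs constants). The genuine gap is the random-feature term: you correctly reduce the corollary to showing $\EE_{\Dcal^t,\omegab^t}\nbr{f_{t+1}-h_{t+1}}_\infty^2 \leqslant C^2/t$, but you never prove it --- you name it as a ``target'' and gesture at a covering/chaining argument over $x$. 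That sketch cannot be completed as stated: Assumption D supplies only the uniform bound $|\phi_\omega(x)\phi_\omega(x')|\leqslant \phi$, with no Lipschitz continuity of $\phi_\omega$ in $x$ and no metric-entropy control of $\Xcal$, so chaining has no hypotheses to run on; and even granting smoothness (say Gaussian features), chaining over a $d$-dimensional domain would inflate the bound by dimension- and $\log t$-dependent factors rather than reproduce the clean constant $C^2$ of Theorem~\ref{thm:expectation}. Your proposal is therefore incomplete at precisely its crux.

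For comparison, the paper's own proof is a one-line, far more elementary move that you did not take: since $\Xcal$ is closed and bounded, the supremum is attained, $\nbr{f_{t+1}-f_\ast}_\infty = |f_{t+1}(x_\ast)-f_\ast(x_\ast)|$ for some maximizer $x_\ast\in\Xcal$, and the pointwise bound of Theorem~\ref{thm:expectation} --- whose right-hand side is independent of $x$ --- is invoked at $x_\ast$. To your credit, your hesitation flags a real subtlety there: $x_\ast$ depends on $(\Dcal^t,\omegab^t)$, whereas Theorem~\ref{thm:expectation} is stated for a fixed $x$, and in general an expectation of a supremum does not reduce to a supremum of expectations; the paper passes over this silently, and your instinct that an honest argument needs uniform-in-$x$ control of the martingale $\sum_i V_i(x)$ is sound. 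But as a proof of the corollary as stated, you neither reproduce the paper's (compactness) argument nor complete your own substitute, so the first displayed bound remains unestablished in your write-up; only the second term's $2\kappa Q_1^2/t$ contribution and the averaged-iterate reduction are actually proved.
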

This is because $\nbr{f_{t+1} - f_\ast}_{\infty}=\max_{x\in \Xcal}|f_{t+1}(x) - f_\ast(x)|=|f_{t+1}(x_\ast) - f_\ast(x_\ast)|$, where $x_*\in\Xcal$ always exists since $\Xcal$ is closed and bounded. Note that the result for average solution can be improved without log factor using more sophisticated analysis (see also reference in \cite{RakShaSri12}).

\begin{corollary}[$L_2$ distance]\label{cor:L2}
With the choices of $\gamma_t$ in Lemma \ref{lem:random_data}, we have
\begin{enumerate}[label={(\roman*)}]
\item $\EE_{\Dcal^t,\omegab^t}\|f_{t+1}-f_*\|_2^2\leq \frac{2C^2+2\kappa Q_1^2}{t},$
\item  $\|f_{t+1}-f_*\|_2^2\leq\frac{C^2\ln(8\sqrt{e}t/\delta)+2\kappa Q_2^2\ln(2t/\delta)\ln^2(t)}{t},$
with probability at least $1-3\delta$ over $(\Dcal^t,\omegab^t)$.
\end{enumerate}
\end{corollary}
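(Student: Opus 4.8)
The plan is to derive both bounds from the same two-term decomposition already used for the pointwise statements in Theorem~\ref{thm:expectation} and Theorem~\ref{thm:probability}, and then pass from a single point $x$ to the $L_2(\PP)$ norm by integrating (part (i)) or concentrating (part (ii)) over $x$. Starting from the elementary inequality $|f_{t+1}(x)-f_*(x)|^2\le 2|f_{t+1}(x)-h_{t+1}(x)|^2+2|h_{t+1}(x)-f_*(x)|^2$ and using that $h_{t+1}-f_*\in\Hcal$, so that by the reproducing property and Assumption D ($k(x,x)\le\kappa$) we have $|h_{t+1}(x)-f_*(x)|=|\inner{h_{t+1}-f_*}{k(x,\cdot)}_\Hcal|\le \kappa^{1/2}\nbr{h_{t+1}-f_*}_\Hcal$, I would integrate over $x\sim\PP$ to obtain
$$
\nbr{f_{t+1}-f_*}_2^2\le 2\nbr{f_{t+1}-h_{t+1}}_2^2+2\kappa\nbr{h_{t+1}-f_*}_\Hcal^2.
$$
Thus the $L_2$ statement again splits into an error due to random features and an error due to random data, exactly mirroring the pointwise analysis.

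For part (i) I would take $\EE_{\Dcal^t,\omegab^t}$ of this inequality and exchange it with $\EE_x$ by Fubini. The feature term is immediate: Lemma~\ref{lem:random_feature}(i) bounds $\EE_{\Dcal^t,\omegab^t}|f_{t+1}(x)-h_{t+1}(x)|^2$ by $B_{1,t+1}^2$ \emph{uniformly} in $x$, and Lemma~\ref{lem:coeff}(1) gives $B_{1,t+1}^2\le C^2/t$, so the average over $x$ is still at most $C^2/t$. The data term is controlled by Lemma~\ref{lem:random_data}(i), giving $\EE_{\Dcal^t,\omegab^t}\nbr{h_{t+1}-f_*}_\Hcal^2\le Q_1^2/t$. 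Combining the two with their factors yields $(2C^2+2\kappa Q_1^2)/t$, which is part (i).

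For part (ii) the data term transfers directly: Lemma~\ref{lem:random_data}(ii) gives $\nbr{h_{t+1}-f_*}_\Hcal^2\le Q_2^2\ln(2t/\delta)\ln^2(t)/t$ with probability $1-2\delta$, producing the $2\kappa Q_2^2\ln(2t/\delta)\ln^2(t)/t$ term. The crux is a high-probability bound on the $L_2$ feature error $\nbr{f_{t+1}-h_{t+1}}_2^2$. Here I would exploit that $f_{t+1}-h_{t+1}=\sum_{i=1}^t V_i$ with $V_i(\cdot)=a_t^i(\zeta_i(\cdot)-\xi_i(\cdot))$ is, for the filtration used in Lemma~\ref{lem:random_feature}, a martingale-difference sequence taking values in the Hilbert space $L_2(\PP)$, with $\nbr{V_i}_2\le \sup_x|V_i(x)|\le 2M(\kappa+\phi)|a_t^i|$ and hence $\sum_i\nbr{V_i}_2^2\le C^2/t$ by Lemma~\ref{lem:coeff}(1). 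A vector-valued Azuma/Pinelis inequality in $L_2(\PP)$ then gives $\Pr(\nbr{f_{t+1}-h_{t+1}}_2\ge r)\le 2\exp(-r^2 t/(2C^2))$; equivalently, one can pass the Chernoff exponential through $\EE_x$ by Jensen's inequality and use the $\chi^2$-type moment generating function of the pointwise sub-Gaussian deviation established via Azuma in Lemma~\ref{lem:random_feature}. Either route yields $\nbr{f_{t+1}-h_{t+1}}_2^2\le C^2\ln(8\sqrt{e}t/\delta)/t$ on an event of probability $1-\delta$, and a union bound with the data event completes the claim.

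I expect the main obstacle to be this last concentration step: upgrading the pointwise high-probability bound of Lemma~\ref{lem:random_feature}(ii) to one on the integrated quantity $\nbr{f_{t+1}-h_{t+1}}_2^2$ while retaining the $O(\ln(1/\delta))$ dependence, rather than the $O(1/\delta)$ that a naive Markov bound from part (i) would give. Pinning down the precise logarithmic constant $\ln(8\sqrt{e}t/\delta)$ is delicate bookkeeping: the extra factor $t$ inside the logarithm parallels the union-over-iterations argument already used to produce $\ln(2t/\delta)$ in Lemma~\ref{lem:bounds}(5), and arises once the deviation is controlled uniformly enough in $x$ to integrate the tail. Everything else is routine once the decomposition above is in place.
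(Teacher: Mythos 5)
Your proposal is correct in substance, and part (i) coincides with the paper's proof (integrate the pointwise expectation bound of Theorem~\ref{thm:expectation} over $x$ via Fubini). For part (ii), however, your key step is genuinely different from the paper's. The paper never invokes any vector-valued concentration: it takes the scalar pointwise tail bound of Lemma~\ref{lem:random_feature}(ii) with failure probability $\epsilon$, integrates over $x\sim\PP(x)$, and uses Fubini plus Markov to conclude that, with probability $1-\delta$ over $(\Dcal^t,\omegab^t)$, the $\PP$-measure of ``bad'' points $x$ is at most $\epsilon/\delta$; on the bad set it falls back on the deterministic envelope $|f_{t+1}(x)-h_{t+1}(x)|^2\le C^2$ (which follows from $\sum_i|a_t^i|\le\theta$), and then optimizes $\epsilon=\delta/(4t)$. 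This is precisely where the factor $t$ inside $\ln(8\sqrt{e}t/\delta)$ comes from --- the trade-off between bad-set mass and the sup bound, not a union bound over iterations as you conjectured (that mechanism is what produces $\ln(2t/\delta)$ in the \emph{data} term). Your route --- treating $f_{t+1}-h_{t+1}=\sum_i V_i$ as a bounded martingale-difference sequence in the Hilbert space $L_2(\PP)$ and applying a Pinelis-type inequality, with $\sum_i\nbr{V_i}_2^2\le C^2/t$ --- is valid ($L_2(\PP)$ is $2$-smooth and the conditional Bochner expectations vanish pointwise, hence in $L_2$), and it actually buys something: a feature-error bound of order $C^2\ln(2/\delta)/t$ with \emph{no} $\ln t$ factor, strictly improving the paper's $C^2\ln(8\sqrt{e}t/\delta)/(2t)$ for large $t$. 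The one caveat is bookkeeping: Pinelis gives $\nbr{f_{t+1}-h_{t+1}}_2^2\le 2C^2\ln(2/\delta)/t$, so after the factor $2$ in the decomposition you get $4C^2\ln(2/\delta)/t$, which is neither uniformly below nor above the stated $C^2\ln(8\sqrt{e}t/\delta)/t$; your claim that ``either route yields'' the paper's exact constant is therefore not literally right, and to reproduce the corollary verbatim you would follow the paper's more elementary Fubini--Markov argument, whereas your argument proves a variant of the corollary (and hence of Theorem~\ref{thm:risk}) with a slightly cleaner logarithmic dependence.
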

\begin{proof}\textit{(i)} follows directly from Theorem \ref{thm:expectation}. \textit{(ii)} can be proved as follows.
First, we have
$$\|f_{t+1}-f_*\|_2^2=\EE_{x}|f_{t+1}(x)-f_*(x)|^2\leq 2\EE_x|f_{t+1}(x)-h_{t+1}(x)|^2+2\kappa \|h_{t+1}-f_*\|_{\Hcal}.$$
 From Lemma \ref{lem:random_data}, with probability at least $1-2\delta$, we have
\begin{equation}\label{eqn:term2}
\|h_{t+1}-f_*\|_\Hcal^2\leq \frac{Q_2^2\ln(2t/\delta)\ln^2(t)}{t}.
\end{equation}
From Lemma \ref{lem:random_feature}, for any $x\in\Xcal$, we have
$$\Pr_{\Dcal^t,\omegab^t}\cbr{|f_{t+1}(x)-h_{t+1}(x)|^2\geq \frac{2(\kappa+\phi)^2M^2\ln(\frac{2}{\epsilon})\theta^2}{t}}\leqslant \epsilon.$$
Since $C^2=4(\kappa+\phi)^2 M^2\theta^2$, the above inequality can be writen as
$$\Pr_{\Dcal^t,\omegab^t}\cbr{|f_{t+1}(x)-h_{t+1}(x)|^2\geq \frac{C^2\ln(\frac{2}{\epsilon})}{2t}}\leqslant \epsilon.$$
which  leads to
$$\Pr_{x\sim \PP(x)}\Pr_{\Dcal^t,\omegab^t}\cbr{|f_{t+1}(x)-h_{t+1}(x)|^2\geq \frac{C^2\ln(\frac{2}{\epsilon})}{2t}}\leqslant \epsilon.$$
By Fubini's theorem and Markov's inequality, we have
$$\Pr_{\Dcal^t,\omegab^t}\cbr{\Pr_{x\sim \PP(x)}\cbr{|f_{t+1}(x)-h_{t+1}(x)|^2\geq \frac{C^2\ln(\frac{2}{\epsilon})}{2t}}\geqslant \frac{\epsilon}{\delta}}\leqslant \delta.$$
From the analysis in Lemma \ref{lem:random_feature}, we also have that $|f_{t+1}(x)-h_{t+1}(x)|\leqslant  C^2$. Therefore, with probability at least $1-\delta$ over $(\Dcal^t,\omegab^t)$, we have
$$\EE_{x\sim \PP(x)}[|f_{t+1}(x)-h_{t+1}(x)|^2]\leqslant \frac{C^2\ln(\frac{2}{\epsilon})}{2t}(1-\frac{\epsilon}{\delta}) +C^2\frac{\epsilon}{\delta}$$
Let $\epsilon =\frac{\delta}{4t}$, we have
\begin{equation}\label{eqn:term1}
\EE_{x\sim \PP(x)}[|f_{t+1}(x)-h_{t+1}(x)|^2]\leqslant \frac{C^2}{2t}(\ln(8t/\delta)+\frac{1}{2})=\frac{C^2\ln(8\sqrt{e}t/\delta)}{2t}.
\end{equation}
Summing up equation (\ref{eqn:term1}) and (\ref{eqn:term2}), we have
$$\|f_{t+1}-f_*\|_2^2\leq\frac{C^2\ln(8\sqrt{e}t/\delta)+2\kappa Q_2^2\ln(2t/\delta)\ln^2(t)}{t}$$
as desired.
\end{proof}

From the bound on $L_2$ distance, we can immediately get the generalization bound.\\
\noindent\textbf{Theorem}~\ref{thm:risk}~\textbf{(Generalization bound)}
{\it
Let the true risk be $R_{true}(f)=\EE_{(x,y)}\sbr{l(f(x),y)}$. Then with probability at least $1-3\delta$ over $(\Dcal^t,\omegab^t)$, and $C$ and $Q_2$ defined as previously
$$R_{true}(f_{t+1})-R_{true}(f_*)\leqslant \frac{(C\sqrt{\ln(8\sqrt{e}t/\delta)}+\sqrt{2\kappa}Q_2\sqrt{\ln(2t/\delta)}\ln(t))L}{\sqrt{t}}.$$
}
\begin{proof}
By the Lipschitz continuity of $l(\cdot,y)$ and Jensen's Inequality, we have
$$R_{true}(f_{t+1})-R_{true}(f_*)\leqslant L\EE_{x}|f_{t+1}(x)-f_*(x)|\leqslant L\sqrt{\EE_{x}|f_{t+1}(x)-f_*(x)|^2} = L\|f_{t+1}-f_*\|_2.$$
Then the theorem follows from Corollary~\ref{cor:L2}.
\end{proof}

\section{Suboptimality}\label{appendix:suboptimality}

For comprehensive purposes, we also provide the $O(1/t)$ bound for suboptimality.
\begin{corollary}\label{cor:suboptimality}
If we set $\gamma_t=\frac{\theta}{t}$ with $\theta\nu =1$, then the average solution $\hat f_{t+1}:=\frac{1}{t}\sum_{i=1}^{t}f_{i}$ satisfies
$$R(\EE_{\Dcal^t,\omegab^t}[\hat f_{t+1}])-R(f_*)\leqslant \frac{Q(\ln(t)+1)}{t}.$$
where $Q=(4\kappa M^2+2\sqrt{2}\kappa^{1/2}LM(\kappa+\phi)Q_1)/\nu$, with $Q_1$ defined as in Lemma \ref{lem:random_data}.
\end{corollary}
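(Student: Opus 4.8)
The plan is to control the running average of the per-iteration suboptimality gaps $\EE[R(h_i)]-R(f_*)$ for the intermediate RKHS sequence $h_i$, and then transfer the bound to the doubly stochastic average $\hat f_{t+1}$ by convexity. First I would note that unbiasedness of the random features gives $\EE[f_i]=\EE[h_i]$ for every $i$, since each $\zeta_j$ has conditional expectation $\xi_j$ over $\omega_j$ and the coefficients $a_{i-1}^j$ are deterministic; hence $\EE_{\Dcal^t,\omegab^t}[\hat f_{t+1}]=\frac{1}{t}\sum_{i=1}^t\EE[h_i]$. Two applications of Jensen's inequality (across the average over $i$, and across the randomness inside each $\EE[h_i]$) together with convexity of $R$ then yield $R(\EE[\hat f_{t+1}])-R(f_*)\leqslant \frac{1}{t}\sum_{i=1}^t\rbr{\EE[R(h_i)]-R(f_*)}$, so it suffices to bound the averaged gap for the genuinely-in-RKHS iterates $h_i$.

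For each fixed $t$ I would start not from the recursion (\ref{eqn:rec}) (which has already absorbed strong convexity) but from the exact one-step expansion of $A_{t+1}=\nbr{h_t-f_*}_\Hcal^2$ preceding it, and solve it for the inner product against the full gradient $\bar g_t=\nabla R(h_t)$:
$$\inner{h_t - f_*}{\bar g_t}_\Hcal = \frac{A_t - A_{t+1}}{2\gamma_t} + \frac{\gamma_t}{2}\Mcal_t + \Ncal_t + \Rcal_t.$$
By $\nu$-strong convexity of $R$ one has $R(h_t)-R(f_*)\leqslant \inner{h_t-f_*}{\bar g_t}_\Hcal - \frac{\nu}{2}A_t$. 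Taking expectations and invoking Lemma~\ref{lem:bounds} controls the three stochastic terms: $\EE[\Ncal_t]=0$ (martingale term), $\EE[\Mcal_t]\leqslant \kappa M^2(1+\nu\theta)^2=4\kappa M^2$ (using $c_t\leqslant\theta$ and $\theta\nu=1$), and $\EE[\Rcal_t]\leqslant \kappa^{1/2}LB_{1,t}\sqrt{e_t}$ with $e_t=\EE[A_t]$. This gives the per-step bound $\EE[R(h_t)]-R(f_*)\leqslant \frac{e_t-e_{t+1}}{2\gamma_t}-\frac{\nu}{2}e_t+2\kappa M^2\gamma_t+\kappa^{1/2}LB_{1,t}\sqrt{e_t}$.

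Next I would sum over $t=1,\ldots,T$. The crucial point is that with $\theta\nu=1$ we have $\frac{1}{2\gamma_t}=\frac{t\nu}{2}$, so the first two terms telescope:
$$\sum_{t=1}^T\rbr{\frac{e_t-e_{t+1}}{2\gamma_t}-\frac{\nu}{2}e_t}=\frac{\nu}{2}\sum_{t=1}^T\rbr{(t-1)e_t-te_{t+1}}=-\frac{\nu}{2}Te_{T+1}\leqslant 0.$$
The remaining two sums use the explicit estimates already established: by Lemma~\ref{lem:coeff}, $\sum_t 2\kappa M^2\gamma_t$ is of order $\frac{\kappa M^2}{\nu}(\ln T+1)$; and using $B_{1,t}\leqslant 2M(\kappa+\phi)\theta/\sqrt{t-1}$ together with $\sqrt{e_t}\leqslant Q_1/\sqrt{t}$ from Lemma~\ref{lem:random_data}(i), the cross-term sum $\sum_t\kappa^{1/2}LB_{1,t}\sqrt{e_t}$ is of order $\frac{\kappa^{1/2}LM(\kappa+\phi)Q_1}{\nu}(\ln T+1)$, the harmonic behavior coming from $\sum_t 1/\sqrt{(t-1)t}\leqslant \ln T+1$. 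Dividing by $T$ and collecting the two resulting constants into $Q=(4\kappa M^2+2\sqrt{2}\kappa^{1/2}LM(\kappa+\phi)Q_1)/\nu$ yields the claimed $O((\ln t)/t)$ bound.

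The main obstacle is the exact bookkeeping that makes the telescoping collapse to a nonpositive term: it depends essentially on the tuning $\theta\nu=1$, which aligns $1/(2\gamma_t)$ with the strong-convexity coefficient so that the partial sums form $(t-1)e_t-te_{t+1}$, and on already having the $O(1/t)$ bound $e_t\leqslant Q_1^2/t$ to tame the cross term $\EE[\Rcal_t]$, which is the only place the random-feature approximation (through $B_{1,t}$ and $\kappa+\phi$) enters. Everything else is routine summation of harmonic and $t^{-3/2}$-type series, producing only the logarithmic factor and additive constants; in particular the random-feature variance again enters purely additively, through $Q_1$ and the factor $(\kappa+\phi)$.
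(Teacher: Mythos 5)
Your proposal is correct and follows essentially the same route as the paper's proof: the same one-step identity for $\inner{h_t-f_*}{\bar g_t}_\Hcal$ in terms of $A_t$, $\Mcal_t$, $\Ncal_t$, $\Rcal_t$, strong convexity, the bounds from Lemma~\ref{lem:bounds} together with $e_t\leqslant Q_1^2/t$, summation over $t$, and the final transfer via $\EE_{\Dcal^t,\omegab^t}[\hat f_{t+1}]=\EE_{\Dcal^t,\omegab^t}[\hat h_{t+1}]$ and Jensen's inequality. If anything, you are slightly more careful than the paper: you make explicit the telescoping $\sum_t\rbr{(t-1)e_t-te_{t+1}}=-te_{t+1}\leqslant 0$ that the paper's ``cumulating the above inequalities'' leaves implicit, and you keep the exact coefficient $\frac{\gamma_t}{2}\Mcal_t$ rather than the paper's looser $\gamma_t\Mcal_t$.
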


\begin{proof}
From the anallysis in Lemma \ref{lem:random_data},we have
$$\langle h_t-f_*,\bar g_t\rangle_\Hcal=\frac{1}{2\gamma_t}A_t-\frac{1}{2\gamma_t}A_{t+1}+\gamma_t \Mcal_t+\Ncal_t+\Rcal_t$$
Invoking strongly convexity of $R(f)$, we have $\langle h_t-f_*,\bar g_t\rangle\geq R(h_t)-R(f_*)+\frac{\nu}{2}\|h_t-f_*\|_\Hcal^2$. Taking expectaion on both size and use the bounds in last lemma, we have
$$\EE_{\Dcal^t,\omegab^t}[R(h_t)-R(f_*)]\leqslant(\frac{1}{2\gamma_t}-\frac{\nu}{2})e_t-\frac{1}{2\gamma_t}e_{t+1}+\gamma_t \kappa M^2(1+\nu c_t)^2+\kappa^{1/2}LB_{1,t}\sqrt{e_t}$$
Assume $\gamma_t=\frac{\theta}{t}$ with $\theta=\frac{1}{\nu}$, then cumulating the above inequalities  leads to
$$\sum_{i=1}^{t} \EE_{\Dcal^t,\omegab^t}[R(h_i)-R(f_*)]\leqslant \sum_{i=1}^{t}\gamma_i \kappa M^2(1+\nu c_i)^2+\sum_{i=1}^{t}\kappa^{1/2}LB_{1,i}\sqrt{e_i}$$
which can be further bounded by
\begin{eqnarray*}
\sum_{i=1}^{t} \EE_{\Dcal^t,\omegab^t}[R(h_i)-R(f_*)]&\leqslant& \sum_{i=1}^{t}\gamma_i \kappa M^2(1+\nu c_i)^2+\sum_{i=1}^{t}\kappa^{1/2}LB_{1,i}\sqrt{e_i}\\
&\leqslant& \frac{4\kappa M^2}{\nu}\sum_{i=1}^{t}\frac{1}{ i}+\frac{2\sqrt{2}\kappa^{1/2}LM(\kappa+\phi)}{\nu}\sum_{i=1}^{t}\sqrt{\frac{e_i}{i}}\\
&\leqslant& \frac{4\kappa M^2}{\nu}(\ln(t)+1)+\frac{2\sqrt{2}\kappa^{1/2}LM(\kappa+\phi)}{\nu} Q_1(\ln(t)+1)\\
&=&\frac{Q(\ln(t)+1)}{t}
\end{eqnarray*}
By convexity, we have $\EE_{\Dcal^t,\omegab^t}[R(\hat h_{t+1})-R(f_*)]\leqslant \frac{Q(\ln(t)+1)}{t}$.
The corollary then follows from the fact that $\EE_{\Dcal^t,\omegab^t}[\hat f_{t+1}]=\EE_{\Dcal^t,\omegab^t}[\hat h_{t+1}]$ and $R(\EE_{\Dcal^t,\omegab^t}[\hat h_{t+1}])\leqslant \EE_{\Dcal^t,\omegab^t}[R(\hat h_{t+1})]$.
\end{proof}

\subsection{Technical lemma for recursion bounds}
\begin{lemma}\label{lem:rec}
Suppose the sequence $\{\Gamma_t\}_{t=1}^{\infty}$ satisfies  $\Gamma_1\geq 0$, and $\forall t\geq 1$
$$
  \Gamma_{t+1}\leqslant \left (1-\frac{\eta}{t}\right)\Gamma_t+ \frac{\beta_1}{t \sqrt{t}}\sqrt{\Gamma_t} +  \frac{\beta_2}{t^2},
$$
where $\eta>1,\beta_1,\beta_2>0$. Then $\forall t \geqslant 1$,
$$
     \Gamma_t \leqslant \frac{R}{t},\text{~where~} R=\max\cbr{\Gamma_1, R_0^2}, R_0 = \frac{\beta_1  + \sqrt{\beta_1^2 + 4 (\eta-1) \beta_2 } }{2 (\eta -1)}.
$$
\end{lemma}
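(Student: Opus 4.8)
The plan is to establish $\Gamma_t \leq R/t$ for all $t$ by induction on $t$, exploiting that $R=\max\{\Gamma_1,R_0^2\}$ was chosen precisely to make both the base case and the inductive step close. The base case $t=1$ is immediate, since $R\geq \Gamma_1$. For the inductive step I would assume $\Gamma_t\leq R/t$ (the $\Gamma_t$ are nonnegative throughout, being squared norms, so $0\leq \Gamma_t\leq R/t$) and feed this into the recursion. Writing the right-hand side as $g(x)=(1-\tfrac{\eta}{t})x+\tfrac{\beta_1}{t\sqrt t}\sqrt{x}+\tfrac{\beta_2}{t^2}$, the function $g$ is concave and, as soon as $1-\eta/t\geq 0$, nondecreasing on $[0,R/t]$; hence $\Gamma_{t+1}\leq g(\Gamma_t)\leq g(R/t)$, and it suffices to verify the scalar inequality $g(R/t)\leq R/(t+1)$.

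The verification is a short computation. Substituting $x=R/t$ gives
\[
 g(R/t)=\frac{R}{t}-\frac{\eta R-\beta_1\sqrt{R}-\beta_2}{t^2},
\]
so the desired inequality $g(R/t)\leq R/(t+1)$ is equivalent to $\frac{R}{t(t+1)}\leq \frac{\eta R-\beta_1\sqrt R-\beta_2}{t^2}$. Since $\frac{R}{t(t+1)}\leq \frac{R}{t^2}$, it is enough to show $R\leq \eta R-\beta_1\sqrt R-\beta_2$, i.e.\ $(\eta-1)R-\beta_1\sqrt R-\beta_2\geq 0$. Viewing this as a quadratic in $\sqrt R$ with positive leading coefficient $\eta-1>0$, its largest root is exactly $R_0=\frac{\beta_1+\sqrt{\beta_1^2+4(\eta-1)\beta_2}}{2(\eta-1)}$, so the inequality holds precisely when $\sqrt R\geq R_0$, which is guaranteed by $R\geq R_0^2$. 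This closes the induction.

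The one point that needs care --- and the main obstacle --- is the sublinear $\sqrt{\Gamma_t}$ term together with the fact that the coefficient $1-\eta/t$ is negative for the finitely many $t<\eta$, where $g$ fails to be monotone on $[0,R/t]$ and the step $g(\Gamma_t)\leq g(R/t)$ is not justified directly. For those $t$ I would instead bound $\Gamma_{t+1}\leq \max_{x\in[0,R/t]}g(x)$; since $g$ is concave this maximum is attained either at the endpoint $x=R/t$ (handled exactly as above) or at the interior stationary point $x^\star=\beta_1^2/(4(\eta/t-1)^2 t^3)$, and in the latter case one checks that the resulting value $\tfrac{1}{t^2}\big(\tfrac{\beta_1^2}{4(\eta-t)}+\beta_2\big)$ still lies below $R/(t+1)$ using $R\geq R_0^2$. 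Everything else is routine algebra once the quadratic threshold $R_0$ has been identified as the driver of the bound.
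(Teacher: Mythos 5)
Your main line---induction on $t$, the base case from $R\geq \Gamma_1$, monotonicity of $g(x)=(1-\eta/t)x+\tfrac{\beta_1}{t\sqrt t}\sqrt{x}+\tfrac{\beta_2}{t^2}$ once $t\geq \eta$, and the endpoint computation reducing everything to $(\eta-1)R-\beta_1\sqrt{R}-\beta_2\geq 0$, i.e.\ $\sqrt{R}\geq R_0$---is exactly the paper's proof, and you deserve credit for isolating the one step the paper glosses over: for $t<\eta$ the coefficient $1-\eta/t$ is negative, so the paper's substitution of $\Gamma_t=R/t$ (the ``$=$'' sign in its displayed chain) is not a valid upper bound there. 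However, your patch for those $t$ contains a genuine gap: the claim that the interior maximum $g(x^\star)=\tfrac{1}{t^2}\bigl(\tfrac{\beta_1^2}{4(\eta-t)}+\beta_2\bigr)$ lies below $R/(t+1)$ ``using $R\geq R_0^2$'' is not merely left unverified---it is false. Take $t=1$, $\eta=3$, $\beta_2=1$, and $\beta_1$ arbitrarily small: then $g(x^\star)\approx \beta_2=1$, while $R_0^2\to \beta_2/(\eta-1)=1/2$, so with $\Gamma_1=0$ you would need $1\leq R/2=1/4$. Worse, no estimate can close this hole, because the lemma as stated fails in this regime: the sequence $\Gamma_1=0$, $\Gamma_2=0.4$, $\Gamma_t=0$ for $t\geq 3$ satisfies the hypothesis at every $t$ (at $t=1$ the recursion only forces $\Gamma_2\leq \beta_2=1$; at $t=2$ the right-hand side is $-\tfrac12(0.4)+\tfrac14+o(1)>0$), yet it violates the claimed conclusion $\Gamma_2\leq R/2\approx 0.25$.

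The repair, which your concavity analysis essentially dictates, is to enlarge the constant so it dominates the finitely many initial terms rather than to seek a sharper estimate: for $t<\eta$ one has $(1-\eta/t)\Gamma_t\leq 0$, hence $\Gamma_{t+1}\leq \tfrac{\beta_1}{t\sqrt t}\sqrt{\Gamma_t}+\tfrac{\beta_2}{t^2}$, so the first $\lceil \eta\rceil$ terms are bounded explicitly in terms of $\Gamma_1,\beta_1,\beta_2$; setting $R=\max\bigl\{R_0^2,\ \max_{1\leq j\leq \lceil \eta\rceil} j\,\Gamma_j\bigr\}$ (equivalently, asserting the bound only for $t\geq \eta$ and absorbing earlier terms) your induction for $t\geq \eta$ then goes through verbatim, since that is precisely the range where $g$ is nondecreasing on $[0,R/t]$. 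Note the paper applies the lemma with $\eta=2\theta\nu>2$, so this correction is genuinely needed there as well; it only inflates the constant $Q_1$ and leaves the $O(1/t)$ rates of the main theorems intact. In short: you reproduced the paper's argument on the main line and correctly diagnosed its hidden weak point, but the final inequality of your patch is a step that would fail, and the honest resolution is a larger $R$ (or a restricted range of $t$), not a cleverer bound on the interior maximum.
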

\begin{proof}
The proof follows by induction.
When $t=1$, it always holds true by the definition of $R$.  Assume the conclusion holds true for $t$ with $t\geq 1$, i.e., $\Gamma_t \leqslant \frac{R}{t}$, then we have
\begin{align*}
\Gamma_{t+1}
& \leqslant \left (1-\frac{\eta}{t}\right)\Gamma_t+  \frac{\beta_1}{t \sqrt{t}}\sqrt{\Gamma_t} + \frac{ \beta_2}{t^2}\\
& = \frac{R}{t} - \frac{ \eta R - \beta_1  \sqrt{R} - \beta_2  }{t^2} \leqslant \frac{R}{t+1} + \frac{R}{t(t+1)}  - \frac{ \eta R - \beta_1  \sqrt{R} - \beta_2 }{t^2} \\
& \leqslant \frac{R}{t+1} - \frac{1}{t^2} \sbr{ - R + \eta R - \beta_1  \sqrt{R} - \beta_2  }\leqslant \frac{R}{t+1}
\end{align*}
where the last step can be verified as follows.
\begin{align*}
(\eta-1) R - \beta_1  \sqrt{R} - \beta_2
& = (\eta-1) \sbr{ \sqrt{R} - \frac{\beta_1 }{2 (\eta-1) } }^2 - \frac{\beta_1^2 }{4(\eta-1)} - \beta_2  \\
& \geqslant (\eta-1) \sbr{ R_0  - \frac{\beta_1 }{2(\eta-1)} }^2 - \frac{\beta_1^2 }{4(\eta-1)} - \beta_2  \geqslant 0
\end{align*}
where the last step follows from the defintion of $R_0$.
\end{proof}

\begin{lemma}\label{lem:rec2}
Suppose the sequence $\{\Gamma_t\}_{t=1}^{\infty}$ satisfies
\begin{eqnarray*}
\Gamma_{t+1}\leqslant \frac{\beta_1}{t}+\beta_2\sqrt{\ln(2t/\delta)}\cdot\sum_{i=1}^t\frac{\sqrt{\Gamma_i}}{t\sqrt{i}}+\beta_3\sqrt{\ln(\ln(t)/\delta)}\frac{\sqrt{\sum_{i=1}^t\Gamma_i}}{t}+\beta_4 \ln(\ln(t/\delta))\frac{1}{t}
\end{eqnarray*}
where $\beta_1,\beta_2,\beta_3,\beta_4>0$ and $\delta\in(0,1/e)$. Then $\forall 1\leq j\leq t (t\geq 4)$,
$$\Gamma_{j}\leqslant \frac{R\ln(2t/\delta)\ln^2(t)}{j},\text{ where }  R=\max\{\Gamma_1,R_0^2\}, R_0=2\beta_2+2\sqrt{2}\beta_3+\sqrt{(2\beta_2+2\sqrt{2}\beta_3)^2+\beta_1+\beta_4}.$$
\end{lemma}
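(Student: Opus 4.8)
The plan is to prove the bound by induction on $j$ (with $t\ge 4$ held fixed), mirroring the strategy of Lemma~\ref{lem:rec} but accounting for the fact that the recursion now couples $\Gamma_{j+1}$ to the \emph{entire} history $\Gamma_1,\dots,\Gamma_j$ rather than to a single previous term. Throughout I abbreviate $D:=\ln(2t/\delta)\ln^2(t)$, the $j$-independent factor appearing in the target, and I first record the elementary fact that $D\ge 1$ whenever $t\ge 4$ and $\delta<1/e$ (indeed $\ln^2 t\ge \ln^2 4>1$ and $\ln(2t/\delta)>\ln(8e)>1$). The claim to establish is then $\Gamma_j\le RD/j$ for every $1\le j\le t$.

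For the base case $j=1$, since $R=\max\{\Gamma_1,R_0^2\}\ge \Gamma_1$ and $D\ge 1$, we get $\Gamma_1\le R\le RD$, as required. For the inductive step I assume $\Gamma_i\le RD/i$ for all $i\le j$ and substitute this into the recursion written at index $j$ (which produces $\Gamma_{j+1}$). The two sums are controlled by the harmonic estimate $\sum_{i=1}^j 1/i\le 1+\ln j\le 2\ln t$: plugging in $\sqrt{\Gamma_i}\le \sqrt{RD}/\sqrt i$ gives $\sum_{i=1}^j \sqrt{\Gamma_i}/\sqrt i\le \sqrt{RD}\,(1+\ln j)$, and $\sum_{i=1}^j \Gamma_i\le RD\,(1+\ln j)$. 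Simultaneously, every $j$-dependent logarithm is replaced by its value at $t$ using $j\le t$ and monotonicity, together with the crude bounds $\ln(\ln t/\delta)\le \ln(2t/\delta)$ and $\ln(\ln(t/\delta))\le \ln(2t/\delta)$, both of which follow from $\ln x\le x$.

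The key point of these estimates is that all four terms collapse to the common shape $(\text{const})\cdot D/j$: using the identity $\sqrt{RD}=\sqrt{R}\,\sqrt{\ln(2t/\delta)}\,\ln t$, the $\beta_2$ and $\beta_3$ terms each become a multiple of $\sqrt{R}\,D/j$, while the $\beta_1$ and $\beta_4$ terms are bounded by $(\beta_1+\beta_4)D/j$ (here I use $D\ge 1$ and $\ln(\ln(t/\delta))\le D$). Collecting, one obtains an inequality of the form
\[
\Gamma_{j+1}\le \frac{D}{j}\Big[(\beta_1+\beta_4)+(2\beta_2+2\sqrt{2}\,\beta_3)\sqrt{R}\Big],
\]
the coefficient of $\sqrt{R}$ being exactly the combination appearing inside $R_0$. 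To finish I convert the prefactor $1/j$ into $1/(j+1)$ at the cost of a bounded constant and reduce the desired conclusion $\Gamma_{j+1}\le RD/(j+1)$ to a scalar quadratic inequality in $\sqrt{R}$; the quantity $R_0$ in the statement is precisely (a lower bound for) the larger root of that quadratic, so $R\ge R_0^2$ forces the inequality and the induction closes.

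The main obstacle is the bookkeeping in the preceding paragraph: one must check that the harmonic factor $\ln j$, the prefactor logarithms $\sqrt{\ln(2t/\delta)}$ and $\sqrt{\ln(\ln t/\delta)}$, and the $\sqrt{RD}$ supplied by the inductive hypothesis multiply together to reproduce \emph{exactly} the factor $\ln(2t/\delta)\ln^2(t)$ in $D$ and nothing larger, so that no extra logarithm is leaked at each step and the induction is self-consistent. The constants defining $R_0$ are then reverse-engineered to render the resulting quadratic nonnegative. A secondary technical nuisance is that the terms $\ln(\ln j/\delta)$ are only meaningful for $j$ not too small (the recursion itself, via inequality (4) of Lemma~\ref{lem:bounds}, requires the index to be at least $4$); the few smallest indices must be treated as additional base cases or absorbed into $R$, but they do not affect the $O(\ln(2t/\delta)\ln^2(t)/j)$ rate.
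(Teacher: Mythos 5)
Your overall route is the paper's own: the same induction on $j$ with $t$ held fixed, the same harmonic estimate $\sum_{i=1}^j 1/i \leqslant 1+\ln j\leqslant 2\ln t$, the same replacement of every $j$-dependent logarithm by $\ln(2t/\delta)$, and the same reduction to a quadratic inequality in $\sqrt{R}$. However, the final step as you describe it does \emph{not} close with the $R_0$ fixed in the statement, and this is a genuine gap rather than cosmetic slop. You bound the $\beta_1$ and $\beta_4$ terms by $(\beta_1+\beta_4)D/j$ using only $D\geqslant 1$ (where $D:=\ln(2t/\delta)\ln^2(t)$), so your bracket is $(\beta_1+\beta_4)+a\sqrt{R}$ with $a=2\beta_2+2\sqrt{2}\beta_3$. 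After the conversion $1/j\leqslant 2/(j+1)$ — whose factor $2$ is unavoidable, being tight at $j=1$ — closing the induction requires $R-2a\sqrt{R}-2(\beta_1+\beta_4)\geqslant 0$, whose positive root is $a+\sqrt{a^2+2(\beta_1+\beta_4)}$, strictly larger than $R_0=a+\sqrt{a^2+\beta_1+\beta_4}$ whenever $\beta_1+\beta_4>0$ (which the hypotheses guarantee). So with $R=R_0^2$ your final inequality is simply false, and the induction breaks at exactly the step you wave through. Your justification also points the wrong way: you say $R_0$ is ``(a lower bound for) the larger root of that quadratic,'' but a lower bound on the larger root is useless — to conclude $q(\sqrt{R})\geqslant 0$ from $\sqrt{R}\geqslant R_0$ you need $R_0$ to be \emph{at least} the larger root.

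The repair is precisely the extra half-step the paper takes before forming the quadratic: since $D\geqslant 2$ and (essentially) $\ln^2 t\geqslant 2$ on the range considered, one can absorb the additive terms more tightly, $\beta_1/j\leqslant (\beta_1/2)\,D/j$ and $\beta_4\ln(2t/\delta)/j\leqslant(\beta_4/2)\,D/j$, so the bracket becomes $(\beta_1+\beta_4)/2+(2\beta_2+\sqrt{2}\beta_3)\sqrt{R}$; note the paper's sharper coefficient $\sqrt{2}\beta_3$ (your $2\sqrt{2}\beta_3$ is admissible too, since $R_0$ is built from $2\beta_2+2\sqrt{2}\beta_3$). The induction then requires $2a\sqrt{R}+(\beta_1+\beta_4)\leqslant R$, whose threshold is exactly $R_0$, and everything closes. (The paper itself is marginally loose at $t=4$, where $\ln^2 4\approx 1.92<2$, so you would not be worse off than the original on that score.) Your closing remark about the smallest indices — that $\ln(\ln j/\delta)$ is degenerate at $j=1,2$ and these must be treated as extra base cases — flags a real issue the paper silently glosses over, and your proposed handling of it is fine.
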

\begin{proof}
The proof follows by induction. When $j=1$ it is trivial. Let us assume it holds true for $1\leq j\leq t-1$, therefore,
\begin{eqnarray*}
\Gamma_{j+1}&\leqslant &\frac{\beta_1}{j}+\beta_2\sqrt{\ln(2j/\delta)}\cdot\sum_{i=1}^j\frac{\sqrt{\Gamma_i}}{j\sqrt{i}}+\beta_3\sqrt{\ln(\ln(j)/\delta)}\frac{\sqrt{\sum_{i=1}^j\Gamma_i}}{j}+\beta_4 \ln(\ln(j/\delta))\frac{1}{j}\\
&\leqslant& \frac{\beta_1}{j}+\beta_2\sqrt{\ln(2j/\delta)}/j\cdot\sum_{i=1}^j\frac{\sqrt{R\ln(2t/\delta)\ln^2(t)}}{i}\\
&&\qquad+\beta_3\sqrt{\ln(\ln(j)/\delta)}\frac{\sqrt{\sum_{i=1}^j R\ln(2t/\delta)\ln^2(t)/i}}{j}+\beta_4 \ln(\ln(j/\delta))\frac{1}{j}
\end{eqnarray*}
\begin{eqnarray*}
&\leqslant&  \frac{\beta_1}{j}+\beta_2\sqrt{\ln(2j/\delta)}/j \sqrt{R\ln(2t/\delta)\ln^2(t)}(1+\ln(j))\\
&&\qquad +\beta_3\sqrt{\ln(\ln(j)/\delta)}/j \sqrt{R\ln(2t/\delta)\ln^2(t)}\sqrt{\ln(j)+1}
+\beta_4\ln(\ln(j/\delta))\frac{1}{j}\\
&\leqslant& \frac{\beta_1}{j}+2\beta_2\sqrt{R}\ln(2t/\delta)\ln^2(t)/j+\sqrt{2}\beta_3\sqrt{R}\ln(2t/\delta)\ln^2(t)/j+\beta_4\ln(2t/\delta)\frac{1}{j}\\
&\leqslant& (2\beta_2+\sqrt{2}\beta_3)\sqrt{R}\frac{\ln(2t/\delta)\ln^2(t)}{j}+(\beta_1+\beta_4\ln(2t/\delta))\frac{1}{j}\\
&\leqslant &\frac{\ln(2t/\delta)\ln^2(t)}{j}[(2\beta_2+\sqrt{2}\beta_3)\sqrt{R}+\frac{\beta_1}{2}+\frac{\beta_4}{2})
\end{eqnarray*}
Since $\sqrt{R}\geqslant 2\beta_2+2\sqrt{2}\beta_3+\sqrt{(2\beta_2+2\sqrt{2}\beta_3)^2+\beta_1+\beta_4}$, we have
$(2\beta_2+2\sqrt{2}\beta_3)\sqrt{R}+\frac{\beta_1}{2}+\frac{\beta_4}{2}\leqslant R/2$. Hence, $\Gamma_{j+1}\leqslant  \frac{R\ln(2t/\delta)\ln^2(t)}{j+1}$.
\end{proof}

\section{Doubly Stochastic Gradient Algorithm for Posterior Variance Operator in Gaussian Process Regression}
\label{appendix:gp_update_rule}

As we show in Section~\ref{sec:doubly_sgd}, the estimation of the variance of the predictive distribution of Gaussian process for regression problem could be recast as estimating the operator $\Acal$ defined in~(\ref{eq:variance_operator}). We first demonstrate that the operator $\Acal$ is the solution to the following optimization problem
\begin{eqnarray*}
\min_{\Acal} R(\Acal) = \frac{1}{2n}\sum_{i=1}^n \|k(x_i, \cdot) - \Acal k(x_i, \cdot)\|_{\Hcal}^2 + \frac{\sigma^2}{2n}\|\Acal\|_{HS}^2
\end{eqnarray*}
where $\|\cdot\|_{HS}$ is the Hilbert-Schmidt norm of the operator. The gradient of $R(\Acal)$ with respect to $\Acal$ is
$$
\nabla R(\Acal) = \frac{1}{n}\sum_{i=1}^n\bigg((\Acal k(x, \cdot) - k(x, \cdot))\otimes k(x, \cdot)\bigg) + \frac{\sigma^2}{n}\Acal = \Acal\bigg(\Ccal + \frac{\sigma^2}{n}I\bigg) - \Ccal
$$
Set $\nabla R(\Acal) = 0$, we could obtain the optimal solution, $\Ccal\big(\Ccal + \frac{\sigma^2}{n} I\big)^{-1}$, exactly the same as (\ref{eq:variance_operator}).

To derive the doubly stochastic gradient update for $\Acal$, we start with stochastic functional gradient of $R(\Acal)$. Given $x_i\sim \PP(x)$, the stochastic functional gradient of $R(\Acal)$ is
$$
\psi(\cdot, \cdot) = \Acal\bigg(\widehat \Ccal + \frac{\sigma^2}{n}I\bigg) - \widehat \Ccal
$$
where $\widehat \Ccal = k(x_i, \cdot)\otimes k(x_i, \cdot)$ which leads to update 
\begin{eqnarray}\label{eq:gp_posterior_variance_update}
\Acal_{t+1} = \Acal_{t} - \gamma_t\psi = \bigg(1 - \frac{\sigma^2}{n}\gamma_t\bigg)\Acal_t - \gamma_t\bigg(\Acal_t\widehat\Ccal_t - \widehat\Ccal_t\bigg).
\end{eqnarray}

With such update rule, we could show that $\Acal_{t+1} = \sum_{i=1,j\ge i}^t\beta_{ij}^{t+1} k(x_i, \cdot)\otimes k(x_j, \cdot)$ by induction. Let $\Acal_1 = 0$, then, $\Acal_2 = \gamma_1 k(x_1, \cdot)\otimes k(x_1, \cdot)$. Assume at $t$-th iteration, $\Acal_t = \sum_{i=1,j\ge i}^{t-1}\beta_{ij}^{t} k(x_i, \cdot)\otimes k(x_j, \cdot)$, and notice that 
$$\Acal_t\widehat\Ccal_t = \Acal_t^\top(\cdot, x_t) \otimes k(x_t, \cdot)= \sum_{i=1}^{t-1}\bigg(\sum_{j\ge i}^{t-1} \beta_{ij}^tk(x_j, x_t)\bigg)k(x_i, \cdot)\otimes k(x_t, \cdot),
$$ 
we have $\Acal_{t+1} = \sum_{i=1,j\ge i}^{t}\beta_{ij}^{t+1} k(x_i, \cdot)\otimes k(x_j, \cdot)$ where
\begin{eqnarray*}
\beta_{ij}^{t+1} &=& \bigg(1 - \frac{\sigma^2}{n}\gamma_t\bigg)\beta_{ij}^{t}, \quad  \forall i\le j< t\\
\beta_{it}^{t+1} &=& -\gamma_t \sum_{j=1}^t\beta_{ij}^tk(x_j, x_t),\quad \forall i<t\\
\beta_{tt}^{t+1} &=& \gamma_t
\end{eqnarray*}

Recall
\begin{eqnarray*}
\widehat \Ccal_t = \EE_{\omega}[\phi_\omega(x_t)\phi_\omega(\cdot)]\otimes \EE_{\omega'}[\phi_{\omega'}(x_t)\phi_{\omega'}(\cdot)] = \EE_{\omega, \omega'}[\phi_\omega(x_t)\phi_{\omega'}(x_t)\phi_\omega(\cdot)\otimes \phi_{\omega'}(\cdot)],
\end{eqnarray*}
where $\omega, \omega'$ are independently sampled from $\PP(\omega)$, we could approximate the $\widehat \Ccal_t$ with random features, $\widehat\Ccal^{\omega, \omega'}_{t} = \phi_{\omega_t}(x_t)\phi_{\omega'_t}(x_t)\phi_{\omega_t}(\cdot)\otimes \phi_{\omega'_t}(\cdot)$. Plug random feature approximation into~(\ref{eq:gp_posterior_variance_update}) leads to 
\begin{eqnarray*}
\widehat \Acal_{t+1} = \bigg(1 - \frac{\sigma^2}{n}\gamma_t\bigg)\widehat \Acal_t - \gamma_t\bigg(\widehat\Acal_t^\top(\cdot, x_t)\otimes \phi_{\omega'_t}(x_t)\phi_{\omega'_t}(\cdot) - \widehat\Ccal^{\omega, \omega'}_{t}\bigg).
\end{eqnarray*}
Therefore, inductively, we could approximate $\Acal_{t+1}$ by 
$$
\widehat \Acal_{t+1} = \sum_{i\le j}^{t}\theta_{ij}^t\phi_{\omega_i}(\cdot)\otimes\phi_{\omega'_j}(\cdot)
$$
\begin{eqnarray*}
\theta_{ij} &=& \bigg(1 - \frac{\sigma^2}{n}\gamma_t\bigg)\theta_{ij},\, \forall i\le j<t\\
\theta_{it} &=& -\gamma_t\sum_{j\ge i}^{t-1}\theta_{ij}\phi_{\omega_j'}(x_t)\phi_{\omega_t'}(x_t),\, \forall i<t\\
\theta_{tt} &=& \gamma_t\phi_{\omega_t}(x_t)\phi_{\omega'_t}(x_t).
\end{eqnarray*}

\end{document}